\newtheorem{theorem}{Theorem}[section]
\newtheorem{axiom}{Axiom}
\crefname{axiom}{Axiom}{Axioms}
\Crefname{axiom}{Axiom}{Axioms}
\newcounter{lemmanum}
\newtheorem{lemma}{Lemma}[lemmanum]
\newtheorem{proposition}{Proposition}[section]
\newtheorem{Definition}{Definition}[section]
\DeclareMathOperator*{\argmin}{arg\,min}
\newcommand\indicator{\mathbb{I}}
\newcommand\basis{\mathbb{1}}
\newcommand\expect{\mathbb{E}}
\newcommand\bigO{{O}}
\newcommand\Data{\mathcal{D}}
\newcommand\prob{\mathbb{P}}
\newcommand{\relCand}{nRel}
\newcommand\EORanking{\sigma^{EOR}}
\newcommand\PRPRanking{\sigma^{PRP}}
\newcommand\PRPRankingGroup[1]{\sigma^{PRP,#1}}
\newcommand\DPRanking{\sigma^{DP}}
\newcommand\TSRanking{\sigma^{TS}}
\newcommand\EXPRanking{\sigma^{EXP}}
\newcommand\FourFifthRanking{\sigma^{\text{FourFifth}}}
\newcommand\UniformRanking{\sigma^{\text{unif}}}
\newcommand\PRP{\pi^{PRP}}
\newcommand\Uniform{\pi^{\text{unif}}}
\newcommand\EOR{\pi^{EOR}}
\newcommand\TS{\pi^{TS}}
\newcommand\DP{\pi^{DP}}
\newcommand\PRR{\pi^{PRR}}
\newcommand\EXP{\pi^{EXP}}
\newcommand\RA{\pi^{RA}}
\newcommand\FS{\pi^{FS}}
\newcommand\sigmaKPi{\sigma_k^\pi}
\newcommand{\real}{\mathbb{R}}
\begin{document}
\title{Fairness in Ranking under Disparate Uncertainty}

\author{Richa Rastogi}
\affiliation{
  \institution{Cornell University}
  \city{Ithaca}
  \state{NY}
  \country{USA}
}
\email{rr568@cornell.edu}

\author{Thorsten Joachims}
\affiliation{
  \institution{Cornell University}
  \city{Ithaca}
  \state{NY}
  \country{USA}
}
\email{tj@cs.cornell.edu}

\renewcommand{\shortauthors}{Rastogi and Joachims.}

\begin{abstract}
  Ranking is a ubiquitous method for focusing the attention of human evaluators on a manageable subset of options. Its use as part of human decision-making processes ranges from surfacing potentially relevant products on an e-commerce site to prioritizing college applications for human review. While ranking can make human evaluation more effective by focusing attention on the most promising options, we argue that it can introduce unfairness if the uncertainty of the underlying relevance model differs between groups of options. Unfortunately, such disparity in uncertainty appears widespread, often to the detriment of minority groups for which relevance estimates can have higher uncertainty due to a lack of data or appropriate features. To address this fairness issue, we propose Equal-Opportunity Ranking (EOR) as a new fairness criterion for ranking and show that it corresponds to a group-wise fair lottery among the relevant options even in the presence of disparate uncertainty. 
  EOR optimizes for an even cost burden on all groups, unlike the conventional \emph{Probability Ranking Principle}, and is fundamentally different from existing notions of fairness in rankings, such as \emph{demographic parity} and \emph{proportional Rooney rule} constraints that are motivated by proportional representation relative to group size. 
  To make EOR ranking practical, we present an efficient algorithm for computing it in time $O(n \log(n))$ and prove its close approximation guarantee to the globally optimal solution. 
  In a comprehensive empirical evaluation on synthetic data, a US Census dataset, and a real-world audit of Amazon search queries, we find that the algorithm reliably guarantees EOR fairness while providing effective rankings.

\end{abstract}

\begin{CCSXML}
<ccs2012>
<concept>
<concept_id>10002951.10003317.10003347.10003350</concept_id>
<concept_desc>Information systems~Rankings</concept_desc>
<concept_significance>500</concept_significance>
</concept>
</ccs2012>
\end{CCSXML}

\ccsdesc[500]{Information systems~Rankings; Top-k retrieval; Recommender systems; Decision support systems}

\keywords{ranking, fairness, disparate uncertainty, cost of opportunity}

\maketitle

\section{Introduction}
\label{sec:Intro}

Human decision-processes are increasingly augmented with algorithmic decision-support systems, which has created opportunities and challenges for addressing group-based disparities in decision outcomes. In this paper, we focus on selection processes where humans evaluators use rankings to organize the order of review under resource constraints. We argue that {\em disparities in uncertainty} can be a major source of group-based discrimination in this setting.

To illustrate the problem, consider the following example of college admissions at a highly selective institution. In this situation, there are far more qualified candidates than available spots. Under a fixed reviewing budget, the college could give all applications a brief review (but risk high error rates in human decision making), or use a ranking to focus reviewing efforts on the more promising applications. The latter is likely to decrease error rates in human review, but it risks that this prioritization unfairly favors some groups over others. For example, consider 12,000 applicants competing for 500 slots. In this example, 10,000 applicants are from a majority group with plenty of available data, and the model can quite accurately predict which students will be admitted by the human reviewers. In particular, it accurately assigns a probability of 0.9 to 1000 of the students, and 0.01 to the remaining 9,000. The remaining 2000 applicants are from a minority group, where the model is less informed about individual students and thus assigns 0.1 to everybody. When naively ranking students by this probability, the students with 0.9 from the majority group would be ranked ahead of all the students from the minority group - and the class will fill up with the expected 900 ($1000 \times 0.9$) qualified majority students before the admission staff even gets to any of the minority students. This is clearly unfair even if the predictions are perfectly calibrated for each group, since not even a single student of the expected 200 ($2000 \times 0.1$) qualified students in the minority group has a chance to be selected by the admissions staff. 

We aim to define a new way of ranking that does not introduce unfairness into a human decision-making process even if the predictive model shows differential uncertainty between groups. This goal recognizes that training models to have equal uncertainty across groups may be difficult in practice, since a lack of data and appropriate features for some groups may be difficult to overcome\footnote{Arguably, the same applies to instructing human evaluators to provide such ranking scores during a first phase of review.}. Importantly, a key principle behind our work is to leave the final decisions to human decision makers. We thus aim to design new ranking algorithms to most effectively support a fair human decision-making process, and not to replace the human decision maker. 

The main contributions of this paper are
\begin{itemize}[leftmargin=*]
    \item A \textbf{new fairness criterion} that provides a meaningful guarantee for rankings that are used to support human decision making in selection processes even \textbf{under disparities in uncertainty}. We motivate this fairness criterion with a fair lottery \cite{Goodwin1992-GOOJBL,Saunders2008-SAUTEO}, ensuring group-wise outcomes that are equivalent to allocating scarce resources based on a group-fair lottery among the relevant candidates.
    \item Based on this notion of fairness, we develop a new ranking procedure that is group-fair under disparate uncertainty. Motivated by its relation to the equality of opportunity framework \cite{NIPS2016_9d268236}, we name this \textbf{ranking procedure Equal Opportunity Ranking (EOR)}. We analyze EOR from the lens of the cost burden on each entity involved -- the principal decision maker and each of the candidate groups -- and formulate the cost to each entity as the lost opportunity of access given that the candidate was truly relevant. We show that this EOR procedure equalizes the cost burden between groups and present an efficient and practical algorithm for computing EOR rankings. This procedure always produces a near optimal and approximately EOR-fair solution. In particular, we prove an \textbf{approximation guarantee} showing that the gap in total cost to the principal compared to an optimal algorithm is bounded by a small amount. 
    \item In addition to these theoretical worst-case guarantees, we present \textbf{extensive experiments} benchmarking the EOR algorithm with various existing ranking algorithms under different settings of disparate uncertainty. We show that Demographic Parity \cite{DBLP:journals/corr/YangS16a, Zehlike_2017}, normative procedures like Proportional Rooney-rule-like constraints \cite{10.1145/3351095.3372858}, Exposure based fairness criteria \citep{10.1145/3219819.3220088}, and Thompson Sampling Policy \citep{NEURIPS2021_63c3ddcc} are not typically EOR-fair under disparate uncertainty.
    We find that these results hold on both a wide range of synthetic datasets, as well as on real-world US census data.
    Finally, we explore the use of our fairness criterion for auditing ranking systems, using a real-world dataset of Amazon shopping search queries. Our code can be accessed at \url{https://github.com/RichRast/DisparateUncertainty}.
\end{itemize}
These results have important societal implications. First, they provide evidence that naively applying existing fairness mechanisms in rankings under disparate uncertainty leads to unfairness in terms of one group bearing the majority of the cost of opportunity.  Second, even under high disparate uncertainty in the worst case, EOR guarantees an approximately equal cost burden among all groups with bounded additional cost to the human decision maker. 
Finally, we hope our results inform practitioners to collect data and appropriate features for candidates in all groups to build predictive models that reduce disparate uncertainty. 
As we will show, the EOR procedure elevates the candidates with high uncertainty in the rankings for human evaluation. This has the desirable effect of producing more equitable training data for future use.

We now highlight some important considerations here. First, our proposed method is grounded in the fairness of a lottery \citep{Saunders2018-SAUEIT-2}, which is a common technique for allocating scarce resources (e.g., admission slots among a large number of qualified candidates).
However moral and philosophical arguments debating the use of lottery and randomization for certain situations have also been made \citep{Henning2015-HENFCT}.
We hope this work can spark discussions on alternative notions of fairness in rankings that satisfy equality of opportunity under disparate uncertainty. 
Another important point is that our proposed EOR procedure reduces unfairness due to disparate uncertainty, which often but not necessarily coincides with the historically disadvantaged group. 
Since EOR doesn't require the designation of the disadvantaged group, the guarantees we provide are not making a normative statement about any historically disadvantaged group. 
To that end, we emphasize the careful consideration of historical and social context that needs to be taken into account by the human decision maker as well as the way groups are defined in the first place. 

\section{Related Works} \label{sec:related_works}
While the issue of fairness has been heavily studied in the classification setting, its counterpart -- the ranking setting has received relatively less attention. Below we highlight key areas related to our work and leave a more detailed discussion of these and other related works to \Cref{sec:extended_related}.

\emph{Fairness in Rankings and Selection Processes}: While there exist several notions of fairness in rankings \citep{DBLP:journals/corr/abs-2103-14000}, predominantly, they are variations of two fairness mechanisms in existing literature -- representation by size
\citep{Celis2017RankingWF, inproceedings, Zehlike_2017, 10.1016/j.ipm.2021.102707} and equitable allocation of exposure \citep{10.1145/3219819.3220088, Singh2017EqualityOO, DBLP:journals/corr/abs-1805-01788, DBLP:journals/corr/abs-2102-05996, 48758}. 
We propose a new criterion different from either of the two and our central point is that under disparate uncertainty between groups, it is more fair to take an equal proportion of relevance in expectation rather than equality by size or exposure.
Proportional representation in the form of diversity constraints like demographic parity \citep{DBLP:journals/corr/YangS16a} or affirmative action such as the Rooney Rule \citep{10.1145/3351095.3372858} guarantee a minimum proportion by group size in selection processes. Exposure based formulations in rankings ensure that groups of candidates are allocated exposure in an equitable way such as in proportion of amortized relevance over the full ranking \citep{DBLP:journals/corr/abs-1805-01788}.
In this work, we demonstrate that fairness of representation by size and exposure, are not sufficient under disparate uncertainty. 

\emph{Fairness in Rankings under Uncertainty}: Our work builds on \citep{NEURIPS2021_63c3ddcc}, in which the authors establish that uncertainty in relevance probabilities is a primary cause of unfairness for rankings. They propose a Thompson sampling policy that randomizes relevances drawn from the predictive posterior distribution. 
Separately, \citep{EMELIANOV2022103609} studies the role of affirmative action in the presence of differential variance between groups in rankings. 
Differential variance implies that there is more certainty about the true quality (scores) of candidates in a group with less variance in the estimated quality and vice versa for a group with higher variance. 
In contrast, we work with relevance probabilities instead of scores and focus on the certainty of relevance of a candidate, which is determined by how close the predicted relevance probabilities are to 1 or 0. 
For instance, a group is highly certain (if the probabilities are all close to 1.0) or highly uncertain (if the probabilities are all close to 0.5) while both groups could have similar variance in probabilities. 
Fairness under uncertainty has also been studied with respect to calibration of probabilities \citep{NIPS2017_b8b9c74a, kleinberg_et_al:LIPIcs.ITCS.2017.43, Flores2016FalsePF, DBLP:journals/bigdata/Chouldechova17}. Classical literature in this area studies whether group-wise calibration is a necessary condition for fairness, or not \citep{10.1145/3531146.3533245}. Our work is orthogonal to the question of the necessity of calibration for fairness and we only require group-wise calibration as a sufficient condition for the EOR criterion we propose. 

Our work complements and extends prior research on fairness in rankings under uncertainty, contributing uniquely in several ways.
In particular, we provide a formal framework for analyzing the unfairness that differential uncertainty induces in rankings.
Additionally, our approach involves accounting for the differential uncertainty directly at the ranking stage, unlike prior work that involves learning the uncertainty \citep{10.1145/3539597.3570469} or correcting the noisy relevance estimates \citep{10.1145/3539597.3570474}. Finally, our proposed EOR criterion is non-amortized for every prefix $k$ of the ranking, which is strictly stronger than the probabilistic but amortized notions of fairness \citep{Singh2017EqualityOO, DBLP:journals/corr/abs-1805-01788, 10.1145/3219819.3220088} shown to be problematic \citep{UnFair}.

\section{Un-fairness due to Disparate Uncertainty in Rankings} 
\label{sec:uncertainty_merits}
We want to design a ranking policy $\pi$ that does not introduce unfairness into a human decision process due to disparate uncertainty. More formally, the task of $\pi$ is to compute a ranking $\sigma$ of $n$ candidates, where each candidate $i$ has a binary\footnote{We conjecture that our framework can be extended to categorical or real-valued relevances.} relevance $r_i \in \{0,1\}$ which is unknown to the ranking policy $\pi$, and true relevance can only be revealed through a human decision maker. When assessing the relevance, we assume that the human decision maker goes through the ranking $\sigma$ from the top to some a priori unknown position $k$. The goal of the decision maker (a.k.a. principal) is to find as many relevant candidates (e.g., relevant products, qualified students) as possible.

While the true relevances $r_i$ are unknown, we assume that the ranking policy $\pi$ has access to a predictive model of relevance $\prob(r_i|\Data)$, typically trained on prior human decisions $\Data$ and features of the candidates. Sorting the candidates in decreasing order of $p_i=\prob(r_i=1|\Data)$ is called the Probability Ranking Principle (PRP) \cite{article_pr}, and it is by far the most common way of computing a ranking. The justification for PRP ranking is that it maximizes the expected number of relevant candidates in any top-k prefix of the ranking. 
On the other hand, Demographic Parity (DP) is the dominant form of fairness mechanism in rankings, where candidates are selected from groups in proportion to the group size.
While PRP ranking is provably optimal according to the efficiency goal of the principal and DP ranking ensures representation by group size, the following elaborates how both PRP and DP can violate fairness.
\begin{figure*}[t]
\centering
    \includegraphics[width=1.0\linewidth, trim = 0.3cm 25.0cm 7.0cm 2.0cm, clip]
    {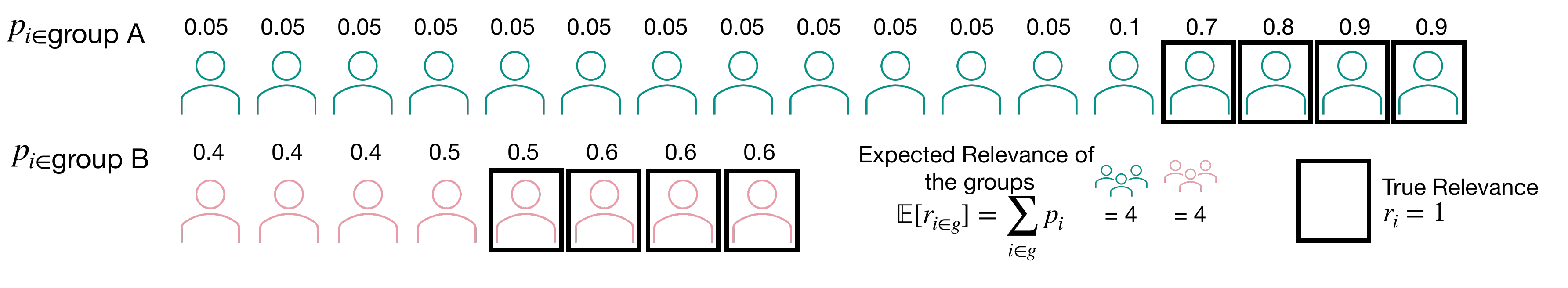}
    \vspace*{-5mm}
    \caption{\normalfont{The expected probability of relevance $p_i$ and their true relevance $r_i$ for all candidates in both groups.}}
  \label{fig:expected_probs}
\end{figure*}

\subsection{Illustrative Example} 
\label{sec:example}
Consider a medical setting, where candidates need to be evaluated for eligibility to participate in a controlled medical trial. While group A consists of candidates with a rich set of diagnostic tests that inform eligibility (e.g., candidates with health insurance), group B consists of candidates without prior access to such tests (e.g., candidates without health insurance). As a result, according to $\prob(r_i=1|\Data)$ in \Cref{fig:expected_probs}, the model can make very informed predictions for candidates in group A, while for group B the model cannot reliably differentiate between eligible and not eligible candidates. 
This means the model knows exactly which candidates in group A will be judged as eligible by the human decision maker, but it will make undifferentiated (but well-calibrated) predictions for candidates in group B. 

\begin{figure}[t]
\centering
    \includegraphics[width=1.0\linewidth, trim = 1.0cm 21.0cm 34.0cm 0.5cm, clip]
    {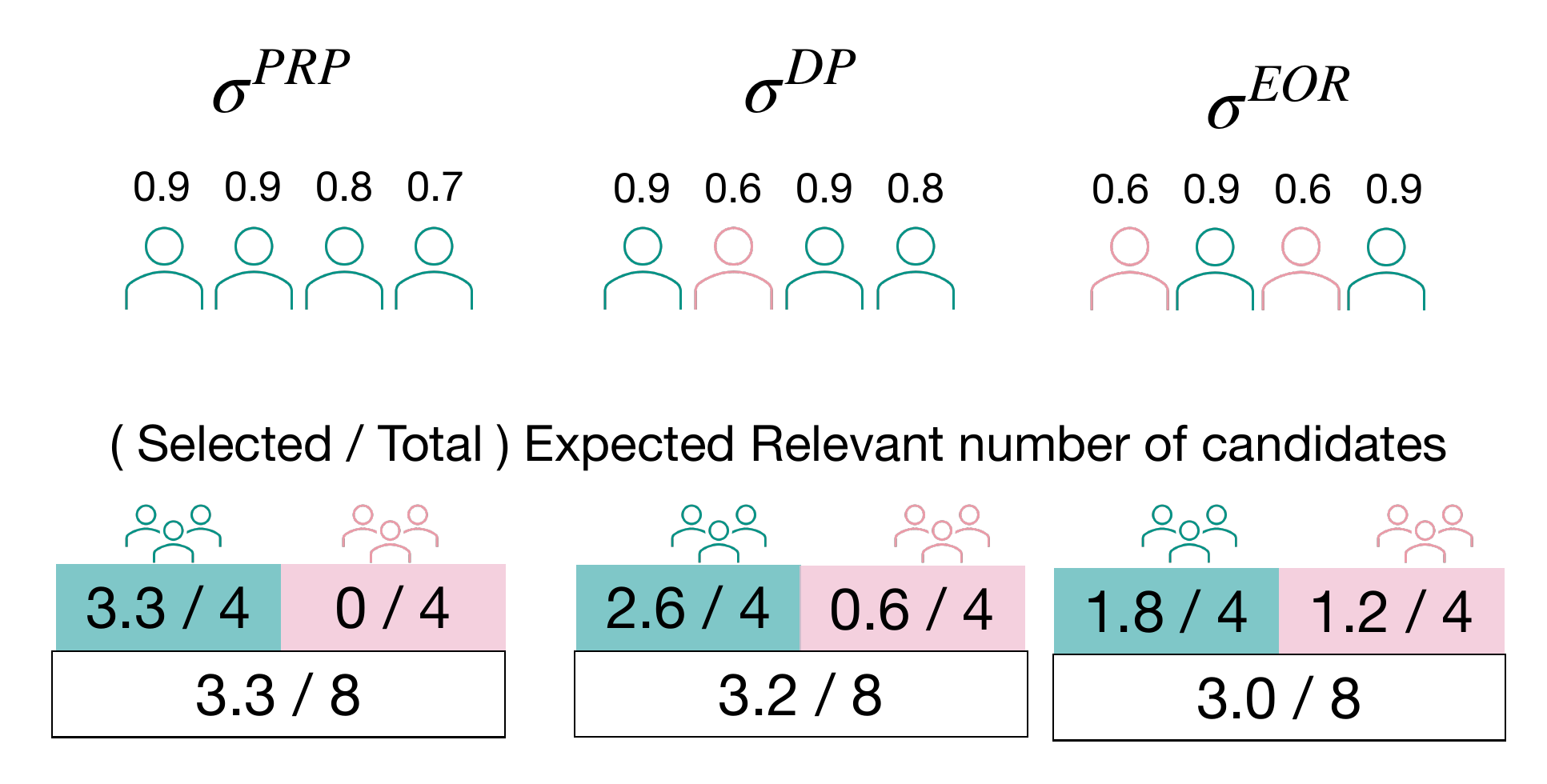}
    \vspace*{-5mm}
    \caption{\normalfont{Top-4 ranking for Probability Ranking Principle (PRP), Demographic Parity (DP), and our proposed EOR for the example in \Cref{fig:expected_probs}. Selected relevant number of candidates in expectation and total relevant number of candidates in expectation are shown corresponding to each ranking.}}
  \label{fig:example_ranking}
\end{figure}

\Cref{fig:example_ranking} shows that the PRP ranking is oblivious to this disparity between groups.
If the principal needs to find four eligible candidates based on the PRP ranking, they are all selected from group A. However, by summing the probabilities in group B, our model tells us that we can also expect four eligible candidates in group B. We argue that deterministically selecting only candidates from group A is unfair since it is not consistent with the outcome of a group-fair lottery for the four spots among the eight eligible candidates. 
Now, consider the DP ranking in \Cref{fig:example_ranking}. Since group A has 17 candidates and group B has 8 candidates, DP will select roughly one candidate from group B for every two candidates from group A. We argue that in this setting, DP is also unfair, (though less in comparison to PRP) as it selects three eligible candidates from group A and only one from group B. In expectation, it selects 2.6 out of 4 relevant candidates from group A, but only 0.6 out of 4 relevant candidates from group B.
We show empirically later that other fairness mechanisms motivated by representation of size such as proportional Rooney Rule or threshold-based formulations have the same failure mode. 
Importantly, note that it is not evident whether group A or B should be the majority group. 

We argue that a more principled and fair way would be to select an equal fraction of relevant candidates from each group in expectation. 
Consider the last ranking in \Cref{fig:example_ranking}, which approximately fulfills the EOR fairness we formally introduce later. 
In expectation, this ranking selects a more equal number of relevant candidates from both groups, making it similar to a fair lottery. In particular, it selects 1.8 out of 4 relevant candidates from group A and 1.2 out of 4 relevant candidates from group B. This EOR ranking, however, comes at an increased evaluation cost to the principal as it selects 3.0 expected relevant candidates from both the groups, compared to 3.2 with DP and 3.3 with PRP. As a result, the principal needs to review more candidates to select the same number of relevant candidates with EOR ranking. However, it is still far more effective than a lottery, which selects the candidates in a uniform random order. 

Our key insight is that EOR ranking is more fair not because it takes an equal ``number'' of candidates from each group but it is more fair because it takes an equal fraction of ``relevant'' candidates in expectation from each group.
This accounts for predictive uncertainty in the relevance probabilities because even when one group has sharp and the other group has non-sharp $p_i$, it takes approximately equal fraction of relevance from each of the groups.

This example illustrates the intuition behind the EOR principle we formalize in the following, and we will show how to efficiently compute rankings that fulfill EOR fairness.

\vspace*{-1mm}
\subsection{Sources of Disparate Uncertainty} \label{sec:posterior_desc}
It remains to show that disparate uncertainty is a fundamental problem when estimating the relevance probabilities $\prob(r_i=1|\Data)$ that is not easily remedied by improved learning methods. The following illustrates that even a Bayes-optimal procedure is vulnerable to producing disparate uncertainty. 

\begin{figure}[t]
\centering
    \includegraphics[width=1.0\linewidth, trim = 0.0cm 0.0cm 0.0cm 0.0cm, clip]
    {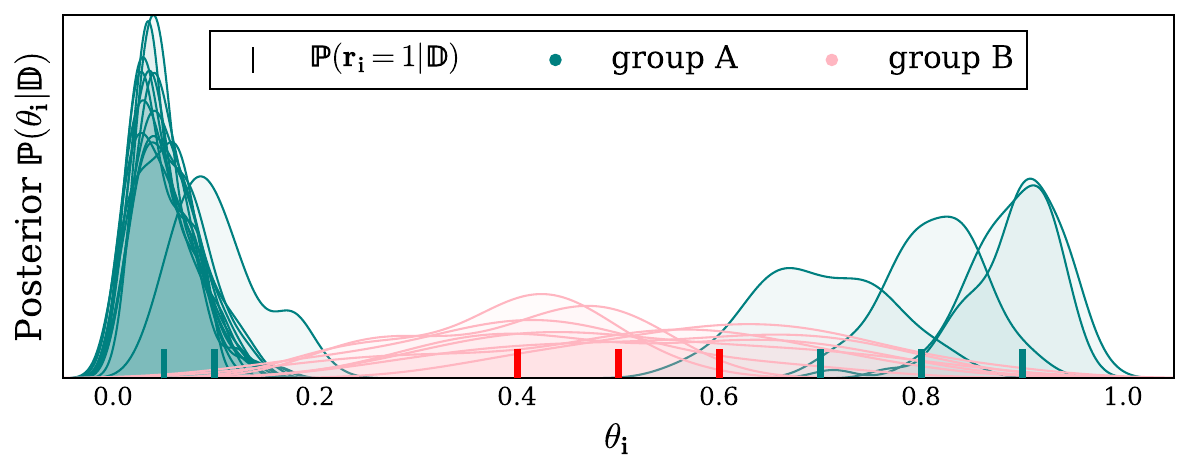}
    \vspace*{-5mm}
    \caption{\normalfont{An illustration of disparate uncertainty between groups from a Bayesian perspective for all the candidates of \Cref{fig:expected_probs}. The candidates in group A have peaky posteriors, while those in group B have relatively flat posteriors.}}
  \label{fig:posterior}
\end{figure}
Consider the posterior distribution illustrated in \Cref{fig:posterior}, which shows the uncertainty $\prob(\theta_i|\Data)$ that a Bayesian model has about the relevance probability $\theta_i$ of candidate $i$, where $\theta_i$ is the parameter of a Bernoulli distribution. For group A, the posterior $\prob(\theta_i|\Data)$ is peaked, meaning that the model can accurately pinpoint the correct relevance probabilities. For group B, the posterior is flat, which is to be expected if group B is smaller and thus has less data. The Bayes-optimal way of handling this uncertainty is to infer $\prob(r_i|\Data)$ via the posterior predictive distribution 
 \[\prob(r_i=1|\Data) = \int \prob(r_i=1|\theta_i) \:\prob(\theta_i|\Data) \: d{\theta_i} = \int \theta_i \: \prob(\theta_i|\Data) \: d{\theta_i} \]
\Cref{fig:posterior} shows how even this Bayes-optimal procedure leads to disparate uncertainty between groups,
where the $\prob(r_i=1|\Data)$ is closer to zero or one for candidates in group A (i.e., highly informative), and middling for group B (i.e., less informative).

Note that there is ample evidence that non-Bayesian methods also produce such disparities (e.g., \citep{pmlr-v81-buolamwini18a, tatman-2017-gender, wilson2019predictive}).  Furthermore, disparate amounts of data are not the only cause for disparity. For example, in college admissions, disparately more URM candidates may miss AP grades because their school does not offer AP classes. Their epistemic uncertainty \citep{DBLP:journals/ml/HullermeierW21} of qualification will thus be higher since the model has less information about these students. This higher uncertainty does not mean individual students are not qualified, and elevating them in the ranking for human evaluation can accurately reveal qualification through additional information (e.g., an interview, deep reading of the SOP, or recommendation letters). But if they are never selected for human review, then they do not have a chance for an admission spot.

\section{Equality of Opportunity in Ranking} \label{sec:cost_fairness}
In this section, we first discuss the assumptions and modeling choices and then formulate the cost that the uncertainty of the predictive model imposes on the principal and the relevant candidates from the different groups. 

Our first assumption includes access to group-wise calibration \cite{NIPS2017_b8b9c74a, doi:10.1177/0049124118782533} with the probability estimates calibrated within groups. To simplify notation, we do not differentiate between $\prob(r_i|\Data)$ and a group-wise calibrated score $\prob(r_i|s, A,\Data)=s$ and we only require this group-wise calibration as a sufficient condition for our framework.
Additionally, we assume that the true relevance $r_i$ is revealed perfectly to the human decision-maker upon review, and we do not model any bias in the human decision-making review process.
Finally, we assume that candidates have group membership to a single protected attribute and do not consider intersectional group membership, which is a practically important consideration in fairness. 
Relaxing these three assumptions for future work could allow modeling even more real-world complexities.

To formulate the cost of opportunity, we first recognize that any group-wise calibrated model
allows us to compute the \textbf{expected number of relevant candidates $\relCand(.)$ of a particular group $g$} -- no matter how well the model can differentiate relevant and non-relevant candidates in that group. 
\[\relCand(g)= \sum_{i \in g} \expect_{r_i \sim \prob(r_i|\Data)}[r_i] = \sum_{i \in g} \prob(r_i=1|\Data)\]
Extending this to rankings, the expected number of relevant candidates from group $g$ for any prefix $k$ of ranking $\sigma$ that only depends on $\prob(r_i=1|\Data)$ to ensure unconfoundedness is  
\[\relCand(g|\sigma_k) 
= \sum_{i \in g\cap \sigma_k} \expect_{}[r_i]
= \sum_{i \in g\cap \sigma_k}\prob(r_i=1|\Data) \]
Further extending this to a potentially stochastic ranking policy $\pi$ that represents a distribution over rankings for a particular query leads to 
\begin{eqnarray}   
    \relCand(g|\pi_k) &=& \sum_{i \in g} \expect_{r_i \sim \prob(r_i|\Data), \sigma_k \sim \pi}[r_i \indicator_{i \in \sigma_k}]  
    \nonumber \\
       &=&  \sum_{i \in g} \prob(i \in \sigmaKPi) \prob(r_i=1|\Data) \label{eq:expected_rel_policy}  
\end{eqnarray}
where $\prob(i \in \sigmaKPi)= \expect_{\sigma_k \sim \pi}[ \indicator_{i \in {\sigma_k}}]$ is the probability that policy $\pi$ ranks candidate $i$ into the top k.
As a side note notation-wise, for a specific policy, for example, $\EOR$, we denote the corresponding ranking $\sigma_k^{\EOR}$ in the abbreviated form as $\EORanking_k$.

The ability to compute these expected numbers of relevant candidates from each group allows us to reason about the cost resulting from the uncertainty of the model that each ranking imposes on the respective groups, which we detail in the following.

\subsection{Cost Burden to Candidate Groups and the Principal} \label{sec:cost_fairness_cand}
We define the \textbf{cost $c(.)$ to candidate $i$} as missing out on the opportunity to be selected if the candidate was truly relevant. For a ranking policy $\pi$ that produces rankings $\sigma \sim \pi$ based on $\prob(r_i|\Data)$, and a principal that reviews the top $k$ candidates, the cost to a relevant candidate $i$ is the probability of not being included in the top $k$.
\begin{equation}
    c(i| \pi_k,r_i) = r_i (1 - \prob(i \in \sigmaKPi))
    \label{eq:cost_candidate}
\end{equation}
Note that only relevant candidates can incur a cost, since non-relevant candidates will be rejected by human review and thus draw no utility independent of whether they are ranked into the top $k$. Also, note that $\prob(i \in \sigmaKPi)$ can be estimated by Monte-Carlo sampling even for complicated ranking policies that have no closed-form distribution.

While determining the cost to a specific individual $i$ is difficult since it involves knowledge of the true relevance $r_i$, getting a measure of the aggregate cost to the group is more tractable. In particular, we define the \textbf{group cost} as the expected cost to the relevant candidates in the group, normalized by the expected number of relevant candidates.
\begin{eqnarray}
    c(g| \pi_k) &=&  \frac{\sum_{i \in g} \expect_{r_i \sim \prob(r_i|\Data)}[c(i|\pi_k,r_i=1)]}{\relCand(g)} 
    \nonumber \\
    &=& \frac{\sum_{i \in g }(1 - \prob(i \in \sigmaKPi)) \prob(r_i=1|\Data)}{\relCand(g)}
    \nonumber \\
    &=& 1- \frac{\relCand(g| \pi_k)}{\relCand(g)}\label{eq:subgroup_cost}
\end{eqnarray}
The last equality in \eqref{eq:subgroup_cost} follows directly from Eq.~\eqref{eq:expected_rel_policy}.
We normalize the expected group cost with the total expected number of relevant candidates in the group so that the above approximates the fraction of relevant candidates from that group that miss out on the opportunity of being selected by the human reviewers.

The {\bf principal incurs a cost} whenever the ranking misses a relevant candidate, independent of group membership. For a principal that reviews the top $k$ applications from two groups -- A and B, the \textbf{total cost} can thus be quantified via the expected number of relevant candidates that are overlooked.
\begin{eqnarray}
    c(\text{Principal}|\pi_k) 
    &=& \frac{\sum_{i}(1 - \prob(i \in \sigmaKPi)) \prob(r_i=1|\Data)}{\relCand(A) + \relCand(B)} \label{eq:total_cost}
\end{eqnarray}
We again normalize this quantity to make it proportional to the total expected number of relevant candidates. Note that Eq.~\eqref{eq:total_cost} is related to the conventional metric of Recall@k.

\subsection{Equality of Opportunity Ranking (EOR) Criterion} \label{sec:fairness_criteria}
We now formally define our EOR fairness criterion
and argue that a disparity in uncertainty should not lead to disparate costs for any of the groups. We have already seen that $\PRP$ and $\DP$ can violate this goal. 
For a possible solution, we turn to the principle of random lottery that has been historically used to justify fair allocation of resources \cite{Goodwin1992-GOOJBL, Saunders2008-SAUTEO}.
Take, for example, the uniform ranking policy $\Uniform$, which ignores $\prob(r_i|\Data)$ and picks a ranking uniformly at random. Use of $\Uniform$ ensures that any relevant candidate has an equal chance of being evaluated and selected since any top $k$ of the ranking contains a uniform random sample of the {\em relevant} candidates -- independent of group membership. While the ranking effectiveness of $\Uniform$ is bad, it has the attractive property that the fraction of relevant candidates that get selected from each group is equal in expectation. For example, if both group A and group B contain 100 relevant candidates in expectation and if $\Uniform$ selects $l$ relevant candidate in expectation from group A, it also selects $l$ relevant candidates in expectation from group B. Similarly, if group A contains 200 relevant candidates and group B contains 100, the selection ratio will be 2 to 1 in expectation. 
We formalize this property of the uniform lottery as our key fairness axiom.
\begin{axiom}[EOR Fair Ranking Policy]\label{axm:axm_uniform}
For two groups of candidates A and B, a ranking policy $\pi$ is Equality-of-Opportunity fair, if for every $k$ the top-k subsets $\pi_k$ contain in expectation an equal fraction of the relevant candidates from each group. More precisely:
\begin{equation}
    \forall k \quad \frac{\relCand(A|\pi_k)}{\relCand(A)} = \frac{\relCand(B|\pi_k)}{\relCand(B)} \label{eq:uniform_ranking} 
\end{equation}
\end{axiom}

While this fairness property of $\Uniform$ is desirable, its completely uninformed rankings come at a cost to the principal and the relevant candidates from both groups, since only a few relevant candidates will be found. The uniform policy $\Uniform$ is particularly inefficient when the fraction of relevant candidates is small. The key question is thus whether we can define an alternate ranking policy that retains the group-wise fairness properties of $\Uniform$, but retains as much effectiveness in surfacing relevant candidates as possible. 

To illustrate that such rankings exist, which are both EOR fair and more effective, consider our motivating example of \Cref{fig:expected_probs}, where $\EORanking=[\overset{B}0.6 , \overset{A}0.9 , \overset{B}0.6,  \overset{A}0.9,  \overset{B}0.6,  \overset{B}0.5,  \overset{A}0.8,  \overset{B}0.5,  \overset{A}0.7,  \overset{B}0.4,  \overset{A}0.1,  \overset{B}0.4,  \overset{A}0.05, \overset{A}0.05,
  \overset{A}0.05, \\ \overset{A}0.05, \overset{A}0.05, \overset{A}0.05, \overset{A}0.05, \overset{A}0.05, \overset{B}0.4,  \overset{A}0.05, \overset{A}0.05, \overset{A}0.05, \overset{A}0.05]$ has the property that the proportion of relevant candidates for each group in the top $k$ never differs by more than $0.6/4$ for any value of $k$ in expectation.
In one way, this guarantee is even stronger than what is defined in Axiom~\ref{axm:axm_uniform}, since it holds for the specific ranking $\EORanking$ without the need for stochasticity in the ranking policy. This provides a non-amortized notion of fairness, which is particularly desirable for high-stakes ranking tasks that do not repeat, and we thus need to provide the strongest possible guarantees for the specific ranking $\sigma$ we present. However, a guarantee for an individual ranking makes the problem inherently discrete, which means that we require some tolerance (i.e., $0.6/4$ in the example above) in the fairness criterion depending on the choice of $k$. This leads to the following $\delta$-EOR Fairness criterion for an individual ranking $\sigma$.
\begin{Definition}[$\delta$-EOR Fair Ranking] \normalfont \label{def:eor_constraint}
For two groups of candidates A and B, a ranking $\sigma$ is $\delta$-EOR fair, if for every $k$ the top-k subset $\sigma_k$ differs in its fraction of expected relevant candidates from each group by no more than $\delta$. More precisely:
\vspace*{-1mm}
\begin{equation}
    \forall k \quad  \left|\frac{\relCand(A | \sigma_k) }{\relCand(A)}-\frac{\relCand(B | \sigma_k)}{\relCand(B)}\right|  \le \delta  \label{eq:EOR}
\end{equation}
\end{Definition}

Note that we can also define a specific ``slack'' $\delta(\sigma_k)$ for each position $k$. For a fair ranking $\sigma$, this slack should ideally oscillate close to zero as we increase $k$, and so minimizing its deviation from zero would translate to ensuring $\delta$-EOR fairness. Formally, we can define $\delta(\sigma_k)$ as 
\begin{equation}
    \forall k \quad  \delta (\sigma_k) = \frac{\sum_{i \in A\cap \sigma_k}\prob(r_i|\Data)}{\sum_{i \in A} \prob(r_i|\Data)}-\frac{\sum_{i \in B\cap \sigma_k}\prob(r_i|\Data)}{\sum_{i \in B} \prob(r_i|\Data)}  \label{eq:EOR_compute}
\end{equation}
$\delta$-EOR fairness balances the selection of candidates from the two groups, accounting for predictive uncertainty in their estimation of relevances. 
If for instance, the ML model is less certain in its predictions for group B, but both groups have the same total expected relevance, the $\delta$-EOR criterion will rank candidates from group B higher to ensure fairness. Importantly, note how this produces more human relevance labels of candidates from groups with high uncertainty, which has the desirable side-effect of producing new training data that allows training of more equitable relevance models for future use.

Finally, note how the $\delta$-EOR fair ranking provides a means for ensuring procedural fairness and avoiding \emph{disparate treatment}. Importantly, we leave the decision of which candidates to select to the human decision maker, and EOR fairness does not require the designation of a disadvantaged group. Instead, the EOR fair condition in Eq.~\eqref{eq:EOR} is symmetrical w.r.t.\ both groups and by definition treats both groups similarly, and its intervention in the ranking process is entirely driven by the predictive model $\prob(r_i|\Data)$. Even though it uses group membership, EOR-fairness is thus fundamentally different from demographic parity \citep{10.1145/2090236.2090255, DBLP:journals/corr/YangS16a} and affirmative action rules like Rooney rule \citep{brian_collins_rooney, 10.1145/3351095.3372858}, $\frac{4}{5}$th rule (selection rate for a protected group must be at least 80\% of the rate for the group with the highest rate)\footnote{Uniform Guidelines on Employment Selection Procedures, 29 C.F.R.\S 1607.4(D) (2015)} or $\gamma$-based notions of fairness \citep{10.1145/3391403.3399482} and threshold based formulations such as FA$^{*}$IR \citep{Zehlike_2017}.

To illustrate the difference with existing fairness notions, we return to our running example from \Cref{fig:expected_probs}. For top-4 ranking in \Cref{fig:example_ranking}, the EOR criterion can be computed as $|\delta(\EORanking_{4})| = 0.15$, $|\delta(\DPRanking_{4})| = 0.5$ and $|\delta(\PRPRanking_{4})| = 0.83$, quantifying the unfairness of DP and PRP as compared to EOR.
While DP selects one candidate from group B for every two candidates from group A, applying $\frac{4}{5}$th rule with group B as the disadvantaged group will select roughly $4/5$ number of candidates from group B for every two candidates from group A. For top-4 ranking, the $\frac{4}{5}$th rule is $\FourFifthRanking_{4}=[\overset{A}0.9, \overset{B}0.6, \overset{A}0.9, \overset{A}0.8]$ with $|\delta(\FourFifthRanking_{4})| = 0.5$. If instead, group A is selected as the disadvantaged group, $\frac{4}{5}$th rule will select all four candidates from group A resulting in $|\delta(\FourFifthRanking_{4})| = 0.83$, same as that of PRP. The FA$^{*}$IR criterion ($\FS$) is similarly anchored on the principle that a top-k ranking is fair when the proportion of disadvantaged candidates selected doesn't fall far below a required minimum proportion and also requires the designation of a disadvantaged group. In this example, $\FS$ gives the exact same top-4 ranking and EOR criterion as shown for $\frac{4}{5}$th rule.
In summary, the predominant fairness criteria in rankings motivated by the representation of size perform very differently than the $\EOR$. 
As an example, consider the well-documented issue of female candidates not being selected for leadership positions primarily due to their small applicant pool size \cite{underrepresentationWomen}. If the female applicants have high disparate uncertainty (due to lack of historical data), affirmative action may still select far fewer (based on group size) of them than deserved (based on the number of relevant female candidates).

We now briefly consider two other notions of fairness in rankings for the running example. First, we look at the exposure-based formulations\citep{10.1145/3219819.3220088, DBLP:journals/corr/abs-1805-01788}. 
The principle of exposure is motivated by position bias in rankings and ensures the allocation of position in rankings in proportion to the expected total relevance. While the position of a selected candidate is certainly important, it does not take disparate uncertainty into consideration. 
$\EXP$ is a stochastic policy that allocates equal exposure between the two groups (in this example, both groups have an equal expected total relevance) over the full 25 positions of the ranking. $\EXP$ allocates most of the probability mass to candidates in group B for all of the top-4 positions (not because they have high uncertainty but because their group size is smaller than group A). This results in a high cost burden for group A and the EOR criterion is computed as $|\delta(\EXPRanking_{4})| = 0.58$ higher than both $\EOR$ and $\DP$. Later in \Cref{sec:experiments}, we demonstrate how $\EXP$ places a higher cost burden on the uninformative group instead when both groups have relatively the same size.

Finally, we discuss the Thompson Sampling based fairness in rankings \citep{NEURIPS2021_63c3ddcc}. For $\TS$, binary relevances are drawn according to $r_i \sim \prob(r_i|\Data)$, and candidates are sorted in decreasing order of relevance $r_i$ with their ranking randomized for the same value of relevance.
The EOR criterion for a top-4 ranking produced by $\TS$ can be computed as $|\delta(\TSRanking_{4})| = 0.29$ for the running example. While $\TS$ takes the predictive uncertainty of relevance into account by randomization of rankings, it is group oblivious and so does not account for the difference in the predictive uncertainty of relevance between groups. This explains the high EOR criterion of a specific $\TSRanking$ with median $\sum_{k=1}^{n}|\delta(\TSRanking_k)|$ as compared to that of the $\EORanking$.
While we discussed how EOR differs from existing fairness notions above, we will further demonstrate this comparison via extensive empirical evaluations in \Cref{sec:experiments}.

One of our key contributions includes formalizing the connection between $\delta$-EOR Fair Ranking described in Definition~\ref{def:eor_constraint} and the cost of opportunity in rankings described in \Cref{sec:cost_fairness_cand}. Both $\delta$-EOR Fair Ranking and cost of opportunity in rankings are derived separately -- the former from the axiom of fairness of a uniform lottery, the latter from the cost of errors that any realistic prediction model is bound to make. In the next section, we show that these two are elegantly related via theoretical results on cost optimality.

\section{Computing EOR-Fair Rankings} \label{sec:compute_fair_ranking}
We now turn to the question of how to compute a $\delta$-EOR fair ranking $\EORanking$ for any given relevance model $\prob(r_i|\Data)$. This ranking procedure needs to account for two potentially opposing goals. First, it needs to ensure that $\delta$-EOR fairness is not violated, ideally for a $\delta$ that is not larger than required by the discreteness of the ranking. Second, it should maximize the number of relevant candidates contained in the top $k$, for any a-priori unknown $k$.
While solving this optimization problem in the exponentially sized space of rankings is computationally inefficient, we show that Algorithm~\ref{alg:EOR_alg} is an efficient ranking method that provides a close-to-optimal solution.

Algorithm~\ref{alg:EOR_alg} uses as input the PRP rankings $\PRPRankingGroup{A}$ and $\PRPRankingGroup{B}$ for each of the groups A and B respectively. We denote $\PRPRankingGroup{g}[i]$ as the $i^{th}$ element in the PRP ranking of group $g$. The basic idea is to compare the highest relevance candidate from each group and select the candidate that would minimize the $\delta$ for the resultant ranking (breaking ties arbitrarily when selecting an element from either group results in the same $\delta$ for the resultant ranking). 
Consider our running example from \Cref{fig:expected_probs}. At $k=1$, selecting the first element from group A, $\PRPRankingGroup{A}[1]$, would result in a $\delta(\sigma_1)=0.9/4$ while selecting the first element from group B, $\PRPRankingGroup{B}[1]$, would result in a $\delta(\sigma_1)=-0.6/4$. To minimize $\left| \delta(\sigma_1) \right|$, the algorithm selects the first element from group B with $\EORanking_1=[\overset{B}0.6], \left| \delta(\sigma_1) \right| =0.6/4$. For $k=2$, the first element from group A, and the second element from group B are considered. It proceeds to select the first element from group A with  $\EORanking_2=[\overset{B}0.6, \overset{A}0.9], \left| \delta(\sigma_2) \right| = 0.3/4$ and so on. The Algorithm does not change the relative ordering between candidates within a group and its runtime complexity is $\bigO(n\log{n})$, since the elements from the two groups each need to be sorted once by $\prob(r_i|\Data)$. Composing the final EOR ranking $\EORanking$ by merging the two group-based rankings $\PRPRankingGroup{A}$ and $\PRPRankingGroup{B}$ takes only linear time since each computation per iteration is constant time per prefix $k$.

\begin{algorithm}[tb]
\begin{flushleft}
\textbf{Input}: Groups $ g \in \{A, B\}$; Rankings $\PRPRankingGroup{g}$ per group in the sorted (decreasing) order of relevance probabilities $\prob(r_i|\Data)$.\\
\textbf{\textit{Initialize}}: $j \leftarrow 0$; empty ranking $\EORanking$
\end{flushleft}
    \While{$j<k$}{
        $l_g \leftarrow \PRPRankingGroup{g}[1] \hspace{10pt} \forall g \in \{A, B\}$\\
        $g^{*} \leftarrow \underset{g \in \{A, B\}}{\arg\min} \left| \delta(\EORanking \cup \{l_g\})\right|$,\\
        where $\delta(.)$ is computed using \eqref{eq:EOR_compute} \label{alg:step_min_delta} \\
        $l_{g^{*}} \leftarrow \PRPRankingGroup{g^{*}}[1] ; \quad \PRPRankingGroup{g^{*}} \leftarrow \PRPRankingGroup{g^{*}}$ \textbackslash $\{l_{g^{*}}\}$\\
        $\EORanking \leftarrow \EORanking \cup \{l_{g^{*}}\}; \quad j \leftarrow j+1$
    }
\textbf{Return} $\EORanking$
\caption{EOR Algorithm}\label{alg:EOR_alg}
\end{algorithm}
While Algorithm~\ref{alg:EOR_alg} is inspired by existing algorithms such as \citep{Zehlike_2017} in that both select the top element from the PRP ranking of each group, they are fundamentally different. Existing methods including \citep{Zehlike_2017} ensure a form of demographic parity which we have already shown to be fundamentally different than the EOR criterion we propose. Additionally, while \citep{Zehlike_2017} requires a threshold input and the designation of a disadvantaged group, the EOR Algorithm does not require this normative designation and guarantees EOR fairness without requiring any tolerance $\delta$ as an input. We show this both theoretically and in empirical evaluations and provide a detailed description of baseline algorithms in Appendix~\ref{baselines}.

It remains to be shown that Algorithm~\ref{alg:EOR_alg} always produces a ranking $\EORanking$ with small $\delta$ while surfacing as many relevant candidates as possible in any top $k$ prefix. We break the proof of this guarantee into the following steps. First, we show that for any particular $k$ and its associated $\delta(\EORanking_k)$, the number of relevant candidates in the top-$k$ is close to optimal. Second, we provide an upper bound on $\delta(\EORanking_k)$ that is entirely determined a priori by the specific $\prob(r_i|\Data)$.
To address the first step, the following Theorem~\ref{Thm:Thm1}, shows that the rankings produced by Algorithm~\ref{alg:EOR_alg} have a cost to the principal that is close to optimal. 
\begin{theorem}[Cost Approximation Guarantee at $k$]\normalfont \label{Thm:Thm1}
The EOR fair ranking $\EORanking$ produced by Algorithm~\ref{alg:EOR_alg} is at least $\phi\delta(\EORanking_k)$ cost optimal for any prefix $k$, where $\phi = \frac{2}{\relCand(A)+\relCand(B)}\left| \frac {p_{A}-p_{B} }{q_{A}+q_{B}}\right|$, $q_{A}=\frac{p_{A}}{\relCand(A)}$, and $q_{B}=\frac{p_{B}}{\relCand(B)}$. Further, $p_{A}=\PRPRankingGroup{A}[k_A]$, $p_{B} = \PRPRankingGroup{B}[k_B]$, where $k_A$ is the last element from group A that was selected by EOR Algorithm for prefix $k$ and similarly for $k_B$.
\end{theorem}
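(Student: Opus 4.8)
The plan is to compare $\EORanking_k$ directly against the unconstrained cost-minimizer. For a deterministic ranking, Eq.~\eqref{eq:total_cost} gives $c(\text{Principal}\mid\sigma_k)=1-\frac{\relCand(A\mid\sigma_k)+\relCand(B\mid\sigma_k)}{\relCand(A)+\relCand(B)}$, so cost is minimized exactly by the top-$k$-by-$\prob(r_i\mid\Data)$ ranking, i.e.\ the global PRP ranking $\PRPRanking$ (this is the efficiency optimality of PRP already recalled in the paper). Writing $\mathrm{rel}(\sigma_k)=\relCand(A\mid\sigma_k)+\relCand(B\mid\sigma_k)$, the cost-optimality gap is therefore $\frac{1}{\relCand(A)+\relCand(B)}\big(\mathrm{rel}(\PRPRanking_k)-\mathrm{rel}(\EORanking_k)\big)$. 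First I would exploit that both $\EORanking$ and $\PRPRanking$ preserve the within-group PRP order, so each top-$k$ set is pinned down by its per-group counts: let $(k_A,k_B)$ denote the split of $\EORanking_k$ and $(m_A,m_B)$ that of $\PRPRanking_k$, with $k_A+k_B=m_A+m_B=k$.

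WLOG suppose $k_A\le m_A$ (hence $k_B\ge m_B$) and set $t=m_A-k_A=k_B-m_B$; the opposite case follows by exchanging the roles of A and B, which is precisely why $\phi$ is written with $|p_A-p_B|$. The key step is a boundary/exchange argument: the $t$ candidates that $\PRPRanking$ takes from A but $\EORanking$ omits sit at group-A positions $k_A+1,\dots,m_A$, so each has relevance at most $p_A=\PRPRankingGroup{A}[k_A]$; symmetrically, the $t$ candidates that $\EORanking$ takes from B but $\PRPRanking$ omits sit at group-B positions $m_B+1,\dots,k_B$, so each has relevance at least $p_B=\PRPRankingGroup{B}[k_B]$. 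Pairing the two sets of $t$ candidates yields $\mathrm{rel}(\PRPRanking_k)-\mathrm{rel}(\EORanking_k)\le t\,(p_A-p_B)$, and since the left-hand side is nonnegative by optimality of PRP, this also certifies $p_A\ge p_B$ in this case.

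It then remains to convert this count-based bound into the slack-based form of the theorem, i.e.\ to show $t\,(p_A-p_B)\le 2\big|\tfrac{p_A-p_B}{q_A+q_B}\big|\,\delta(\EORanking_k)$, equivalently to bound the deviation count $t$ by $\tfrac{2}{q_A+q_B}\,\delta(\EORanking_k)$ (recall $q_A=p_A/\relCand(A)$ and $q_B=p_B/\relCand(B)$ are exactly the increments by which appending the boundary elements of A and B move the slack in Eq.~\eqref{eq:EOR_compute}). This is the step I expect to be the main obstacle, and I would attack it through the greedy invariant of Algorithm~\ref{alg:EOR_alg}: at every iteration the algorithm appends the element minimizing $|\delta(\cdot)|$, so each group switch perturbs the running slack by one increment $q_A$ or $q_B$, and the sorted (decreasing) order of $\PRPRankingGroup{A}$ and $\PRPRankingGroup{B}$ governs how these increments accumulate over the $t$ steps by which $\EORanking$'s split trails that of $\PRPRanking$. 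Tracking the sign of the slack across those selections and using that the greedy never switches groups unless doing so reduces $|\delta|$ is what I would rely on to control $t$ by the accumulated $\delta(\EORanking_k)$; this is exactly where the specific greedy rule — as opposed to an arbitrary EOR-fair merge of $\PRPRankingGroup{A}$ and $\PRPRankingGroup{B}$ — is indispensable, and I expect it to be the delicate heart of the argument.

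Finally I would assemble the pieces: $c(\text{Principal}\mid\EORanking_k)-c(\text{Principal}\mid\PRPRanking_k)=\frac{\mathrm{rel}(\PRPRanking_k)-\mathrm{rel}(\EORanking_k)}{\relCand(A)+\relCand(B)}\le\frac{t(p_A-p_B)}{\relCand(A)+\relCand(B)}\le\phi\,\delta(\EORanking_k)$, with the symmetric case $k_A>m_A$ absorbed by the absolute values in $\phi$ and in the slack. The two genuinely delicate points are (i) keeping the exchange argument tight using only the boundary relevances $p_A,p_B$ rather than the full omitted and inserted sets, and (ii) the slack conversion of the third paragraph; everything else is bookkeeping on the per-group counts and the definitions of $q_A,q_B$.
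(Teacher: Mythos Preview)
Your approach has a genuine gap at its very first step: you benchmark $\EORanking_k$ against the \emph{unconstrained} cost-minimizer $\PRPRanking_k$, whereas the theorem's ``cost optimal'' refers to the optimum of the $\delta(\EORanking_k)$-constrained selection problem (this is explicit in the ILP in the proof sketch and in Appendix~\ref{proof:Thm1}). The distinction is not cosmetic---the bound $\phi\,\delta(\EORanking_k)$ simply does \emph{not} hold against unconstrained PRP, so the ``slack conversion'' inequality $t\le \tfrac{2}{q_A+q_B}\,\delta(\EORanking_k)$ that you flag as the main obstacle is in fact false, and no amount of tracking the greedy trajectory will rescue it. Concretely, take $\PRPRankingGroup{A}=[0.9,0.9,0.1,0.1]$ and $\PRPRankingGroup{B}=[0.5,0.5,0.5,0.5]$, so $\relCand(A)=\relCand(B)=2$. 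Running Algorithm~\ref{alg:EOR_alg} to $k=4$ yields $(k_A,k_B)=(1,3)$ with $|\delta(\EORanking_4)|=0.3$, while PRP has $(m_A,m_B)=(2,2)$ and hence $t=1$. With $p_A=0.9$, $p_B=0.5$, $q_A+q_B=0.7$, your required inequality reads $1\le 6/7$, which fails; correspondingly $\phi\,\delta(\EORanking_4)=\tfrac{2}{4}\cdot\tfrac{0.4}{0.7}\cdot 0.3=\tfrac{3}{35}\approx 0.086$, yet the cost gap to PRP is $(2.8-2.4)/4=0.1>0.086$. By contrast, the $|\delta|\le 0.3$-constrained LP optimum here is $19.2/7\approx 2.743$, and the gap to it is exactly $3/35$---so the theorem is sharp at this instance, confirming that your comparator is too strong.

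The paper's route is entirely different: it writes the relaxed LP with the pair of constraints $\pm Q_{A,B}^TX\le\delta(\EORanking_k)$, constructs an explicit dual-feasible point from the boundary relevances $p_A,p_B$ (Lemmas~\ref{lemma:6.2}--\ref{lemma:6.3}), and reads off the duality gap as at most $2\lambda_{A,B}\,\delta(\EORanking_k)=\phi\,\delta(\EORanking_k)$. No count $t$ ever appears, and the greedy rule enters only through the trivial Lemma~\ref{lemma:6.1} (the EOR solution satisfies its own $\delta$ constraint)---indeed, Theorem~\ref{Thm:Thm1} as proved there holds for \emph{any} merge of $\PRPRankingGroup{A}$ and $\PRPRankingGroup{B}$, with $p_A,p_B$ the boundary elements and $\delta(\sigma_k)$ the resulting slack; the specific greedy choice only matters for bounding $\delta$ by $\delta_{\max}$ in Theorem~\ref{thm:thm_bound}. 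So the place you expected the greedy to be ``indispensable'' is precisely the place where it is not.
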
 
Proof Sketch:
We use linear duality to prove this theorem. To find a lower bound on the cost optimal ranking that satisfies the EOR fairness constraint, we formulate the corresponding Linear Integer Problem (ILP) for selecting the optimal top-k subset under the $\delta$-EOR constraint.
This leads to the following optimization problem, where $X \in \{0,1\}^n$ is the variable for whether the $i^{th}$ candidate was chosen or not, $P$ is the relevance probability for all candidates.

Minimize total cost as defined in Eq.~\eqref{eq:total_cost}
\begin{align} \min_{x \in \{0,1\}} & \quad 1- \frac{P^TX}{\relCand(A)+\relCand(B)}   \tag{ILP} \\ \text{s.t.}\quad & X^T\basis = k \quad  \tag{select up to $k$ candidates} \\ -\delta(\EORanking_k) \leq & \left( \frac{P\indicator_{A}}{\relCand(A)}-\frac{P\indicator_{B}}{\relCand(B)}\right)^T X \leq \delta(\EORanking_k)  \tag{EOR fairness from Eq.~\eqref{eq:EOR} must be satisfied $\forall k$} \end{align} 
We relax this ILP to a Linear Program (LP) by turning any integer constraints $x \in \{0,1\}$ in the primal into $0 \leq x \leq 1$. For the relaxed LP, we formulate its dual and construct a set of dual variables $\lambda$ corresponding to the solution from the EOR Algorithm. Using the dual value of the EOR solution and the relaxed LP solution, we obtain an upper bound of the duality gap. Since the upper bound on this duality gap is w.r.t.\ the relaxed LP solution, it is also an upper bound for the optimal ILP solution.
We provide a complete proof of the theorem and associated lemmas in \Cref{proof:Thm1}. \qed

Note that $\phi$ depends only on the relevance probabilities of the last elements selected from each group by the EOR Algorithm in the $k^{th}$ position. Furthermore, note that the solution of Algorithm~\ref{alg:EOR_alg} is the exact optimum for any $k$ where the unfairness $\delta(\EORanking_k)$ is zero, indicating that any suboptimality of the EOR algorithm is merely due to some (presumably unavoidable) discretization effects. 

While the previous theorem characterized cost optimality, the following Theorem~\ref{thm:thm_bound} shows that the magnitude of unfairness $\delta(\EORanking_k)$ is bounded by some $\delta_{max}$, providing an a priori approximation guarantee for both the amount of unfairness and the cost optimality of Algorithm~\ref{alg:EOR_alg}. 
\begin{theorem}[Global Cost and Fairness Guarantee]\normalfont \label{thm:thm_bound}
     Algorithm~\ref{alg:EOR_alg} always produces a ranking $\EORanking$ that is at least $\phi\delta_{max}$ cost optimal for any $k$, with $\delta_{max}=\frac{1}{2}\left(\frac{\PRPRankingGroup{A}[1]}{\relCand(A)}+\frac{\PRPRankingGroup{B}[1]}{\relCand(B)}\right)$.
\end{theorem}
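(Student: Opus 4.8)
The plan is to obtain Theorem~\ref{thm:thm_bound} from Theorem~\ref{Thm:Thm1} by proving one a-priori bound: every prefix produced by Algorithm~\ref{alg:EOR_alg} satisfies $|\delta(\EORanking_k)| \le \delta_{max}$ with $\delta_{max} = \tfrac12\bigl(\tfrac{\PRPRankingGroup{A}[1]}{\relCand(A)} + \tfrac{\PRPRankingGroup{B}[1]}{\relCand(B)}\bigr)$. Given this key lemma, the theorem is immediate: Theorem~\ref{Thm:Thm1} already guarantees that $\EORanking_k$ is $\phi\,\delta(\EORanking_k)$ cost optimal for every $k$, and since $\phi\ge 0$ the bound is monotone in the slack, so we may replace the instance-specific $\delta(\EORanking_k)$ by the uniform $\delta_{max}$, which — unlike $\delta(\EORanking_k)$ — is determined before running the algorithm, depending only on the top relevance probability in each group and the totals $\relCand(A),\relCand(B)$.

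To prove the key lemma I would track the two ``coordinates'' $\alpha_k := \relCand(A\mid\EORanking_k)/\relCand(A)$ and $\beta_k := \relCand(B\mid\EORanking_k)/\relCand(B)$, so that $\delta(\EORanking_k) = \alpha_k-\beta_k$ by Eq.~\eqref{eq:EOR_compute}, and induct on $k$ to show $|\alpha_k-\beta_k|\le\delta_{max}$. The base case $k=0$ is trivial since $\alpha_0=\beta_0=0$. For the step, note that because each group's list is sorted in decreasing order of $\prob(r_i|\Data)$, adding the next available candidate from group A increases $\alpha$ by some $a$ with $0\le a\le a_1:=\PRPRankingGroup{A}[1]/\relCand(A)$, and adding the next from B increases $\beta$ by some $b$ with $0\le b\le b_1:=\PRPRankingGroup{B}[1]/\relCand(B)$; observe $\delta_{max}=(a_1+b_1)/2$. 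If only one group still has candidates — say B is exhausted, so $\beta_k=1$ thereafter — then $\delta(\EORanking_k)=\alpha_k-1\le 0$ and $|\delta(\EORanking_k)|=1-\alpha_k$ is non-increasing in $k$, so the bound is preserved from the induction hypothesis.

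In the main case both groups still have candidates and Algorithm~\ref{alg:EOR_alg} takes whichever move minimizes $|\delta|$. Assume w.l.o.g.\ $\delta_{k-1}:=\delta(\EORanking_{k-1})\ge 0$ (the case $\delta_{k-1}<0$ is symmetric under swapping $A\leftrightarrow B$). The two candidate values are $\delta_{k-1}+a$ and $\delta_{k-1}-b$. If $b\le 2\delta_{k-1}$, then $|\delta_{k-1}-b|\le\delta_{k-1}\le\delta_{max}$ by the induction hypothesis, so the greedy choice does at least as well. If instead $b>2\delta_{k-1}$, then $\delta_{k-1}+a$ is nonnegative and increasing in $\delta_{k-1}$ while $b-\delta_{k-1}=|\delta_{k-1}-b|$ is positive and decreasing in $\delta_{k-1}$, so the greedy choice attains $|\delta_k|=\min(\delta_{k-1}+a,\;b-\delta_{k-1})\le\tfrac{(\delta_{k-1}+a)+(b-\delta_{k-1})}{2}=\tfrac{a+b}{2}\le\tfrac{a_1+b_1}{2}=\delta_{max}$. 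Either way $|\delta_k|\le\delta_{max}$, closing the induction.

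The main obstacle is precisely this ``overshoot'' regime $b>2\delta_{k-1}$: there the greedy step cannot keep $|\delta|$ non-increasing because the only in-bounds option jumps past zero, so one must instead exploit that the algorithm takes the \emph{minimum} of the two options and bound that minimum by averaging — this is exactly where the factor $\tfrac12$ in $\delta_{max}$ originates, and where the sorted (PRP) order is essential so that later increments never exceed $a_1$ or $b_1$. Secondary care is needed for the ``one group exhausted'' phase, the $\delta_{k-1}<0$ symmetry, and for reading the final statement per-$k$: $\phi$ in Theorem~\ref{Thm:Thm1} depends on $k$ only through the last-selected probabilities $p_A,p_B$, and ``$\phi\,\delta_{max}$ cost optimal for any $k$'' means that the instance slack has been bounded uniformly by $\delta_{max}$.
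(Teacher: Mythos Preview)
Your proposal is correct and follows essentially the same approach as the paper: reduce to the key lemma $|\delta(\EORanking_k)|\le\delta_{max}$ by induction on $k$ (with the same treatment of the one-group-exhausted phase), then invoke Theorem~\ref{Thm:Thm1}. Your inductive step is slightly slicker --- you bound the greedy choice directly via $\min(x,y)\le(x+y)/2$, whereas the paper argues that if adding from one group would overshoot $\delta_{max}$ then adding from the other cannot --- but both case splits yield the same $\tfrac12(a_1+b_1)$ bound.
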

 
Proof Sketch:
We show via an inductive argument that according to the EOR algorithm, minimizing $\left| \delta(\EORanking_k) \right|$ at every $k$ ensures that the resultant EOR ranking always satisfies $\delta(\EORanking_k) \leq \frac{1}{2}\left(\frac{\PRPRankingGroup{A}[1]}{\relCand(A)}+\frac{\PRPRankingGroup{B}[1]}{\relCand(B)}\right)$, that is bounded by the average of the relevance proportions from the first two elements considered in the selection from group A and B. We denote this global fairness guarantee by $\delta_{max}$. Using $\phi$ from Theorem~\ref{Thm:Thm1} the cost guarantee is given by \\
$\phi \delta_{max} =\frac{1}{\relCand(A)+\relCand(B)} \left| \frac {p_{A}-p_{B} }{q_{A}+q_{B}}\right| \left(\frac{\PRPRankingGroup{A}[1]}{\relCand(A)}+\frac{\PRPRankingGroup{B}[1]}{\relCand(B)}\right).$
Further, we show that if the EOR algorithm selects all the elements from one group at some position $k$, then selecting the remaining elements from the other group satisfies the $\delta_{max}$ constraint.
We provide a complete proof of this theorem in \Cref{proof:Thm2}. \qed

We now compare EOR with the Uniform ranking policy and analyze positions $k$ with $\delta=0$ to avoid discretization effects. 
\begin{proposition}[Costs from EOR vs. Uniform Policy]
\normalfont \label{prop:uniform_eor}
    The EOR ranking never has higher costs to the groups and total cost to the principal as compared to the Uniform Policy, for those $k$ where $\delta(\sigma_k)=0$. 
\end{proposition}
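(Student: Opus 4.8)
The plan is to evaluate both policies' costs in closed form at the relevant prefixes and reduce the claim to one elementary inequality. First I would handle $\Uniform$: since a uniformly random ranking places every candidate $i$ into the top $k$ with the same probability $\prob(i \in \UniformRanking_k) = k/n$, where $n$ is the total number of candidates, Eq.~\eqref{eq:expected_rel_policy} gives $\relCand(g | \Uniform_k) = \frac{k}{n}\,\relCand(g)$ for $g \in \{A,B\}$. Substituting this into the last line of Eq.~\eqref{eq:subgroup_cost} and into Eq.~\eqref{eq:total_cost} (in the latter, factoring out $1-\tfrac{k}{n}$ from the numerator) shows that all three costs coincide:
\[
c(A | \Uniform_k) = c(B | \Uniform_k) = c(\text{Principal} | \Uniform_k) = 1 - \tfrac{k}{n}.
\]

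Next I would show that at any prefix $k$ with $\delta(\EORanking_k) = 0$ the three EOR costs also collapse to a single value. By Eq.~\eqref{eq:EOR_compute}, $\delta(\EORanking_k) = 0$ means $\frac{\relCand(A | \EORanking_k)}{\relCand(A)} = \frac{\relCand(B | \EORanking_k)}{\relCand(B)}$; write $\rho_k$ for this common value. Then $c(A | \EORanking_k) = c(B | \EORanking_k) = 1 - \rho_k$ by Eq.~\eqref{eq:subgroup_cost}, and since $\sum_{i}(1 - \prob(i\in\EORanking_k))\prob(r_i=1|\Data) = (1-\rho_k)\big(\relCand(A)+\relCand(B)\big)$, Eq.~\eqref{eq:total_cost} gives $c(\text{Principal} | \EORanking_k) = 1 - \rho_k$ as well. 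Comparing with the previous display, the proposition reduces to the single inequality $\rho_k \ge k/n$.

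The crux is this last inequality, and here I would use the structure of Algorithm~\ref{alg:EOR_alg}. Let $k_A = |A \cap \EORanking_k|$ and $k_B = |B \cap \EORanking_k|$, so $k_A + k_B = k$ and $|A| + |B| = n$. Because the algorithm consumes each group's list in PRP (decreasing-probability) order and never reorders candidates within a group, the group-$g$ members of $\EORanking_k$ are exactly the $k_g$ highest-probability candidates of group $g$, so $\relCand(g | \EORanking_k)$ is the sum of the top $k_g$ relevance probabilities in group $g$. The average of those top $k_g$ probabilities is at least the $k_g$-th largest, which is in turn at least the average of the remaining $|g| - k_g$ probabilities; hence the top-$k_g$ average is at least the overall group average, i.e.
\[
\rho_k = \frac{\relCand(g | \EORanking_k)}{\relCand(g)} \;\ge\; \frac{k_g}{|g|}, \qquad g \in \{A,B\}.
\]
Cross-multiplying both instances gives $\rho_k\,|A| \ge k_A$ and $\rho_k\,|B| \ge k_B$; adding these yields $\rho_k\, n \ge k_A + k_B = k$, i.e.\ $\rho_k \ge k/n$. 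Substituting back, $c(\cdot | \EORanking_k) = 1 - \rho_k \le 1 - k/n = c(\cdot | \Uniform_k)$ for each of the two groups and for the principal, which is the claim; the boundary case $k = 0$ gives cost $1$ on both sides, consistent with ``never higher.''

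I expect the main obstacle to be this third step: one must carefully invoke that $\EORanking$ is PRP-within-each-group to obtain the per-group bound $\rho_k \ge k_g/|g|$ — a short ``top-prefix average dominates the overall average'' computation — and then combine the two groups so that the aggregate bound $\rho_k \ge k/n$ survives even though $k_A/|A|$ and $k_B/|B|$ may individually fall on either side of $k/n$; the mediant-style addition above is what makes this go through. The remaining steps (the closed forms for $\Uniform$, the collapse of the EOR costs at $\delta = 0$) are routine bookkeeping, modulo the standing assumption $\relCand(A), \relCand(B) > 0$ that the cost definitions require.
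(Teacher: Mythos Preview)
Your proposal is correct and follows essentially the same route as the paper: both arguments observe that at $\delta(\EORanking_k)=0$ all three costs collapse to $1-\rho_k$, reduce the claim to $\rho_k \ge k/n$, derive the per-group inequality $\rho_k \ge k_g/|g|$ from the fact that the EOR algorithm consumes each group in PRP order, and then combine the two by cross-multiplying and adding. Your write-up is a bit more explicit about the uniform-policy side and the mediant-style addition, but the substance matches the paper's proof exactly.
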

We provide the proof of Proposition~\ref{prop:uniform_eor} in \Cref{proof:prop_uniform_eor}.
In summary, we have shown that Algorithm~\ref{alg:EOR_alg} is an efficient algorithm that computes rankings close to the optimal solution, making it a promising candidate for practical use.

\section{Extension to $G$ Groups} \label{sec:extension_multiple_groups}
In this section, we discuss the extension of the EOR algorithm beyond two groups. In particular, we consider the general case where a candidate belongs to one of G groups $ g \in [1 \cdots G]$. 
From \Cref{sec:cost_fairness}, we can generalize 
the cost burden to the principal similar to Eq.~\eqref{eq:total_cost}, taking all the groups into account for the normalization factor as follows
\begin{eqnarray}
    c(\text{Principal}|\pi_k) &=& \frac{\sum_{i}(1 - \prob(i \in \sigmaKPi)) \prob(r_i=1|\Data)}{\sum_{g=1}^{G} \relCand(g)} \label{eq:total_cost_multiple_groups}
    \nonumber
\end{eqnarray}
To generalize Algorithm~\ref{alg:EOR_alg} for selecting top $k$ candidates from multiple groups, we define $\delta(\sigma)$ as the EOR criterion that captures the gap between the group with the maximum accumulated relevance proportion and the group with the minimum accumulated relevance proportion,
\begin{eqnarray}
    \delta(\sigma) = \max_g \left\{\frac{\relCand(g|\sigma)}{\relCand(g)}\right\} - \min_g \left\{\frac{\relCand(g|\sigma)}{\relCand(g)}\right\} \label{eq:delta_multiple_groups}
\end{eqnarray}
The following selection rule then provides the selected group $g^{*}$ and candidate $l_{g^{*}}$ to append to the EOR ranking. 
\begin{eqnarray}
    l_g &=& \PRPRankingGroup{g}[1] \hspace{10pt} \forall g \in \{1 \cdot G\} \nonumber \\
    g^{*} &=& \argmin_{g \in [1..G]} \delta(\EORanking \cup \{l_g\})  ; \hspace{5pt} l_{g^{*}} = \PRPRankingGroup{g^{*}}[1] \label{eq:l_multiple_g}
\end{eqnarray}
Note that the above selection rule is a strict generalization of Algorithm~\ref{alg:EOR_alg} and it reflects the intuition of minimizing the gap in relevance proportions for all the groups.
It can be verified that the runtime complexity 
with selection rule according to Eqs.~\eqref{eq:delta_multiple_groups}, \eqref{eq:l_multiple_g} for a constant number of groups $G$ is $\bigO(n\log{n} + Gn)$.
Furthermore, we can extend the cost-approximation guarantee to the multi-group case.

\begin{theorem}[Global Cost and Fairness Guarantee for multiple groups] \normalfont \label{corr:corr_multiple_groups}
    The EOR rankings are cost optimal up to a gap of $\phi\delta(\EORanking_k)$ for $G$ groups, with $\delta(\EORanking_k)$ bounded by $\delta_{max}$, such that,
    \begin{eqnarray}
      \phi &=& \frac{2}{(G-1)\sum_{g=1}^G \relCand(g)}  \left(\sum_{\{A,B\}}^{} \left|  \frac{p_A-p_B}{q_A+q_B}\right| \right) ~~~\forall k \nonumber \\
        \delta_{max} &=& \max_{g} \left\{\frac{\PRPRankingGroup{g}[1]}{\relCand(g)}\right\} 
        \nonumber
    \end{eqnarray}
    where $\{A, B\}$ are all $G$ choose $2$ possible pairs of groups.
\end{theorem}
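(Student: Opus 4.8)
The plan is to lift the two--group duality argument behind Theorems~\ref{Thm:Thm1} and~\ref{thm:thm_bound} to $G$ groups, using the elementary fact that the max--min criterion~\eqref{eq:delta_multiple_groups} is a conjunction of pairwise gaps: writing $\rho_g(\sigma)=\relCand(g\mid\sigma)/\relCand(g)$, the inequality $\max_g\rho_g-\min_g\rho_g\le\delta$ holds iff $|\rho_A(\sigma)-\rho_B(\sigma)|\le\delta$ for each of the $\binom{G}{2}$ pairs $\{A,B\}$. With this observation, the optimal EOR--constrained top-$k$ selection is the integer program that minimizes the multi--group cost in~\eqref{eq:total_cost} (normalized by $\sum_g\relCand(g)$), subject to the cardinality constraint $X^\top\basis=k$ and, for every pair $\{A,B\}$, the two--sided constraint $-\delta(\EORanking_k)\le\big(\tfrac{P\indicator_A}{\relCand(A)}-\tfrac{P\indicator_B}{\relCand(B)}\big)^\top X\le\delta(\EORanking_k)$ --- exactly the ILP in the proof of Theorem~\ref{Thm:Thm1} with one EOR block per pair.

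For the cost guarantee, I would relax this ILP to an LP ($0\le x\le 1$), form its dual, and construct a feasible dual solution from the run of the selection rule~\eqref{eq:l_multiple_g}, generalizing the construction in \Cref{proof:Thm1}. The dual mass sits on the cardinality constraint and on the pairwise EOR constraints, with per--pair weights governed by the last (marginal) candidates $p_g=\PRPRankingGroup{g}[k_g]$ selected from each group, in direct analogy with the roles of $p_A,p_B$ in the binary case; the contribution of pair $\{A,B\}$ takes the form $|p_A-p_B|/(q_A+q_B)$ with $q_g=p_g/\relCand(g)$. Because every group lies in exactly $G-1$ pairs, aggregating these contributions and renormalizing yields $\phi=\frac{2}{(G-1)\sum_g\relCand(g)}\sum_{\{A,B\}}\big|\frac{p_A-p_B}{q_A+q_B}\big|$. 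Since $\EORanking$ restricted to the top $k$ is itself feasible for the ILP with tolerance $\delta(\EORanking_k)$, weak duality bounds its cost above the LP optimum --- and hence above the ILP optimum --- by $\phi\,\delta(\EORanking_k)$.

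For the a priori bound $\delta(\EORanking_k)\le\delta_{max}$ I would induct on $k$, with $\delta_{max}=\max_g\PRPRankingGroup{g}[1]/\relCand(g)$; the base case $k\le 1$ is immediate. For the inductive step, observe first that appending any remaining candidate of group $g$ raises $\rho_g$ by at most $\PRPRankingGroup{g}[1]/\relCand(g)\le\delta_{max}$, because \Cref{alg:EOR_alg} always removes the current head of $\PRPRankingGroup{g}$ so the head never increases. Second, appending to a group that currently attains the minimum proportion leaves the gap at most $\max\{\text{old gap},\ \text{increment}\}\le\delta_{max}$: if the old maximum is still attained elsewhere the minimum can only rise, so the gap cannot grow; otherwise the appended group becomes the maximum and the new gap is at most its increment. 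Since the selection rule~\eqref{eq:l_multiple_g} picks the group minimizing the resulting gap, it does at least as well as this reference choice. The only gap in this argument is when the minimum--proportion group is exhausted, but an exhausted group has $\rho_g=1$, the largest value possible, so whenever the gap is positive the minimum is attained by a non--exhausted group; and when all groups are exhausted $k=n$ and the gap is $0$. Combining the two parts gives cost optimality within $\phi\,\delta_{max}$ for every $k$.

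The main obstacle is the dual construction together with the $(G-1)$ normalization: one has to verify that a single dual vector assembled from the marginal candidates of all $G$ groups is simultaneously feasible across every pairwise EOR block, and that the overlapping membership of groups in pairs collapses into precisely the factor $\frac{1}{G-1}$ rather than over- or under-counting each group's contribution. By contrast, the $\delta_{max}$ induction is essentially routine once the ``exhausted group has proportion one'' observation pins down which group the reference step appends to.
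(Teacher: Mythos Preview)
Your proposal is correct and follows essentially the same route as the paper: the paper also relaxes the ILP with $G(G-1)$ pairwise EOR constraints, constructs dual variables $\lambda_{A,B}=\frac{1}{(G-1)\sum_g\relCand(g)}\big[\frac{p_A-p_B}{q_A+q_B}\big]_+$ together with a single $\lambda_k$ and per-candidate $\lambda_i'$, and then verifies nonnegativity and feasibility by reducing each of the $G-1$ summands to the two-group case (their Lemmas in the multi-group appendix), arriving at exactly your $\phi$; the $\delta_{\max}$ bound is likewise proved by the same induction you outline, showing that appending to the current minimum-proportion group keeps the max--min gap below $\max_g \PRPRankingGroup{g}[1]/\relCand(g)$. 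The obstacle you flag---that a single $\lambda_k$ must be simultaneously consistent across all $G$ group-centric expressions so that feasibility holds for candidates in every group---is precisely the delicate step, and the paper asserts this equality (its Eq.~for $\lambda_k$ lists $G$ equivalent forms) and then uses whichever form matches the candidate's group in the feasibility check.
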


Proof Sketch: We extend the LP formed in Theorem~\ref{Thm:Thm1} to include $G(G-1)$ $\delta$ constraints and construct feasible dual variables from the EOR solution for each pair of groups. We then show that the duality gap is bounded by $\phi\delta(\EORanking_k)$ for a particular prefix $k$. 
Note that the $\phi$ bound for multi-group reduces to the one presented in Theorem~\ref{Thm:Thm1} for two groups. Finally, 
we present the global a priori bound on $\delta(\EORanking_k)$ as $\delta_{max}$, which is a strict generalization of the two groups case. We provide complete proof of this theorem in \Cref{proof:thm_6_1}. \qed

\section{Experimental Evaluation} \label{sec:experiments}
We now evaluate the EOR framework and algorithm empirically and compare against several baselines -- namely Demographic (Statistical) Parity ($\DP$) \cite{DBLP:journals/corr/YangS16a}, FA$^*$IR Ranking Principle ($\FS$) \cite{Zehlike_2017}, Probability Ranking Principle ($\PRP$) \cite{article_pr}, Thompson Sampling Policy ($\TS$) \cite{NEURIPS2021_63c3ddcc}, Uniform Policy ($\Uniform$), Disparate Treatment of Exposure ($\EXP$) \citep{10.1145/3219819.3220088}, and Fair Rank Aggregation ($\RA$) \citep{10.1145/3593013.3594085} with proportional representation of exposure. We discuss implementation details of these baselines in Appendix~\ref{baselines}. 

\subsection{Synthetic Data} 

We first present results on synthetic data where we can control the level of disparate uncertainty.
We report a) unfairness and b) effectiveness of rankings for each scenario. The unfairness metric is defined as the area under the curve for the EOR criterion, given by $\sum_{k=1}^{n}|\delta(\sigma_k)|$. To measure the effectiveness of rankings, we report the improvement in total cost over the expected total cost of $\Uniform$, computed as $\sum_{k=1}^{n} c(\text{Prinicpal}|\Uniform_k) - c(\text{Prinicpal}|\pi^{(.)}_k)$.

\begin{table*}[t]
\begin{imageonly}    \begin{minipage}[ht]{0.55\linewidth}
    \vspace{10pt}
    \centering
    \centering
    \begin{tabular}{ccccccc} 
    \toprule 
     & \multicolumn{3}{c}{Un-fairness $\hspace{1pt} \downarrow$}  
                & \multicolumn{3}{c}{Effectiveness $\uparrow$} \\
    \cmidrule(r){2-4}\cmidrule(r){5-7}
    $\pi$ \textbackslash Disp. Unc. & High & Medium & Low & High & Medium & Low\\
    \cmidrule(r){1-1}\cmidrule(r){2-4}\cmidrule(r){5-7} 
    $\EOR$ & 1.07 {\small $\pm$0.01} & 1.02 {\small $\pm$0.00} & 1.02 {\small $\pm$0.00} & 10.44 {\small $\pm$0.15}  & 11.89 {\small $\pm$0.04} &14.58 {\small $\pm$0.10}  \\
    $\DP$ & 11.09 {\small $\pm$0.38} & 6.02 {\small $\pm$0.07} & 2.42 {\small $\pm$0.20} & 10.07 {\small $\pm$0.20}  & 11.33 {\small $\pm$0.04} &14.49 {\small $\pm$0.11}  \\
    $\PRP$ & 15.41 {\small $\pm$0.69} & 7.68 {\small $\pm$0.13} & 2.63 {\small $\pm$0.17} & 12.11 {\small $\pm$0.20}  & 12.00 {\small $\pm$0.02} &14.62 {\small $\pm$0.09}  \\
    $\TS$ & 11.77 {\small $\pm$0.57} & 4.96 {\small $\pm$0.07} & 4.49 {\small $\pm$0.45} & 7.66 {\small $\pm$0.04}  & 9.62 {\small $\pm$0.06} &12.81 {\small $\pm$0.69}  \\
    $\Uniform$ & 5.96 {\small $\pm$0.13} & 5.80 {\small $\pm$0.00} & 6.49 {\small $\pm$0.09} & 0.00 {\small $\pm$0.00}  & 0.00 {\small $\pm$0.00} &0.00 {\small $\pm$0.00}  \\
    $\EXP$ & 9.23 {\small $\pm$0.77} & 5.62 {\small $\pm$0.01} & 3.26 {\small $\pm$0.62} & 11.59 {\small $\pm$0.23}  & 11.97 {\small $\pm$0.03} &14.62 {\small $\pm$0.09}  \\
    $\RA$ & 13.97 {\small $\pm$0.71} & 6.57 {\small $\pm$0.16} & 2.40 {\small $\pm$0.00} & 12.02 {\small $\pm$0.19}  & 12.00 {\small $\pm$0.02} &14.60 {\small $\pm$0.00}  \\
    $\FS$ & 13.33 {\small $\pm$0.70} & 7.04 {\small $\pm$0.16} & 2.95 {\small $\pm$0.17} & 11.98 {\small $\pm$0.20}  & 12.00 {\small $\pm$0.02} &14.62 {\small $\pm$0.09}  \\
    \bottomrule
    \end{tabular} 
    \end{minipage}
    \hfill
    \begin{minipage}[ht]{0.38\linewidth}
    \raggedleft
      \hspace*{12pt}\includegraphics[width=0.85\columnwidth, trim = 0.15cm 0.2cm 0.2cm 0.2cm, clip] {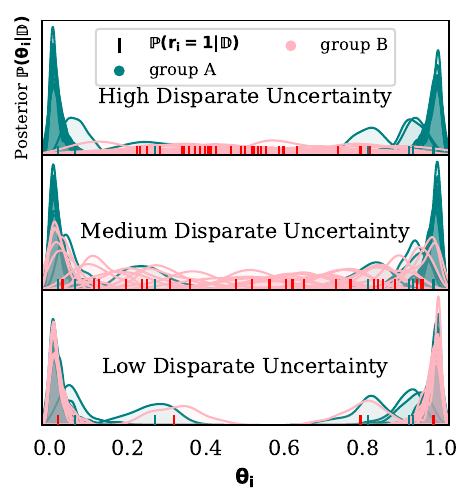}\hspace*{-12pt} \vspace{-35pt} 
    \end{minipage}\end{imageonly}
    \vspace{5pt}
    \caption{\rightskip170pt Left: \normalfont{ Effect of varying disparate uncertainty on Synthetic Dataset, \textbf{Right:} Posterior distribution and expected probabilities of relevance shown for a sample from each of high, medium, and low uncertainty setting.}}
    \label{tab:vary_disp_unc}
\end{table*}

\subsubsection{How does $\EOR$ compare against the baselines under varying amounts of disparate uncertainty?}
\Cref{tab:vary_disp_unc} (left) reports unfairness and effectiveness for $\EOR$ and the baselines in terms of mean and standard error over $100$ simulations, while \Cref{tab:vary_disp_unc} (right) demonstrates the posterior distribution formed by sampling an instance of each of high, medium and low disparate uncertainty settings. These posterior distributions similar to \Cref{fig:posterior} are for illustrative purposes since only the expected probability of relevance $p_i$ is used for rankings (refer to \Cref{sec:posterior_desc}). 
The different disparate uncertainty settings are generated synthetically to demonstrate how ranking policies behave if, for example, the Principal collects more data for group B thus reducing the disparate uncertainty among groups. Note, how in the low disparate uncertainty setting, the sharp $p_i$ (close to 0 or 1), would make the identification of relevant candidates easy for both groups. The synthetic generation involves sampling $p_i$ from sharp and flat distributions for group A and B respectively and gradually increasing the sharpness of $p_i$ for group B (implementation details in \Cref{synthetic_extension}).

As predicted by theory, $\EOR$ maintains low unfairness at all levels of disparate uncertainty, outperforming all the baselines $\PRP$, $\DP$, $\TS$, $\EXP$, $\RA$, and $\FS$. Note that $\EOR$ even outperforms the uniform policy $\Uniform$, since any individual ranking drawn from $\Uniform$ is likely to be unfair. In terms of effectiveness, the theoretically optimal skyline is given by $\PRP$. Across all levels of disparate uncertainty, $\EOR$ is at least competitive with the other baselines, indicating that the EOR fairness does not impose a disproportionate cost of fairness for the Principal.

Note how the gap in the unfairness between $\EOR$ and all other ranking policies is largest when disparate uncertainty is highest. At low levels of disparate uncertainty, $\EOR$ is still more fair as compared to other ranking policies (though the gap in unfairness is smaller) and the effectiveness of $\EOR$ is almost the same as that of $\PRP$.

\subsubsection{At which positions in the rankings do the policies incur unfairness?}
While the previous table summarized unfairness across the whole ranking, \Cref{fig:synthetic_main_disp_unc} (left) provides more detailed insights into how unfairness accumulates across positions in the ranking. The only method that is systematically fair across all positions $k$ is $\EOR$, keeping the unfairness $\delta(\sigma_k)$ from \Cref{def:eor_constraint} close to zero everywhere in the ranking. The baselines generally start accumulating unfairness towards one group right from the top of the ranking. Their unfairness only decreases once they run out of viable candidates from the group they prefer. The only exception is $\Uniform$, here for a specific ranking with median $\sum_{k=1}^{n}|\delta(\UniformRanking_k)|$. However, rankings from $\Uniform$ tend to stray much further from zero than the $\EOR$ ranking. Additional results for the medium and low disparate uncertainty settings in \Cref{fig:high_medium_low_disp_unc} of Appendix~\ref{synthetic_extension} further support these findings.

\begin{figure*}[t!]
\centering
    \begin{subfigure}[t]{1.0\linewidth}
    \centering
    \includegraphics[width=1.0\textwidth]
    {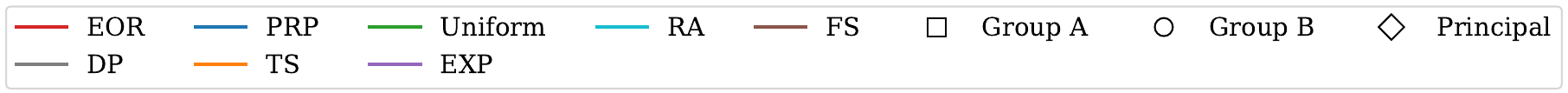}
    \end{subfigure}
    \begin{subfigure}[t]{1.0\linewidth}
        \centering
        \includegraphics[width=1.0\textwidth, trim = 0.0cm 0.2cm 0.0cm 0.2cm, clip]{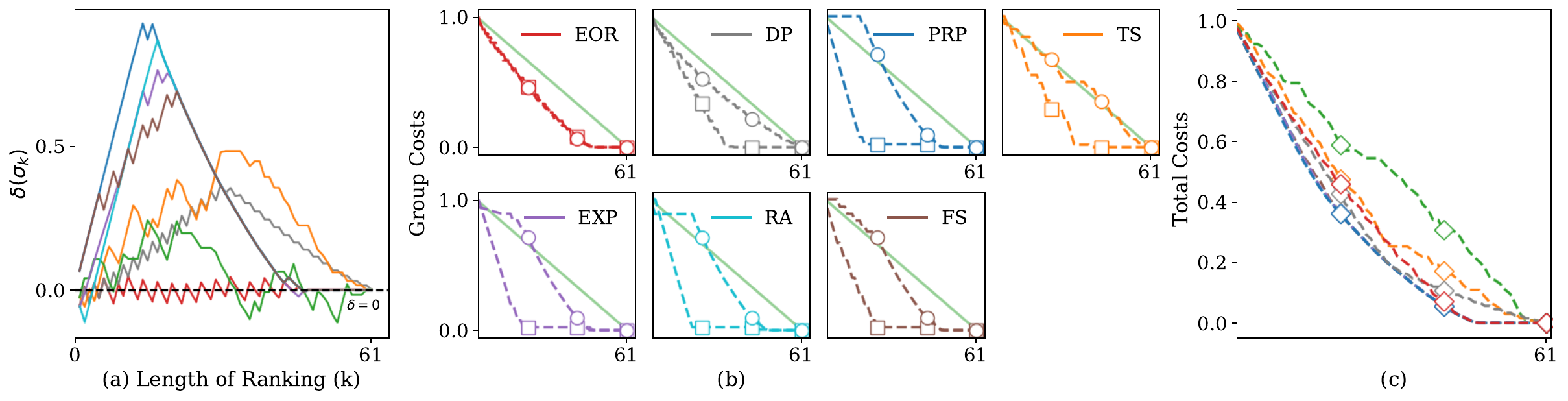}
    \end{subfigure}
  \vspace*{-6mm}
  \caption{ Left: \normalfont{EOR criterion $\delta(\sigma_k)$, \textbf{Middle}: group costs according to \eqref{eq:subgroup_cost}, \textbf{Right}: the principal's total cost according to \eqref{eq:total_cost} of the ranking policies for the synthetic dataset with high disparate uncertainty shown in top right of \Cref{tab:vary_disp_unc}. Group A consists of 30 candidates with sharp probabilities with $p_i \sim \text{Beta}(1/20, 1/20)$. This provides $\relCand(A)=14.96$ expected number of relevant candidates. Group B also has similar candidates, in particular, it has 31 candidates, with relatively flat probabilities $p_i \sim \text{Beta}(5,5)$, providing $\relCand(B)=14.94$ expected number of relevant candidates.}}
\label{fig:synthetic_main_disp_unc}
\vspace{-3mm}
\end{figure*}

\subsubsection{How do the ranking policies distribute the costs between the stakeholders?}
In \Cref{fig:synthetic_main_disp_unc} (middle) we investigate how the ranking policies distribute the cost $c(g|\pi_k)$ from Eq.~\eqref{eq:subgroup_cost} between group A and group B.
It shows that only $\EOR$ has an equal cost to both groups across the whole ranking, which can be seen from the overlapping cost curves for both groups. Furthermore, the cost is substantially lower for both groups than their expected cost under the uniform policy (diagonal line).\Cref{fig:synthetic_main_disp_unc} (right) shows the total cost to the principal, and again $\EOR$ is competitive with the baselines. 

All other baselines incur substantial disparate costs to the groups, some even worse than the uniform lottery. 
In particular, $\DP$ selects the candidates alternately between the two groups since group sizes are relatively similar, but this results in selecting a higher proportion of relevance from group A because the relevance probabilities are sharper for group A than for B. As a result, the cost burden is higher for group B. 
$\TS$ is fairer than $\PRP$, since it randomizes relevant candidates before sorting them in decreasing order of relevance, however being group oblivious, it still places an uneven cost burden. 

The exposure based policies $\EXP$, $\RA$ motivated by position bias in rankings also do not distribute the costs evenly.
$\EXP$ will stochastically allocate most of the top positions to candidates with sharp and high probabilities, close to 1.0 from group A, then to candidates of group B with flat and middle relevance probabilities, and finally the rest of the candidates from group A with sharp but low probabilities, close to 0.0 in the last positions. While this perfectly allocates exposure between group A and B over the full ranking of $61$ candidates, group B (the uninformative group) suffers from a high cost burden. Note how the direction of cost burden is opposite to the one $\EXP$ induced in the example of \Cref{fig:expected_probs}, where group B was smaller in size to group A.

\begin{figure*}[t!]
\begin{subfigure}[t]{0.8\linewidth}
    \includegraphics[width=0.9\textwidth]
    {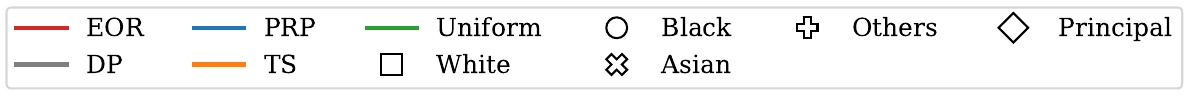}
    \label{fig:policy_legend_USCensus}
  \end{subfigure}
  \vspace*{-7mm}
  \begin{subfigure}[t]{1.0\linewidth}
        \centering
        \includegraphics[width=0.9\textwidth, trim = 0.2cm 0.2cm 0.2cm 0.2cm, clip]{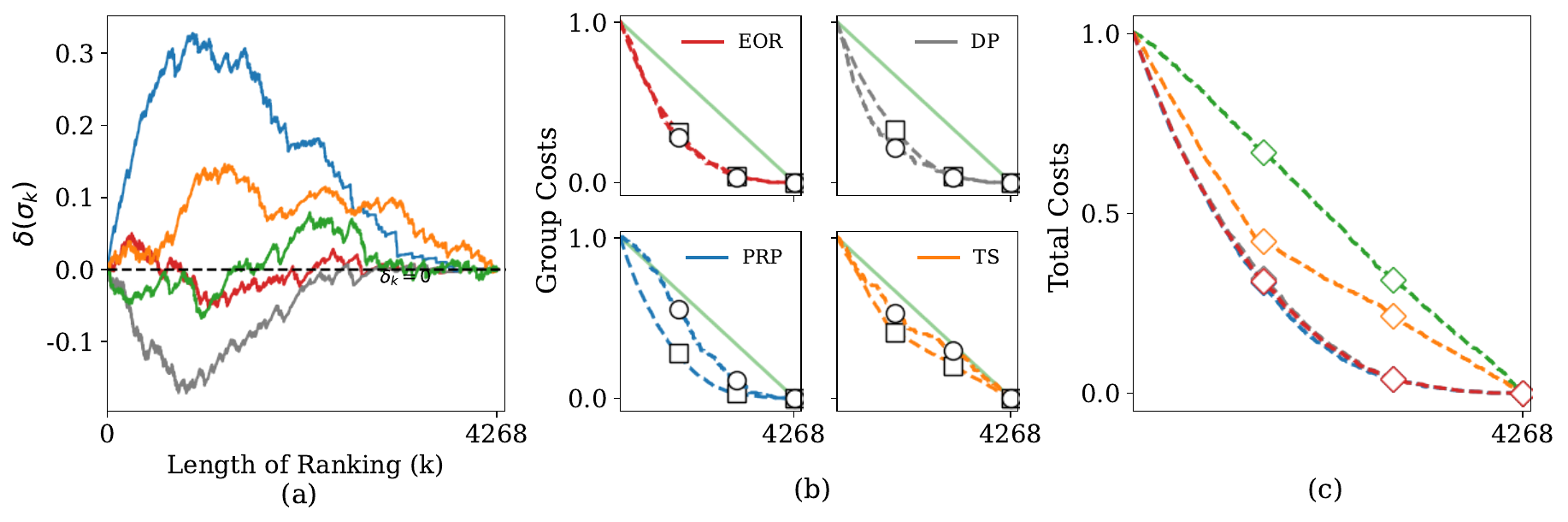}
        \phantomcaption{}
        \label{fig:USCensus_labels_AL_2_groups}
    \vspace{-2mm}
  \end{subfigure}
  \vspace{-2mm}
  \begin{subfigure}[t]{1.0\linewidth}
    \centering
    \includegraphics[width=0.905\textwidth, trim = 0.2cm 0.0cm 0.2cm 0.2cm, clip]{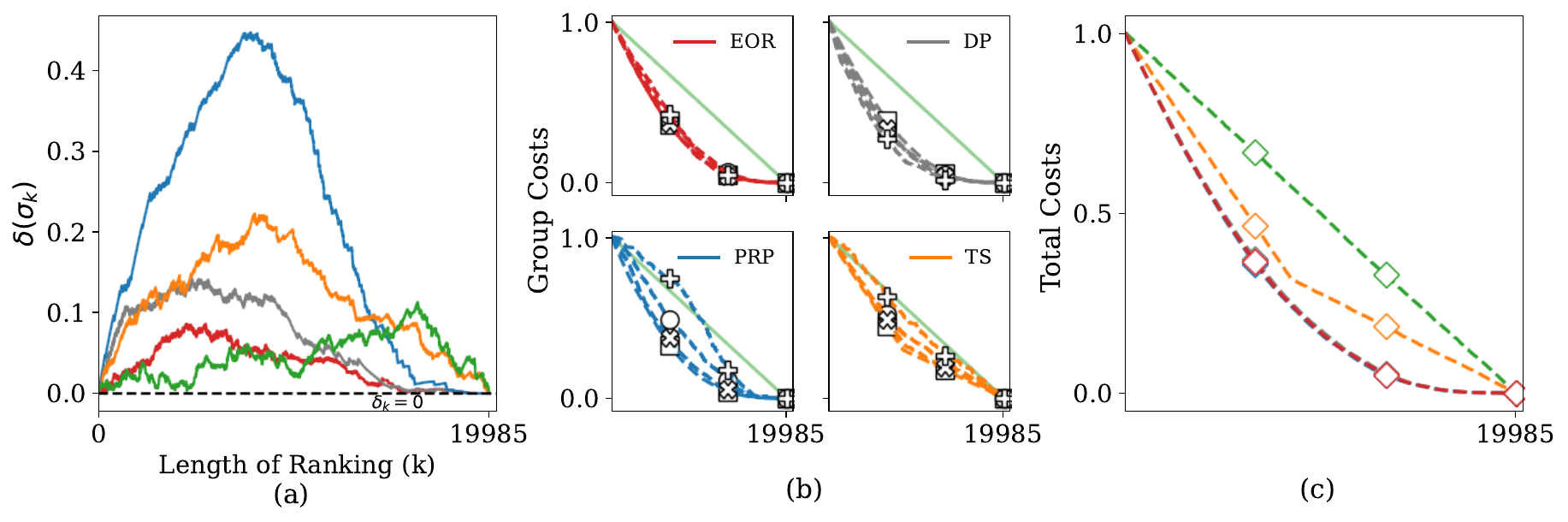}
    \phantomcaption{}
    \label{fig:USCensus_labels_4_groups}
  \end{subfigure}
  \vspace{-6mm}
  \caption{\normalfont{\textbf{US Census Dataset}: EOR criterion $\delta(\sigma_k)$ and cost of the ranking policies computed with true relevance labels from the test subset for the US Census dataset. 
  \textbf{Top: Two groups setting} using the White and Black/African American racial groups for the state of Alabama.  \textbf{Bottom: Multiple (four groups) setting} using White, Black/African American, Asian, and Other for the state of NY.}}
  \label{fig:USCensus_figs}
\vspace{-2mm}
\end{figure*}

\subsection{US Census Survey Data} \label{sec:UScensus_expts}
While the synthetic experiments provide insights into the behavior of ranking policies under varying conditions, we now investigate how far $\EOR$ can mitigate unfairness as it arises in real-world datasets where the relevance probabilities $\prob(r_i|\Data)$ are learned from data.
In particular, we consider the US Census Survey dataset \cite{NEURIPS2021_32e54441} for the year 2018 and the state of Alabama and New York, consisting of 22,268 and 103,021 records respectively. The task is to predict whether the income for an individual $>\$50K$ based on features such as educational attainment, occupation, class of worker etc. We use this task as a stand-in for some task where individuals receive a benefit from being evaluated positively. To get group-calibrated estimates of $\prob(r_i|\Data)$, we train a gradient boosting classifier followed by Platt Scaling on the validation subset of the data.
We evaluate the EOR criterion and costs on the test subset of these records. Full details for dataset pre-processing and training can be found in \Cref{USCensus_details}. Because these rankings are large (up to $\sim 20K$ size), $\EXP$ and $\FS$ are not computationally tractable. $\RA$ performs similarly to $\PRP$ and we include it in Appendix~\ref{USCensus_details} for completeness.
 
\subsubsection{How do the ranking policies compare when using learned probability estimates?}
To evaluate the two-group EOR algorithm, we first only rank individuals labeled as White and Black or African American. Figure~\ref{fig:USCensus_figs} (top) shows that EOR ranking is effective even with estimated probabilities. In particular, while the ranking algorithms only use estimated probabilities, the EOR criterion, and costs are evaluated on the true relevance labels from the test set. 
Nevertheless, $\EOR$ still evaluates $\delta$ close to zero and distributes costs among the stakeholders more evenly than the other baseline policies $\PRP$, $\DP$, and even $\Uniform, \TS$ for a specific ranking with median $\sum_{k=1}^{n}|\delta(\sigma_k)|$.    
Additional experiments in Appendix~\ref{USCensus_details} further confirm these findings.

\subsubsection{How does EOR Ranking perform for more than two groups?}
Figure~\ref{fig:USCensus_figs} (bottom) shows results on the US Census Dataset for four groups, again using estimated relevances for ranking but evaluating against the true relevance labels from the test dataset.
Note that for more than two groups, the EOR constraint defined according to \eqref{eq:delta_multiple_groups} will always be non-negative as it measures the absolute difference in relevance proportions between the groups that are furthest apart. We observe that similar to the results with two groups, the EOR ranking keeps the unfairness $\delta$ lower (close to zero) as compared to other policies in Figure~\ref{fig:USCensus_figs} (left). Additionally, $\EOR$ also distributes the costs evenly among all stakeholders for the generalized case of more than two groups, as noted by the overlapping of dashed lines for the four group costs (middle). Finally, $\EOR$ is competitive with the optimal $\PRP$ in terms of total cost for the principal.

\subsection{Amazon Shopping Audit} \label{sec:amazon_expts}  
In the final experiment, we investigate how the EOR framework can be used for auditing. To illustrate this point, we use a dataset of Amazon shopping queries \citep{reddy2022shopping}, 
which includes a baseline model for predicting the relevance of products given a search query. 
We further augment this dataset with logged rankings from the Amazon website as collected for the Markup report \citep{Yin2021}, which investigated Amazon's placement of its own brand products as compared to other brands based on star ratings, reviews etc. The Markup data consists of popular search query-product pairs along with logged rankings of these products on Amazon's platform, but it does not contain human-annotated relevance labels. 
We focus the audit on bias between the group of Amazon-owned brands (group A) or any other brand (group B). As the first step of the audit, we calibrate $p_i$ by fitting a Platt-scaling calibrator using validation data for both groups. 
Figure~\ref{fig:calibration_amazon} shows that the calibrated $p_i$ on the test dataset binned across 20 equal-sized bins, lies close to the perfectly calibrated line.
\begin{figure}[t!]
\centering
    \begin{subfigure}[b]{0.49\linewidth}
    \includegraphics[width=1.0\textwidth]{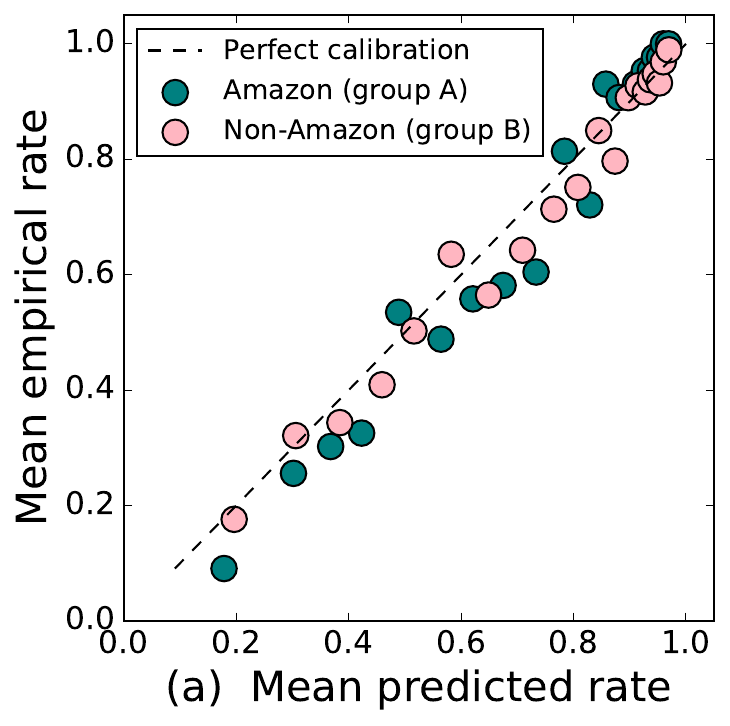}
    \phantomcaption{}
    \label{fig:calibration_amazon}
  \end{subfigure}
    \begin{subfigure}[b]{0.49\linewidth}
    \includegraphics[width=1.0\textwidth]{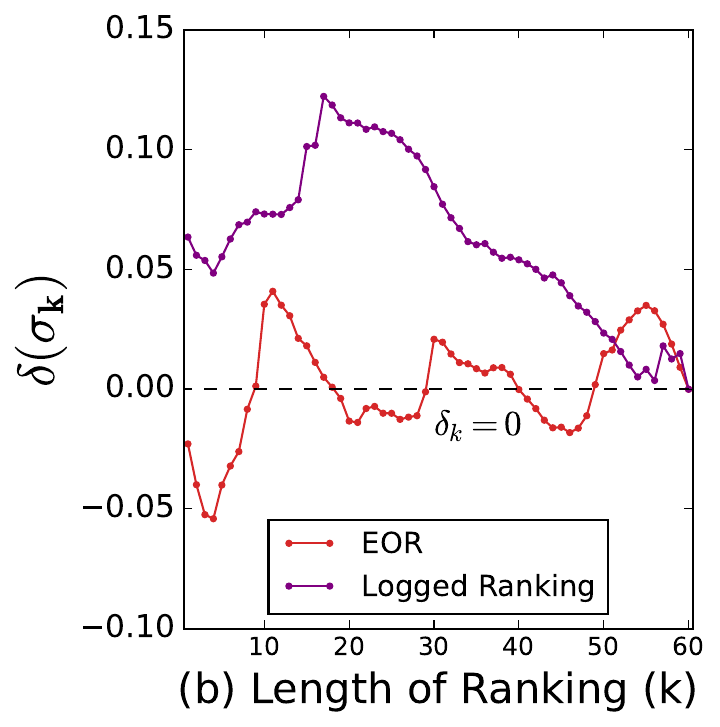}
    \phantomcaption{}
    \label{fig:markup_eo}
    \end{subfigure}
  \vspace*{-2mm}
    \caption[]{\textbf{Left: } \normalfont{Group-wise calibration of $\prob(r_i|\Data)$ for Amazon shopping queries on the test set according to the baseline model after Platt Scaling. 
    } \textbf{Right: } Fairness of logged Amazon rankings compared to EOR rankings in terms of $\delta(\sigma_k)$ averaged over queries. 
    }
  \label{fig:Amazon_expt}
  \vspace{-2mm}
  \end{figure}
As the second step of the audit, we use the Markup dataset with logged rankings\footnote{\href{https://github.com/the-markup/investigation-amazon-brands/tree/master}{https://github.com/the-markup/investigation-amazon-brands}} and compute $p_i$ using the calibrated baseline relevance prediction model. The EOR criterion \eqref{eq:EOR_compute} is averaged over queries for the logged rankings, and the EOR rankings are produced by Algorithm~\ref{alg:EOR_alg}. 
Figure~\ref{fig:markup_eo} shows that there exists a ranking $\EORanking$ that has $\delta(\EORanking_k)$ closer to zero for most prefix $k$.
The logged rankings from Amazon's platform show estimated $\delta(\sigma_k)$ that are farther away from zero for at least some prefixes of $k$, reflecting a potential favoring of Amazon brand products. A \nobreak limitation of this analysis is that unlike in a real audit where the auditor has access to the production model of $p_i$, our baseline model may be subject to hidden confounding, and thus does not provide conclusive evidence of unfairness. In particular, the production rankings may depend on other features beyond product titles (e.g, product descriptions, bullet points, star ratings, etc.). However, the analysis does demonstrate how the EOR criterion can be used for auditing, if the auditor is given access to the production ranking model to avoid confounding.
We provide further details in Figure~\ref{amazon_details} and our source code with experiment implementation can be found here.\footnote{\href{https://github.com/RichRast/DisparateUncertainty}{https://github.com/RichRast/DisparateUncertainty}} 
\section{Conclusion}
This paper studies the problem of disparate uncertainty across groups as a source of unfairness in ranking when these rankings are used as part of a human decision-making process. In particular, this paper introduces a framework that formalizes this unfairness by relating it both to a fair lottery and to the costs that an imperfect model imposes on the various stakeholders. Recognizing that it may be difficult to avoid disparate uncertainty in real-world models, the paper develops the EOR procedure to produce rankings that provably mitigate the effects of disparate uncertainty between groups. Beyond its strong theoretical guarantees, we find that the EOR method outperforms existing methods for fair ranking across a wide range of settings. Furthermore, we illustrate that the EOR criterion can also be used as a tool to audit a real-world system. We conjecture that this combination of theoretical grounding, computational efficiency, and strong empirical performance provides viable conditions for making the proposed framework and algorithm accessible for thoughtful use in practice.

\section{Ethical Considerations}
This work explicitly addresses the potentially negative societal impact of machine learning predictions that include disparities between groups in the context of ranking interfaces. However, as pointed out by previous research \citep{10.1145/3287560.3287598, Mkander2021EthicsBasedAO}, we do not prescribe distilling down the fairness of a system into a single metric -- the fairness criterion we propose. We emphasize that it is important to carefully consider the domain specifics and the particular situation where our method may be deployed. 

We also note that while our EOR algorithm does not worsen the fairness within each group (i.e., within group ordering is maintained), it doesn't improve within-group fairness either. Exploring this dichotomy of satisfying within and between group fairness simultaneously in the presence of differential uncertainty is an important open question.

\begin{acks}
This research was supported in part by NSF Awards IIS-2008139 and IIS-2312865. All content represents the opinion of the authors, which is not necessarily shared or endorsed by their respective employers and/or sponsors. We thank Kate Donahue, Marios Papachristou, Aaron Tucker, Sarah Dean, Luke Wang, Emily Ryu, Ashudeep Singh, Taran Pal Singh, and Woojeong Kim for helpful comments and discussions. We also thank the anonymous reviewers at the Epistemic AI, UAI workshop for helpful feedback.
\end{acks}

\bibliographystyle{ACM-Reference-Format}
\bibliography{sample-base}

\clearpage
\newpage

\appendix
\onecolumn
\section{Notation Summary}

\begin{align*}
    & n && \text{number of candidates}\\
    & i \in \{1, \cdots, n\} && \text{candidate}\\
    & G && \text{number of groups}\\
    & g \in \{1,2, \cdots G \}&& \text{group}\\
    & {k} && \text{ranking prefix}\\
    & S(g) && \text{size of group $g$}\\
    & \relCand(g) \in \real && \text{expected number of relevant candidates for group $g$}\\
    & \cline{1-2} \\
    & r_i \in \{0,1\} && \text{binary relevance of candidate $i$}\\
    & \theta_i \in [0,1] && \text{probability of relevance of candidate $i$}\\
    & \Data  && \text{historical data}\\
    & \prob(\theta_i|\Data)  && \text{posterior distribution}\\
    &p_i = \prob(r_i|\Data) \in [0,1] && \text{expected probability of relevance of candidate $i$}\\
    &P=(p_i)_{i \in \{1, \cdots, n\}} && \text{relevance probability vector}\\
    &X && \text{vector indicating whether candidate $i$ was selected}\\
    &\indicator_{g} \in \{0,1\}^{n} && \text{indicator if candidate $i$ belongs to group $g$}\\
    & \cline{1-2} \\
    &\pi && \text{policy}\\
    &\sigmaKPi && \text{top $k$ ranking }\sigma_k \sim \pi\\
    &\PRPRankingGroup{g}[i]&& i^{th} \text{ candidate in the PRP ranking of group g}\\
    & \delta(\sigma) && \text{EOR measure for ranking $\sigma$}
\end{align*}

\section{Extended Related Work} \label{sec:extended_related}
Our work complements and extends prior research on fairness in rankings \cite{DBLP:journals/corr/abs-2103-14000}. The classical fairness desiderata considered are variations of \emph{proportional representation} \citep{10.1145/2090236.2090255, DBLP:journals/corr/YangS16a}. Broadly, proportional representation ensures representation 
by group size in top $k$ selection or at every prefix $k$ of the ranking.
Other popular notions include diversity based constraints \citep{Celis2017RankingWF, doi:10.1089/big.2016.0054, inproceedings} like Rooney Rule and affirmative action that ensure representation of the designated disadvantaged group, and threshold based formulations \citep{Zehlike_2017, 10.1016/j.ipm.2021.102707, Wang2022ImprovingSP} that ensure a minimum number of candidates to be selected from the disadvantaged group. 

Another prominent class of fairness notions in rankings corresponds to \emph{exposure} based formulations. Exposure \citep{10.1145/3219819.3220088, 10.1145/3308560.3317595, 10.1145/3366424.3380048} quantifies the amount of attention allocated to candidates individually or from a particular group. These formulations include equity of exposure, disparate treatment of exposure that allocates exposure proportional to amortized relevance, and disparate impact of exposure that allocates exposure proportional to impact (e.g. economic impact of ranking) among other variations. See \citep{DBLP:journals/corr/abs-1805-01788} for a similar concept of equity of attention.
\emph{Proportional representation}, \emph{diversity constraints}, and \emph{exposure} are motivated by representation by group size, normative designation of disadvantaged group, and allocation of attention respectively. Our work, on the other hand, is motivated by unfairness due to differential uncertainty between groups and is grounded in the axiomatic fairness of a lottery system. 

Our problem setup involves aggregating candidates from groups and
while research on fair rank aggregation appears related, the goal there is much different. In particular, fair rank aggregation achieves maximum consensus accuracy when multiple voters rank all candidates subject to fairness constraints of group exposure \citep{10.1145/3593013.3594085} or p-fairness \citep{10.1145/3514221.3517865}. Work on multi sided fairness \citep{10.1145/3471158.3472260, DBLP:journals/corr/Burke17aa} similarly considers diversity constraints or exposure-based formulations.
Finally, while \citep{DBLP:journals/corr/abs-1805-01788, NEURIPS2021_63c3ddcc, 10.1145/3534678.3539353, DBLP:journals/corr/abs-2102-05996} propose an amortized notion of fairness, our work proposes a non-amortized fairness criterion at every position $k$ of the ranking.

Recently, there has been a growing interest in the study of fairness in rankings under uncertainty.
The classical desideratum in this literature studies the relation of group-wise calibration for fairness \citep{NIPS2017_b8b9c74a, kleinberg_et_al:LIPIcs.ITCS.2017.43, Flores2016FalsePF, DBLP:journals/bigdata/Chouldechova17, 10.1145/3531146.3533245}. Our work is orthogonal to this discussion. In particular, we only assume that calibrated probability of relevance is given and instead focus on how differential sharpness of probabilities can cause unfairness.
\cite{NEURIPS2021_63c3ddcc} introduced an approximate notion of fairness that is violated if the principal ranks candidates that appear more than a certain proportion of their estimated relevance distribution. One way to achieve this in expectation is through randomization of relevances drawn from the predictive posterior distribution. 
Other works have introduced methods that quantify uncertainty in rankings \cite{10.1145/3539597.3570469} to update and learn better estimates of relevances iteratively \cite{10.1145/3539597.3570474}. These works do not consider the unfairness caused due to differential uncertainty between groups. While methods that reduce uncertainty for all groups are needed, we also need to account for unfairness due to the existing disparate uncertainty that is unfortunately widespread in practical settings.

Another line of research focuses on statistical discrimination and the study of noisy estimates of relevances for selection problems \citep{10.2307/1806107, RePEc:pri:indrel:30a}.
This literature establishes that the differential accuracy of models causes unfairness \citep{Hashimoto2018FairnessWD, article, wilson2019predictive, pmlr-v81-buolamwini18a, tatman-2017-gender} for individuals based on their group membership. 
Recently, \citep{EMELIANOV2022103609, 10.1145/3442188.3445889} studied the role of affirmative action in the presence of differential variance between groups in rankings. Their method \citep{EMELIANOV2022103609} corrects the bias in noisy relevance estimates given the variance of the true relevance distribution.
Fairness in selection processes has also been extensively studied in the presence of group-based implicit bias \citep{kleinberg2018selection, 10.1145/3351095.3372858, 10.1145/3391403.3399482, 10.1145/3442188.3445930}, uncertainty in preferences \citep{shen2023fairness} and in the presence of noisy sensitive attributes \citep{mehrotra2022fair}. This line of research analyzes the effect of affirmative actions like the Rooney rule on the utility to the principal or how implicit bias affects the diversity of the selection set.

Our work is also motivated by Equality of opportunity framework, first introduced by \citep{NIPS2016_9d268236} in the classification setting. It has provided a compelling notion of balancing the cost burden among stakeholders \citep{10.1145/3097983.3098095, pmlr-v108-awasthi20a, DBLP:journals/bigdata/Chouldechova17}. For rankings, there has been some work in transferring the idea of equalized odds with learning a ranking function during training \citep{10.1145/3366424.3380048} to reduce disparate exposure or augmenting the training loss with regularizers that minimize costs for both groups \citep{DBLP:journals/corr/abs-2102-05996, 48758}. Our work extends this literature to introduce a framework connecting the unfairness in rankings due to the disparate uncertainty to the distribution of cost burden among stakeholders by anchoring on the fairness of random lottery.

\section{Proofs}
\label{Proofs:section 6}
\subsection{Proof of Theorem~\ref{Thm:Thm1}}
\label{proof:Thm1}
\begin{proof}
We use linear duality for proving this theorem. In order to find a lower bound on the cost optimal ranking that satisfies the EOR fairness constraint, we relax the corresponding Integer Linear Problem (ILP) to a Linear Program (LP) by turning any integer constraints $X \in \{0,1\}$ in the primal into $0 \leq X \leq 1$. For the relaxed LP, we formulate its dual and construct a set of dual variables $\lambda$ corresponding to the solution from the EOR Algorithm. With the dual solution of EOR and the relaxed LP solution, we obtain an upper bound of the duality gap. Since the upper bound on this duality gap is w.r.t.\ the relaxed LP, it will also be an upper bound for the optimal ILP.

We define the primal of the LP for finding a solution $X$ as follows
\begin{align} \max_{X \geq 0}  \quad f(X) &= \frac{P^TX}{\relCand(A)+\relCand(B)}  \tag{Primal} \label{primal_obj}\\ \text{s.t.} \quad X &\leq 1  \label{x_constraint}
    \\
    X^T\basis &\leq k \tag{select up to k elements} \label{k_constraint}
    \\
    Q_{A,B}^TX &\leq \delta(\EORanking_k) \label{delta_max_constraint1}
    \\
    Q_{B,A}^TX &\leq \delta(\EORanking_k) \label{delta_max_constraint2} \end{align} 
We define $Q_{A,B} \in \real^{n}$ where each element of $Q_{A,B}$ is $q_i(\indicator_{A} - \indicator_{B})_{i} $,  $q_{i \in g} = \frac{p_{i}}{\relCand(g)}$ and $Q_{A,B}= -Q_{B,A}$. Note that the \ref{primal_obj} objective is equivalent to minimizing the total cost = $1-\frac{P^TX}{\relCand(A)+\relCand(B)}$.

The first constraint \eqref{x_constraint} ensures valid values for $X$ (with corresponding dual variables $\lambda_i'$ ). The second constraint is for selecting $k$ candidates (dual variable $\lambda_k$ ) and the last two constraints \eqref{delta_max_constraint1} and \eqref{delta_max_constraint2} ensure that the ranking solution is EOR-fair optimal (dual variables $\lambda_{A,B}$, $\lambda_{B,A}$). The Dual LP is formed as follows

\begin{align} \min_{\lambda \geq 0} &\quad g(\lambda) = \delta(\EORanking_k)(\lambda_{A,B} + \lambda_{B,A}) + k\lambda_k + \sum_{i=1}^{n}\lambda_{i}' \tag{Dual} \\ \text{s.t.} &\quad Q_{A,B}(\lambda_{A,B}-\lambda_{B,A}) + \lambda_k + \lambda' \geq \frac{P}{\relCand(A)+\relCand(B)} \label{eq:dual_constraint} \end{align} 

We construct a feasible point of the dual from the EOR solution as follows. The key insight here is to reason w.r.t the last elements selected (or the first elements available if no element from the group has been selected) by the EOR Algorithm at prefix $k$ from each of the groups A and B, namely $k_A, k_B$ respectively. 

\begin{eqnarray} 
   \lambda_{A,B} &=& \frac{1}{\relCand(A)+\relCand(B)}\left[\frac{p_A-p_B}{q_A+q_B}\right]_{+} \label{eq:lambda_1} \\
    \lambda_{B,A} &=& \frac{1}{\relCand(A)+\relCand(B)}\left[-\left(\frac{p_A-p_B}{q_A+q_B} \right)\right]_{+} \label{eq:lambda_2}
\end{eqnarray}

Using \eqref{eq:lambda_1} and \eqref{eq:lambda_2} 
we know that only ever one of $\lambda_{A,B}$ or $\lambda_{B,A}$ is non zero. If $p_A\geq p_B$, then $\lambda_{A,B} \geq 0$ and $\lambda_{B,A} = 0$. Similarly, if $p_B\geq p_A$, then $\lambda_{B,A} \geq 0$ and $\lambda_{A,B} = 0$. 

We construct $\lambda_k$ and $\lambda_i'$ as follows
\begin{eqnarray} 
\lambda_k &=& \left[\frac{p_A}{\relCand(A)+\relCand(B)}-q_A(\lambda_{A,B} - \lambda_{B,A})\right] = \left[\frac{p_B}{\relCand(A)+\relCand(B)}-q_B(\lambda_{B,A} - \lambda_{A,B})\right] \label{eq:lambda_3}\\
\lambda_{i \in A}' &=& \left[\frac{p_i}{\relCand(A)+\relCand(B)}- \lambda_k -q_i(\lambda_{A,B} - \lambda_{B,A})\right]_{+} \label{eq:lambda'}\\
\lambda_{i\in B}' &=& \left[\frac{p_i}{\relCand(A)+\relCand(B)}- \lambda_k -q_i(\lambda_{B,A} - \lambda_{A,B})\right]_{+} \label{eq:lambda'_B}
\end{eqnarray}

We prove that the constructed dual variables $\lambda$ are non-negative in Lemma~\ref{lemma:6.2} and that $\lambda'=0$ for any element not selected in the EOR ranking. In Lemma~\ref{lemma:6.3}, we prove that the constructed dual variables $\lambda$ are feasible. Given the feasibility of dual variables, we analyze the \textbf{duality gap} given by

\begin{displaymath}
    g(\lambda^*)-f(X)
    =\delta(\EORanking_k)(\lambda_{A,B} + \lambda_{B,A}) + k\lambda_k + \sum_{i=1}^{n}\lambda_{i}' - \frac{P^TX}{\relCand(A)+\relCand(B)} 
\end{displaymath}

From Lemma~\ref{lemma:6.2},  $\lambda_i'=0$ for $i > k_A$, $\lambda_j'=0$ for $j> k_B$ , where $k_A$ elements are selected from group A, $k_B$ from group B by the EOR Algorithm and $k=k_A+k_B$. 
Substituting the values for $\lambda'$ from \eqref{eq:lambda'}, \eqref{eq:lambda'_B}, the duality gap is

\begin{eqnarray}
\nonumber
& = &\delta(\EORanking_k)(\lambda_{A,B} + \lambda_{B,A}) + k\lambda_k + \sum_{i=1}^{k_A}\left(\frac{p_i}{\relCand(A)+\relCand(B)}-\lambda_k-q_i(\lambda_{A,B} - \lambda_{B,A}) \right) \\
\nonumber
& + & \sum_{j=1}^{k_B} \left(\frac{p_j}{\relCand(A)+\relCand(B)}-\lambda_k-q_j(\lambda_{B,A} - \lambda_{A,B}) \right)- \frac{P^TX}{\relCand(A)+\relCand(B)} 
\end{eqnarray}

We know that $\sum_{i=1}^{k_A}\lambda_k+\sum_{j=1}^{k_B}\lambda_k = k\lambda_k$ and $P^TX = \sum_{i=1}^{k_A}p_i + \sum_{j=1}^{k_B}p_j$. 
Further, only one of $\lambda_{A,B}$ or $\lambda_{B,A}$ is non-negative according to \eqref{eq:lambda_1}, \eqref{eq:lambda_2}. 

If $\lambda_{A,B} \geq0$, then the duality gap can be written as 
\begin{displaymath}
    =\delta(\EORanking_k)\lambda_{A,B} -\sum_{i=1}^{k_A}q_i\lambda_{A,B}+\sum_{j=1}^{k_B}q_j\lambda_{A,B}
    =\lambda_{A,B} \left(\delta(\EORanking_k)- \left(\sum_{i=1}^{k_A}q_i-\sum_{j=1}^{k_B}q_j \right) \right)
\end{displaymath} 

Since we have $-\delta(\EORanking_k) \leq \sum_{i=1}^{k_A}q_i-\sum_{j=1}^{k_B}q_j \leq \delta(\EORanking_k)$ from Lemma~\ref{lemma:6.1}, 
\begin{equation} \label{eq:thm1_1}
    \text{Duality Gap}=\lambda_{A,B} \left(\delta(\EORanking_k)- \left(\sum_{i=1}^{k_A}q_i-\sum_{j=1}^{k_B}q_j \right) \right) \leq  2\lambda_{A,B} \delta(\EORanking_k)
\end{equation}

If  $\lambda_{B,A}\geq0$, then the duality gap can be written as
\begin{displaymath}
    =\delta(\EORanking_k)\lambda_{B,A} +\sum_{i=1}^{k_A}q_i\lambda_{B,A}-\sum_{j=1}^{k_B}q_j\lambda_{B,A}
    =\lambda_{B,A} \left(\delta(\EORanking_k)+(\sum_{i=1}^{k_A}q_i-\sum_{j=1}^{k_B}q_j)\right)
\end{displaymath}

and again, since $-\delta(\EORanking_k) \leq \sum_{i=1}^{k_A}q_i-\sum_{j=1}^{k_B}q_j \leq \delta(\EORanking_k)$ from Lemma~\ref{lemma:6.1},
\begin{equation} \label{eq:thm1_2}
    \text{Duality Gap}=\lambda_{B,A} \left(\delta(\EORanking_k)+(\sum_{i=1}^{k_A}q_i-\sum_{j=1}^{k_B}q_j)\right) \leq 2\lambda_{B,A} \delta(\EORanking_k)
\end{equation}

From Eqs.~\eqref{eq:lambda_1}, \eqref{eq:lambda_2}, \eqref{eq:thm1_1}, \eqref{eq:thm1_2} 
, the duality gap between EOR solution and the optimal solution is bounded by \[ \frac{2\delta(\EORanking_k)}{\relCand(A)+\relCand(B)}\left|\frac{p_A-p_B}{q_A+q_B}\right| \]

This proves that the EOR solution can only be ever as worse as $\phi\delta(\EORanking_k)$ when compared with the optimal solution, where $\phi =\frac{2}{\relCand(A)+\relCand(B)}\left|\frac{p_A-p_B}{q_A+q_B}\right|$
\end{proof}

\begin{lemma}
\label{lemma:6.1}
\normalfont
EOR ranking is $\delta(\EORanking_k)$ fairness optimal, implying that $-\delta(\EORanking_k) \leq \sum_{i=1}^{k_A}q_i-\sum_{j=1}^{k_B}q_j \leq \delta(\EORanking_k)$.

Since $\sum_{i=1}^{k_A}q_i-\sum_{j=1}^{k_B}q_j = \frac{\relCand(A|\sigma_k)}{\relCand(A)} - \frac{\relCand(B|\sigma_k)}{\relCand(B)}$, the lemma follows directly from the definition of $\delta(\sigma_k)$ in Eq.~\eqref{eq:EOR_compute} and the EOR ranking principle of choosing the candidate that minimizes $\delta(\sigma_k)$. 

\qed
\end{lemma}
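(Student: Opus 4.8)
The plan is to obtain the two-sided bound directly from the definition of $\delta(\sigma_k)$ in Eq.~\eqref{eq:EOR_compute} together with the structural invariant that Algorithm~\ref{alg:EOR_alg} never reorders candidates within a group. First I would unfold the $q$-notation: since $q_{i\in g}=\frac{p_i}{\relCand(g)}$, the partial sum $\sum_{i=1}^{k_A}q_i$ taken over the $k_A$ candidates drawn from group $A$ by the time the prefix has length $k$ equals $\frac{1}{\relCand(A)}\sum_{i\in A\cap\EORanking_k}p_i$. The point to pin down here is that these $k_A$ candidates are exactly the top-$k_A$ prefix of $\PRPRankingGroup{A}$: the algorithm only ever appends the current head $\PRPRankingGroup{g}[1]$ and then removes it, so the part of $\EORanking$ contributed by any group $g$ is always a PRP-prefix of that group. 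Granting this invariant, $\sum_{i\in A\cap\EORanking_k}p_i$ is by definition $\relCand(A|\EORanking_k)$, the expected number of relevant group-$A$ candidates in the top $k$, so $\sum_{i=1}^{k_A}q_i=\frac{\relCand(A|\EORanking_k)}{\relCand(A)}$, and symmetrically $\sum_{j=1}^{k_B}q_j=\frac{\relCand(B|\EORanking_k)}{\relCand(B)}$.

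Subtracting the two identities gives $\sum_{i=1}^{k_A}q_i-\sum_{j=1}^{k_B}q_j=\frac{\relCand(A|\EORanking_k)}{\relCand(A)}-\frac{\relCand(B|\EORanking_k)}{\relCand(B)}$, which is exactly the signed quantity produced by Eq.~\eqref{eq:EOR_compute} for $\sigma=\EORanking$. The final step is to note that the symbol $\delta(\EORanking_k)$ appearing in the LP constraints \eqref{delta_max_constraint1}--\eqref{delta_max_constraint2} and in Theorem~\ref{Thm:Thm1} denotes the magnitude of this attained slack; equivalently, the EOR ranking is $\delta$-EOR fair at prefix $k$ for $\delta$ equal to its own realized value (this is what ``fairness optimal'' encodes, and the greedy rule of Algorithm~\ref{alg:EOR_alg} picks at each step the group minimizing $\left|\delta(\EORanking\cup\{l_g\})\right|$). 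Hence $-\delta(\EORanking_k)\le\sum_{i=1}^{k_A}q_i-\sum_{j=1}^{k_B}q_j\le\delta(\EORanking_k)$, with equality on whichever side matches the sign of the difference, which is the claim.

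There is no genuine analytic obstacle: the only item needing care is the bookkeeping that identifies the order-indexed partial sum $\sum_{i=1}^{k_A}q_i$ with the set-indexed quantity $\frac{\relCand(A|\EORanking_k)}{\relCand(A)}$, and that reduces entirely to the within-group-order-preservation property of Algorithm~\ref{alg:EOR_alg}. Once that invariant is in place the lemma is a one-line substitution of definitions, which is presumably why the authors fold its justification into the statement.
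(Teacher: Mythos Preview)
Your proposal is correct and follows essentially the same approach as the paper, which folds the entire justification into the lemma statement itself. You are simply more explicit about the bookkeeping: you spell out the within-group order-preservation invariant needed to identify $\sum_{i=1}^{k_A}q_i$ with $\frac{\relCand(A|\EORanking_k)}{\relCand(A)}$, and you make explicit that $\delta(\EORanking_k)$ in the LP constraints is the magnitude of the signed slack from Eq.~\eqref{eq:EOR_compute}, whereas the paper leaves both of these implicit.
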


\begin{lemma}
\label{lemma:6.2}
\normalfont
The constructed dual variables $\lambda \geq 0$. In particular, for any $i > k_A $ in group A and $j > k_B$ in group B, it holds that $\lambda_i' = 0$ and $\lambda_j' = 0$ and for any $i \leq k_A $ and $j \leq k_B$, it holds that $\lambda_i' \geq 0$ and $\lambda_j' \geq 0$.
\end{lemma}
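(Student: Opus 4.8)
The plan is to check non-negativity of each constructed dual variable separately, and then to show that the positive parts defining $\lambda_i'$ actually vanish for the unselected elements. Throughout I write $N=\relCand(A)+\relCand(B)$ and $t=\lambda_{A,B}-\lambda_{B,A}=\frac{1}{N}\cdot\frac{p_A-p_B}{q_A+q_B}$, and assume the non-degenerate case $q_A+q_B>0$. Since $q_A+q_B>0$, the sign of $t$ equals the sign of $p_A-p_B$, so exactly one of $\lambda_{A,B},\lambda_{B,A}$ is non-zero and both are $\ge 0$ because \eqref{eq:lambda_1}--\eqref{eq:lambda_2} define them as positive parts. Likewise every $\lambda_i'$ is $\ge 0$, since \eqref{eq:lambda'}--\eqref{eq:lambda'_B} define it as a positive part; this already settles the $i\le k_A$, $j\le k_B$ half of the statement. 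Hence only two points remain: $\lambda_k\ge 0$, and the vanishing of $\lambda_i'$ for $i>k_A$ and of $\lambda_j'$ for $j>k_B$.

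For $\lambda_k$, I would substitute \eqref{eq:lambda_1}--\eqref{eq:lambda_2} into either expression in \eqref{eq:lambda_3}; after one line of algebra both collapse to the common value $\lambda_k=\frac{p_Aq_B+p_Bq_A}{N(q_A+q_B)}$ (which also confirms the equality asserted in \eqref{eq:lambda_3}), and this is $\ge 0$ because $p_A,p_B,q_A,q_B\ge 0$.

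For the support claim, fix $i$ in group $A$ with $i>k_A$. Because Algorithm~\ref{alg:EOR_alg} never reorders candidates within a group and $\PRPRankingGroup{A}$ is sorted by decreasing $\prob(r_i|\Data)$, any unselected element $i>k_A$ has $p_i\le p_A$, hence $q_i=p_i/\relCand(A)\le q_A$. Using $\lambda_k=\frac{p_A}{N}-q_At$ from \eqref{eq:lambda_3}, the bracketed quantity in \eqref{eq:lambda'} equals $\frac{p_i-p_A}{N}-(q_i-q_A)t$; setting $a=p_A-p_i\ge 0$ and $q_A-q_i=a/\relCand(A)$, this simplifies to $\frac{a}{\relCand(A)}\bigl(t-\frac{\relCand(A)}{N}\bigr)$, which is $\le 0$ --- so the positive part is $0$ --- as soon as $t\le \relCand(A)/N$. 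If $p_A<p_B$ then $t\le 0$ and this is immediate; if $p_A\ge p_B$ then $t=\frac{1}{N}\cdot\frac{p_A-p_B}{q_A+q_B}\le \frac{1}{N}\cdot\frac{p_A}{q_A}=\frac{\relCand(A)}{N}$, using $q_A+q_B\ge q_A$ and $p_A-p_B\le p_A$. The case $j>k_B$ in group $B$ is the mirror image: with $\lambda_k=\frac{p_B}{N}+q_Bt$, the bracketed quantity in \eqref{eq:lambda'_B} equals $\frac{b}{\relCand(B)}\bigl(-t-\frac{\relCand(B)}{N}\bigr)$ where $b=p_B-p_j\ge 0$, and $-t\le \relCand(B)/N$ follows from the same estimate with $A$ and $B$ swapped.

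The main obstacle --- and the only part I would spell out carefully --- is the boundary case where EOR has selected no element from one group at prefix $k$, so that $k_A=0$: then $p_A$ must be read as $\PRPRankingGroup{A}[1]$, the largest probability in group $A$, exactly as in the proof of Theorem~\ref{Thm:Thm1}. With this reading the monotonicity step $p_i\le p_A$ for all $i>k_A$ still holds, so the bound $t\le \relCand(A)/N$ and the conclusion $\lambda_i'=0$ are unchanged; I would also note that the truly degenerate subcase $q_A+q_B=0$ (no expected relevance left to allocate) is handled separately, since there the EOR selection leaves $\delta(\sigma_k)$ unchanged.
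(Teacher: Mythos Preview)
Your argument is correct and follows essentially the same line as the paper's: substitute the expression for $\lambda_k$ into the definition of $\lambda_i'$, use the within-group monotonicity $p_i\le p_A$ (hence $q_i\le q_A$) for unselected elements, and then case-split on the sign of $p_A-p_B$ to conclude the bracketed quantity is $\le 0$. Your factorization $\frac{a}{\relCand(A)}\bigl(t-\relCand(A)/N\bigr)$ is a slightly cleaner reorganization of the same algebra the paper carries out when it reaches $\frac{p_B(q_i-q_A)+q_B(p_i-p_A)}{q_A+q_B}$; both reduce to the same inequality, and your bound $t\le \relCand(A)/N$ is exactly what forces that numerator non-positive.

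One small remark: you are right that the $i\le k_A$ half of the statement is trivial because $\lambda_i'$ is defined as a positive part. The paper nonetheless works through Case~II explicitly and shows the stronger fact that the bracketed expression itself (before clipping) is $\ge 0$ for selected elements. That stronger fact is what is actually invoked in the proof of Lemma~\ref{lemma:6.3}, where $\lambda_i'$ is substituted as $\frac{p_i}{N}-\lambda_k-q_it$ without the $[\cdot]_+$. So while your proof fully establishes Lemma~\ref{lemma:6.2} as stated, be aware that the feasibility argument downstream needs this additional piece, which your same factorization (with $a\le 0$ when $i\le k_A$) delivers immediately.
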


\begin{proof} \normalfont \label{proof:lemma_2}
In this Lemma, we show that $\lambda'=0$ for the elements not selected by the EOR Algorithm and $\lambda'\geq0$ for the elements that were selected. Without loss of generality, we consider the element at index $i$ that belongs to group $A$. 
\begin{eqnarray}
    \nonumber
    \lambda_{i \in A}' &=& \left[\frac{p_i}{\relCand(A)+\relCand(B)}- \lambda_k -q_i(\lambda_{A,B}-\lambda_{B,A})\right]_{+} \\
    \nonumber
    &=& \left[\frac{p_i}{\relCand(A)+\relCand(B)} - \frac{p_A}{\relCand(A)+\relCand(B)} +q_A(\lambda_{A,B}-\lambda_{B,A}) -q_i(\lambda_{A,B}-\lambda_{B,A})\right]_{+} \\
    &=& \left[\frac{p_i- p_A}{\relCand(A)+\relCand(B)} + (q_i-q_A)(\lambda_{B,A}-\lambda_{A,B})\right]_{+} \label{eq:lambda_2_ge_0}
\end{eqnarray}

The second equality above is obtained by substituting $\lambda_k$ from Eq.~\eqref{eq:lambda_3} and the last equality by rearranging. We now consider two cases -- for elements not selected and selected by the EOR Algorithm respectively.

\textbf{Case I:} Elements not selected by the EOR Algorithm.

We have i) $p_i \leq  p_A$ and $q_i \leq q_A$ as EOR selects in decreasing order of probabilities, and ii) either $\lambda_{A,B} \geq 0 $ or $\lambda_{B,A} \geq 0$ as only one of them can be nonzero from \eqref{eq:lambda_1}, \eqref{eq:lambda_2}. 

In Eq.~\eqref{eq:lambda_2_ge_0}, if $\lambda_{B,A} \geq 0$, then $\lambda_{A,B}=0$ and with $p_i \leq  p_A$, $q_i \leq q_A$ the resultant quantity would be negative, which would result in $\lambda_i'$ clipped to $0$. 
\begin{eqnarray}
    \nonumber
    \lambda_i' & = & \left[\frac{p_i- p_A}{\relCand(A)+\relCand(B)}+(q_i-q_A)\lambda_{B,A}\right]_{+} \\
    \nonumber
    & \leq & 0
\end{eqnarray}

In Eq.~\eqref{eq:lambda_2_ge_0}, if $\lambda_{A,B} \geq 0$, then $\lambda_{B,A} = 0$. We can then substitute 
$\lambda_{A,B} = \frac{1}{\relCand(A)+\relCand(B)} \left(\frac{p_A-p_B}{q_A+q_B} \right)$ in Eq.~\eqref{eq:lambda_2_ge_0},
\begin{eqnarray}
    \nonumber
    \lambda_i' & = & \left[\frac{p_i- p_A}{\relCand(A)+\relCand(B)}-(q_i-q_A)\lambda_{A,B}\right]_{+} \\
    \nonumber
    & = & \left[\frac{p_i- p_A}{\relCand(A)+\relCand(B)} - \frac{(q_i-q_A)}{\relCand(A)+\relCand(B)} \left(\frac{p_A-p_B}{q_A+q_B} \right)\right]_{+}\\
    \nonumber
    & = & \frac{1}{\relCand(A)+\relCand(B)} \left[\frac{p_B(q_i-q_A)+q_B(p_i-p_A)}{q_A+q_B}\right]_{+} \\
    \nonumber
    & = & 0
\end{eqnarray}
The second last term evaluates to $\leq 0$ and so the last equality holds because $\lambda_{i}'$ is clipped to $0$.

Thus, for any element not been selected by the EOR Algorithm i.e. $i > k_A$, the corresponding dual variable $\lambda_i' = 0$.
Analogously, for any element $j > k_B$ in group B it can be shown that $\lambda_j' = 0$. We have shown that for any element not  selected by the EOR Algorithm the corresponding dual variable $\lambda' = 0$.

\textbf{Case II:} Elements selected by the EOR Algorithm.

We have i) $p_i \geq  p_A$ and $q_i \geq q_A$ as EOR selects in decreasing order of probabilities, and ii) $\lambda_{A,B} \geq 0 $ or $\lambda_{B,A} \geq 0$ as only one of them can be non zero. 

In Eq.~\eqref{eq:lambda_2_ge_0}, if $\lambda_{B,A} \geq 0$, then $\lambda_{A,B} = 0$ and with $p_i \geq  p_A$, $q_i \geq q_A$ the resultant quantity in \eqref{eq:lambda_2_ge_0} would be $\geq 0$, so that $\lambda_i'\geq 0$.
\begin{eqnarray}
    \nonumber
    \lambda_i' & = & \left[\frac{p_i- p_A}{\relCand(A)+\relCand(B)}+(q_i-q_A)\lambda_{B,A}\right]_{+} \\
    \nonumber
    & \geq & 0
\end{eqnarray}

In Eq.~\eqref{eq:lambda_2_ge_0}, if $\lambda_{A,B} \geq 0$, then $\lambda_{B,A} = 0$. We can then substitute $\lambda_{A,B} = \frac{1}{\relCand(A)+\relCand(B)} \left(\frac{p_A-p_B}{q_A+q_B} \right)$ in \eqref{eq:lambda_2_ge_0},
\begin{eqnarray}
    \nonumber
    \lambda_i' & = & \left[\frac{p_i- p_A}{\relCand(A)+\relCand(B)}-(q_i-q_A)\lambda_{A,B}\right]_{+} \\
    \nonumber
    & = & \left[\frac{p_i- p_A}{\relCand(A)+\relCand(B)} - \frac{(q_i-q_A)}{\relCand(A)+\relCand(B)} \left(\frac{p_A-p_B}{q_A+q_B} \right)\right]_{+}\\
    \nonumber
    & = & \frac{1}{\relCand(A)+\relCand(B)}\left[\frac{p_B(q_i-q_A)+q_B(p_i-p_A)}{q_A+q_B}\right]_{+} \\
    \nonumber
    &\geq& 0
\end{eqnarray}
The second last term evaluates to $\geq 0$, so the last equality holds.
Thus, for any element selected by the EOR Algorithm in group $A$ i.e. $i\leq k_A$, the corresponding dual variable $\lambda' \geq 0$. 
Analogously, for any element $j \leq k_B$ in group B,  $\lambda_i' \geq 0$.
We have shown that for any element  selected by the EOR Algorithm the corresponding dual variable $\lambda' \geq 0$.

We now show that $\lambda_k \geq 0$. From Eq.~\eqref{eq:lambda_3},
\begin{eqnarray} 
\lambda_k &=& \left[\frac{p_A}{\relCand(A)+\relCand(B)}-q_A(\lambda_{A,B} - \lambda_{B,A})\right]  \label{eq:lambda_k_A}
\end{eqnarray}
If $\lambda_{A,B} \geq 0$ , then $\lambda_{B,A} = 0$. Substituting $\lambda_{A,B} = \frac{1}{\relCand(A)+\relCand(B)} \left(\frac{p_A-p_B}{q_A+q_B} \right)$ in Eq.~\eqref{eq:lambda_k_A},
\begin{eqnarray} 
 \lambda_k&=& \left[\frac{p_A}{\relCand(A)+\relCand(B)}-q_A\lambda_{A,B}\right] \nonumber \\
 &=& \left[\frac{p_A}{\relCand(A)+\relCand(B)}-\frac{q_A}{\relCand(A)+\relCand(B)}\left(\frac{p_A-p_B}{q_A+q_B}\right)\right] \nonumber\\
  & = & \frac{1}{\relCand(A)+\relCand(B)} \left(\frac{p_Aq_B+q_Ap_B}{q_A+q_B}\right) \nonumber \\
  &\geq& 0 \nonumber
\end{eqnarray}
The last inequality follows since each of the terms $p_A, q_A, p_B, q_B$ are $\geq 0$. If $\lambda_{B,A} \geq 0$, then $\lambda_{A,B} = 0$. By substituting $\lambda_{B,A} = \frac{1}{\relCand(A)+\relCand(B)} \left(-\frac{p_A-p_B}{q_A+q_B} \right)$ in Eq.~\eqref{eq:lambda_k_A}, we similarly get $\lambda_k \geq 0$.

The two duals $\lambda_{A,B}, \lambda_{B,A}$ are $\geq 0$ by their construction in Eqs.~\eqref{eq:lambda_1}, \eqref{eq:lambda_2}. Thus, we have shown that all the constructed dual variables $\lambda \geq 0$.
\end{proof}

\begin{lemma}
    \label{lemma:6.3}
    \normalfont
    The dual variables $\lambda=[\lambda_1' \cdots \lambda_n', \lambda_k, \lambda_{A,B}, \lambda_{B,A}] $ are always feasible.
\end{lemma}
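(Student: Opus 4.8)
The plan is to verify directly that the dual vector $\lambda$ constructed from the EOR solution satisfies the dual constraint~\eqref{eq:dual_constraint}; since Lemma~\ref{lemma:6.2} already gives $\lambda\ge 0$, this inequality is the only remaining requirement for dual feasibility. Constraint~\eqref{eq:dual_constraint} is component-wise over the $n$ candidates, and the $i$-th row of $Q_{A,B}$ equals $q_i$ when $i\in A$ and $-q_i$ when $i\in B$. Writing $R=\relCand(A)+\relCand(B)$, the $i$-th scalar constraint is therefore
\[
q_i(\lambda_{A,B}-\lambda_{B,A})+\lambda_k+\lambda_i'\ \ge\ \tfrac{p_i}{R}\ \ \text{for }i\in A,\qquad -q_i(\lambda_{A,B}-\lambda_{B,A})+\lambda_k+\lambda_i'\ \ge\ \tfrac{p_i}{R}\ \ \text{for }i\in B,
\]
so the task reduces to checking these two families of scalar inequalities.

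First I would record the one well-definedness check the construction needs: the two expressions for $\lambda_k$ in Eq.~\eqref{eq:lambda_3} actually coincide. By Eqs.~\eqref{eq:lambda_1}--\eqref{eq:lambda_2}, at most one of $\lambda_{A,B},\lambda_{B,A}$ is nonzero; in the case $p_A\ge p_B$ we substitute $\lambda_{A,B}=\tfrac1R\tfrac{p_A-p_B}{q_A+q_B}$ and $\lambda_{B,A}=0$, and both expressions collapse to $\tfrac1R\cdot\tfrac{p_Aq_B+q_Ap_B}{q_A+q_B}$ (the case $p_B\ge p_A$ is symmetric). Hence $\lambda_k$ is unambiguously defined, and this same computation re-derives the nonnegativity of $\lambda_k$ invoked in Lemma~\ref{lemma:6.2}.

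The core argument is then immediate from the definition of $\lambda_i'$. For $i\in A$, Eq.~\eqref{eq:lambda'} sets $\lambda_i'=\big[\tfrac{p_i}{R}-\lambda_k-q_i(\lambda_{A,B}-\lambda_{B,A})\big]_{+}$, i.e.\ $\lambda_i'$ is exactly the positive part of the slack of the $i$-th constraint above. Consequently $\lambda_i'\ge \tfrac{p_i}{R}-\lambda_k-q_i(\lambda_{A,B}-\lambda_{B,A})$ holds unconditionally --- with equality when that bracketed slack is positive, and with the bracketed slack being $\le 0$ (so the constraint is already satisfied without help from $\lambda_i'$) otherwise --- which is precisely the $i$-th inequality for group $A$. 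Running the identical argument with the sign of the $Q_{A,B}$-term flipped, using Eq.~\eqref{eq:lambda'_B} for $i\in B$, disposes of the group-$B$ constraints. Since every component of~\eqref{eq:dual_constraint} holds, $\lambda$ is feasible.

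There is essentially no hard step here: feasibility is \emph{forced} by having defined $\lambda_i'$ as the clipped constraint slack. The only places demanding care are bookkeeping --- keeping the group-dependent sign of the $Q_{A,B}$ row consistent with the two definitions~\eqref{eq:lambda'} and~\eqref{eq:lambda'_B}, and confirming the two formulas for $\lambda_k$ agree so the construction is internally consistent (which is also what makes the subsequent duality-gap cancellation in Theorem~\ref{Thm:Thm1} go through). Together with Lemma~\ref{lemma:6.2}, this shows $\lambda$ is a feasible dual point, as claimed.
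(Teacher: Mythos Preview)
Your proof is correct and follows essentially the same approach as the paper: both verify the dual constraint~\eqref{eq:dual_constraint} component-wise by leaning on the definition of $\lambda_i'$ as the clipped slack. Your presentation is in fact more streamlined --- you observe once that $[x]_+\ge x$ forces the inequality for every $i$, whereas the paper splits into the selected/not-selected cases and, for the latter, re-derives algebraically that the bracketed term is nonpositive (a fact already contained in Lemma~\ref{lemma:6.2}); your added consistency check that the two formulas for $\lambda_k$ in Eq.~\eqref{eq:lambda_3} agree is also a nice touch the paper only asserts.
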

\begin{proof} \label{proof:lemma_3}
In Lemma~\ref{lemma:6.2}, we proved that the constructed $\lambda \geq 0$. We now show that they satisfy the duality constraint. 

For some element $i$, the duality constraint implies that
\begin{equation} \label{eq:duality_proof}
    q_i(\lambda_{A,B}-\lambda_{B,A}) + \lambda_k + \lambda_i' \geq \frac{p_i}{\relCand(A)+\relCand(B)}
\end{equation}

Without loss of generality, we consider  element at index $i$ that belongs to group $A$. Similar to Lemma~\ref{lemma:6.2}, we consider two cases. 

\textbf{Case I:} Elements not selected by the EOR Algorithm.

Using the fact that $\lambda_i'=0$ for $i > k_A$ from Lemma~\ref{lemma:6.2}, and substituting $\lambda_k$ from Eq.~\eqref{eq:lambda_3}, we get
\begin{eqnarray}
    \nonumber
     q_i(\lambda_{A,B}-\lambda_{B,A}) + \lambda_k + \lambda_i' &=& q_i(\lambda_{A,B}-\lambda_{B,A}) + \lambda_k\\
     \nonumber
     &=& q_i(\lambda_{A,B}-\lambda_{B,A}) + \frac{p_A}{\relCand(A)+\relCand(B)}-q_A(\lambda_{A,B}-\lambda_{B,A})\\
     \nonumber
     &=& \frac{p_A}{\relCand(A)+\relCand(B)}+(q_i - q_A)(\lambda_{A,B}-\lambda_{B,A}) 
\end{eqnarray}

We have i) $p_i \leq  p_A$ and $q_i \leq q_A$ as EOR selects in decreasing order of probabilities, and ii) either $\lambda_{A,B} \geq 0 $ or $\lambda_{B,A} \geq 0$ as only one of them can be nonzero.
If $\lambda_{A,B} \geq 0$, then substituting $\lambda_{A,B}$ , 
\begin{eqnarray}
    \nonumber
    q_i(\lambda_{A,B}-\lambda_{B,A}) + \lambda_k + \lambda_i'
    &=& \frac{p_A}{\relCand(A)+\relCand(B)}+(q_i-q_A)\lambda_{A,B}\\
    \nonumber
    &=& \frac{p_A}{\relCand(A)+\relCand(B)}+\frac{(q_i-q_A)}{\relCand(A)+\relCand(B)}(\frac{p_A-p_B}{q_A+q_B})\\
    \nonumber
    &=& \frac{1}{\relCand(A)+\relCand(B)}\left[p_i + \frac{p_B(q_A-q_i) + q_B(p_A-p_i)}{q_A+q_B}\right]\\
    \nonumber
    &\geq& \frac{p_i}{\relCand(A)+\relCand(B)}
\end{eqnarray}
Similarly, we can show that the dual constraint is satisfied if $\lambda_{B,A} \geq 0$.
Thus, for any element not selected by the EOR Algorithm i.e. $i > k_A$, the corresponding dual constraint is satisfied.
Analogously, for any element $j > k_B$ in group B it can be shown that the corresponding dual constraint is satisfied. We have shown that for any element not selected by EOR Algorithm the corresponding dual constraint is satisfied.

\textbf{Case II:} Elements selected by the EOR Algorithm.

Using the fact that $\lambda_i' \geq 0$ for $i \leq k_A$ from Lemma~\ref{lemma:6.2}, and substituting $\lambda_k$ from \eqref{eq:lambda_3}, $\lambda_i'$ for $i \leq k_A$ in \eqref{eq:duality_proof}, we get 
\begin{eqnarray}
    \nonumber
    q_i(\lambda_{A,B}-\lambda_{B,A}) + \lambda_k + \lambda_i' 
    = q_i(\lambda_{A,B}-\lambda_{B,A}) + \lambda_k + \frac{p_i}{\relCand(A)+\relCand(B)} -\lambda_k -q_i(\lambda_{A,B}-\lambda_{B,A}) 
    = \frac{p_i}{\relCand(A)+\relCand(B)}
\end{eqnarray}
Thus, for any element selected by the EOR Algorithm i.e. $i \leq k_A, j \leq k_B$, the corresponding dual constraint is satisfied.
\end{proof}

\begin{figure*}[t]
\centering
    \includegraphics[width=0.5\textwidth]{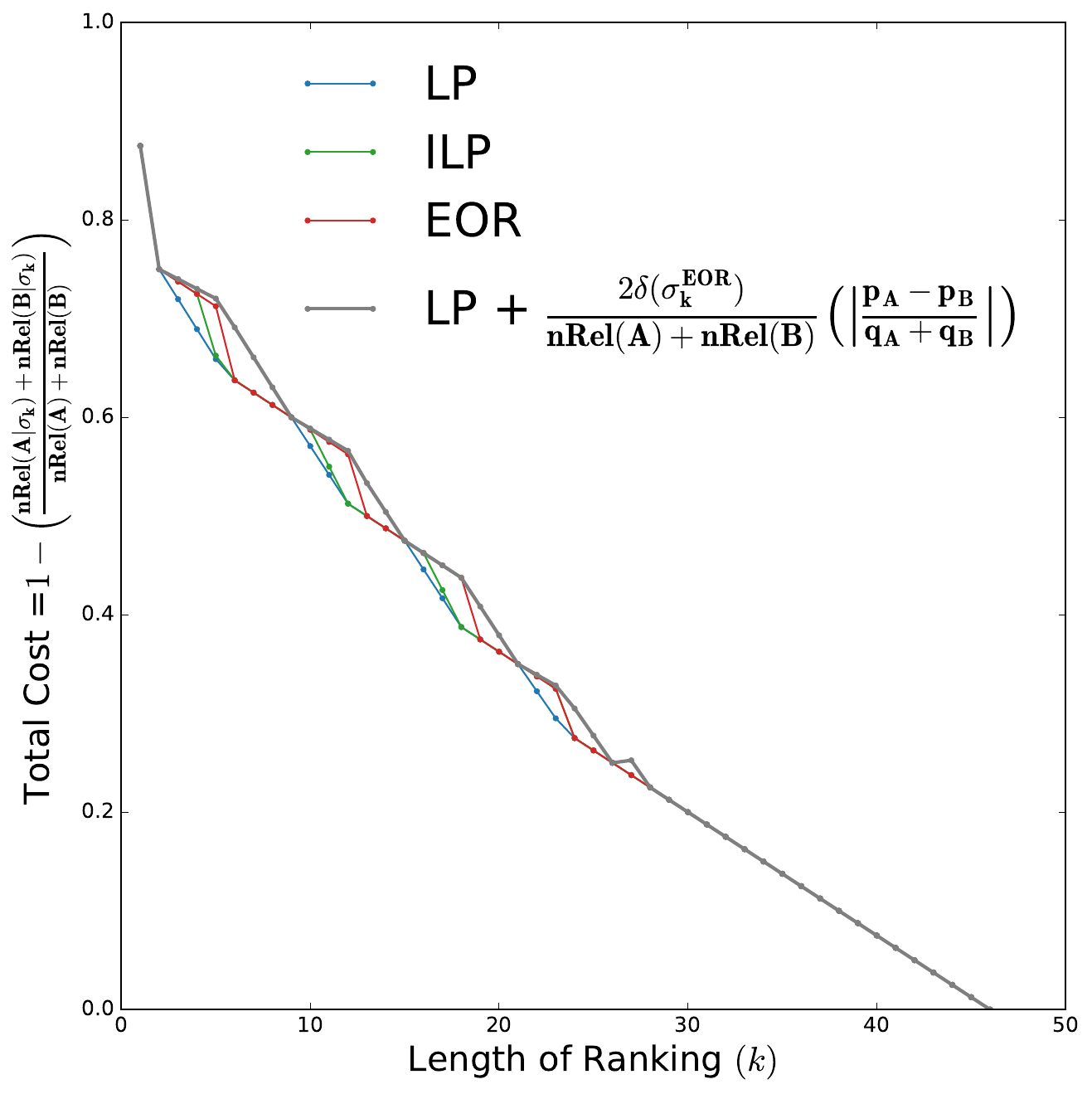}
    \caption{\normalfont Cost Optimality Gap of a synthetic example with $p_{i \in A}=[1, 0.6, 0.5, 0.5, 0.4, 0.1, \cdots 0.1], \relCand(A)=4, S(A)=15$, and $p_{i \in B}=[1, 0.1 \cdots 0.1], \relCand(B)=4, S(B)=31$. The cost from EOR ranking is nearly optimal to the ILP or even the relaxed LP solution. Further the bound obtain in Theorem~\ref{Thm:Thm1} (in grey) is tight for many $k$ prefixes.}
  \label{fig:thm1_iluustration}
  \vspace*{-4mm}
\end{figure*}
\vspace*{-2mm}

We demonstrate the cost optimality bound proved in Theorem~\ref{Thm:Thm1} in Figure~\ref{fig:thm1_iluustration} that shows an example with a ranking produced by Linear Program (LP), Integer Linear Program (ILP), and the EOR algorithm along with the upper bound on the cost computed from the duality gap proved in Theorem~\ref{Thm:Thm1}. The example is constructed such that $\prob(r_i|\Data)_{i \in A}=[1, 0.6, 0.5, 0.5, 0.4, 0.1, \cdots 0.1], \relCand(A)=4$, and $\prob(r_i|\Data)_{i \in B}=[1, 0.1 \cdots 0.1], \relCand(B)=4$.  Figure~\ref{fig:thm1_iluustration} shows that at most prefixes $k$, the EOR cost (in red) is optimal coinciding with the cost from ILP solution (in green) as well as with the LP solution (in blue). Further, when the EOR ranking does not coincide with the LP solution, the upper bound $\frac{2\delta(\EORanking_k)}{\relCand(A)+\relCand(B)}\left|\frac{p_A-p_B}{q_A+q_B}\right|$ is relatively small as is shown by the LP + duality gap (in grey).

We now present the proof for the global a priori bound on $\delta(\EORanking_k)$ for two groups A,B.

\subsection{Proof for Theorem \ref{thm:thm_bound}}
\begin{proof} 
\label{proof:Thm2}

Let $\PRPRankingGroup{A},\PRPRankingGroup{B}$ be the PRP rankings for elements in group A and B respectively.
We show by induction that for any given prefix $k$, EOR algorithm selects the element such that $\left| \delta(\EORanking_k) \right| \leq \delta_{max}$
and as a consequence of Theorem~\ref{Thm:Thm1}, we get a global cost guarantee of $\phi\delta_{max}$.

In the remaining proof, we drop the superscript of EOR for simplicity and $\sigma_j$ refers to $\EORanking_j$.

Consider the base case of $k=1$. Algorithm~\ref{alg:EOR_alg} will select $\argmin\left\{\frac{\PRPRankingGroup{A}[1]}{\relCand(A)}, \frac{\PRPRankingGroup{B}[1]}{\relCand(B)}\right\}$ 
resulting in the lower $\delta(\sigma_{k=1})$. If $\frac{\PRPRankingGroup{A}[1]}{\relCand(A)} \leq \frac{\PRPRankingGroup{B}[1]}{\relCand(B)}$, then 
$\delta(\sigma_1) = \frac{\PRPRankingGroup{A}[1]}{\relCand(A)} \leq \frac{1}{2}\left(\frac{\PRPRankingGroup{A}[1]}{\relCand(A)} + \frac{\PRPRankingGroup{B}[1]}{\relCand(B)} \right)$. Similarly, if $\frac{\PRPRankingGroup{B}[1]}{\relCand(B)} \leq \frac{\PRPRankingGroup{A}[1]}{\relCand(A)}$, then $\delta(\sigma_{k=1})$ denoted in short by $\delta(\sigma_1) = \frac{\PRPRankingGroup{B}[1]}{\relCand(B)} \leq \frac{1}{2}\left(\frac{\PRPRankingGroup{A}[1]}{\relCand(A)} + \frac{\PRPRankingGroup{B}[1]}{\relCand(B)} \right)$. Thus, at $k=1$, by selecting the element with lower $\delta$, EOR constraint is satisfied, i.e. $\delta(\sigma_1) \leq \delta_{max}$.

We assume that for a given $k-1, \left| \delta(\sigma_{k-1}) \right| \leq \delta_{max}$. Further, without loss of generality, we assume that $\delta(\sigma_{k-1}) \geq 0$. We now show that at $k, \left| \delta(\sigma_{k}) \right| \leq \delta_{max}$ by considering the following cases. First, we show that if adding the element from one of the groups violates the $\delta_{max}$ constraint, then adding the element from the other group guarantees the satisfaction of $\delta_{max}$ constraint because EOR Algorithm selects the element that minimizes $\delta$. Secondly, in the case where adding an element from either group does not violate the $\delta_{max}$ constraint, EOR algorithm will select the element that minimizes $\left| \delta(\sigma_k) \right|$ resulting in $\left| \delta(\sigma_k) \right| \leq \delta_{max}$.
Finally, we show that when all the elements have run out from one of the groups at $k-1$, adding remaining elements from the other group will always satisfy the $\delta_{max}$ constraint.

We assume that adding the element from group A with relevance probability $p_i$ at $k$, exceeds the $\delta_{max}$ constraint.
\begin{equation} 
    \label{induction_a_violate}
    \delta(\sigma_{k-1}) + \frac{p_i}{\relCand(A)} > \delta_{max}
\end{equation}

Adding the element $p_j$ from B at this prefix,
\begin{equation} \label{induction_b_1}
    \delta(\sigma_{k}) = \delta(\sigma_{k-1}) - \frac{p_j}{\relCand(B)} \leq \delta(\sigma_{k-1}) \leq \delta_{max} 
\end{equation}
The last inequality holds by the induction assumption at $k-1$.

Further, since $\frac{p_j}{\relCand(B)} \leq \frac{\PRPRankingGroup{B}[1]}{\relCand(B)}$, and $\delta_{max}=\frac{1}{2}\left(\frac{\PRPRankingGroup{A}[1]}{\relCand(A)}+\frac{\PRPRankingGroup{B}[1]}{\relCand(B)}\right)$, the above can be reduced to 
\begin{eqnarray}
    \nonumber
    \delta(\sigma_{k}) & = & \delta(\sigma_{k-1}) - \frac{p_j}{\relCand(B)} \geq \delta(\sigma_{k-1}) - \frac{\PRPRankingGroup{B}[1]}{\relCand(B)} \\
    \delta(\sigma_{k}) & \geq & \delta(\sigma_{k-1}) + \frac{\PRPRankingGroup{A}[1]}{\relCand(A)} - 2\delta_{max} \label{induction_b_2}
\end{eqnarray}
Now using $\frac{p_i}{\relCand(A)} \leq \frac{\PRPRankingGroup{A}[1]}{\relCand(A)}$, and Eq.~\eqref{induction_b_2} above,
\begin{equation} \label{induction_b_3}
    \delta(\sigma_k) \geq \delta(\sigma_{k-1}) + \frac{\PRPRankingGroup{A}[1]}{\relCand(A)} - 2\delta_{max} \geq \delta(\sigma_{k-1}) + \frac{p_i}{\relCand(A)} - 2\delta_{max} 
\end{equation}

Using Eqs.~\eqref{induction_b_3} and \eqref{induction_a_violate}, 
\begin{equation}
    \delta(\sigma_k) \geq \delta(\sigma_{k-1}) + \frac{p_i}{\relCand(A)} - 2\delta_{max} > -\delta_{max} \label{induction_b_4}
\end{equation}

We have shown that, if $\left| \delta(\sigma_k) \right|$ exceeds $\delta_{max}$ by adding the element from group A (from \eqref{induction_a_violate}), then the element in group B will satisfy $\left| \delta(\sigma_k) \right| \leq \delta_{max}$ (from \eqref{induction_b_1} and \eqref{induction_b_4}). Since the EOR algorithm minimizes $\left| \delta(\sigma_k) \right|$, it will select the element from group B at prefix $k$ rather than the element from group A. Thus, $\vert \delta(\sigma_k) \vert \leq \delta_{max}$ in this case. 

Similarly, we can show that if $\vert \delta(\sigma_k) \vert$ exceeds $\delta_{max}$ by adding the element from group B, then adding the element from group A would result in $\vert \delta(\sigma_k) \vert \leq \delta_{max}$ and would be selected by the EOR algorithm at prefix $k$.

Finally, we consider the case where all the elements in a particular group have already been selected. Without loss of generality, let's assume that this is true with all the elements in group B added by prefix $k-1$. We need to show that adding from the remaining elements in group A would still satisfy $\left| \delta \right| \leq \delta_{max}$ for the remaining prefixes.

From our assumption, $\frac{\relCand(B|\sigma_{k-1})}{\relCand(B)} = 1$ since all elements from group B were selected at prefix $k-1$. From the inductive hypothesis $\left| \delta(\sigma_{k-1})\right| \leq \delta_{max}$,

\begin{equation}
    \left| \delta(\sigma_{k-1}) \right|= \left| \frac{\relCand(A|\sigma_{k-1})}{\relCand(A)} - \frac{\relCand(B|\sigma_{k-1})}{\relCand(B)} \right | \leq \delta_{max} \label{induction_c1}
\end{equation}

Since $\frac{\relCand(A|\sigma_{k-1})}{\relCand(A)} \leq 1$ as some elements remain in group A, 

\begin{equation}
    \delta(\sigma_{k-1}) = \frac{\relCand(A|\sigma_{k-1})}{\relCand(A)} -1 \geq  -\delta_{max} \label{induction_c2}
\end{equation}

After adding the element $p_i$ from group A at prefix $k$ and from \eqref{induction_c2},
\begin{eqnarray}
    \delta(\sigma_k) &=& \delta(\sigma_{k-1}) + \frac{p_i}{\relCand(A)}  = \frac{\relCand(A|\sigma_{k-1})}{\relCand(A)} - 1 + \frac{p_i}{\relCand(A)} \geq -\delta_{max} + \frac{p_i}{\relCand(A)}  \nonumber \\
    \delta(\sigma_k) &\geq& -\delta_{max} \label{eq:delta_k} 
\end{eqnarray}

Additionally, since $\frac{\relCand(A|\sigma_n)}{\relCand(A)} =1$ implying $\frac{\relCand(A|\sigma_k)}{\relCand(A)} \leq 1$,   
\begin{equation}
    \delta(\sigma_k)  = \frac{\relCand(A|\sigma_k)}{\relCand(A)} -1 \leq 0 \label{eq:delta_k2}
\end{equation}

From Eqs.~\eqref{eq:delta_k} and \eqref{eq:delta_k2}, $-\delta_{max} \leq \delta(\sigma_k) \leq 0 $ and thus EOR algorithm will add all the remaining elements from group A resulting in $\left| \delta(\sigma_k)  \right| \leq \delta_{max}$. Analogously, it can be shown that if all the elements from group A had been added by prefix $k$, adding the next element from group B would satisfy $\left| \delta(\sigma_k)  \right| \leq \delta_{max}$.

Thus, we have shown that Algorithm~\ref{alg:EOR_alg} provides rankings such that for any prefix $k$,  $\vert \delta(\sigma_k)  \vert \leq \delta_{max}$, where $\delta_{max} =  \frac{1}{2}\left(\frac{\PRPRankingGroup{A}[1]}{\relCand(A)}+\frac{\PRPRankingGroup{B}[1]}{\relCand(B)}\right)$. 
As a consequence of this and Theorem~\ref{Thm:Thm1}, EOR rankings have total cost bounded by $\phi\delta_{max}$ for any prefix $k$ of the ranking, where $\phi=\frac{2}{\relCand(A)+\relCand(B)}\left|\frac{p_A-p_B}{q_A+q_B}\right|$. 

\end{proof}

Next, we present the proof comparing costs from $\EOR, \Uniform$ at prefix $k$, where $\delta(\EORanking_k)=0$.

\subsection{Proof for Proposition \ref{prop:uniform_eor}}
\label{proof:prop_uniform_eor}
\begin{proof}
When $\delta(\EORanking_k)=0$, by the definition of EOR fairness, we have that $\frac{\relCand(A|\sigma^{EOR}_k)}{\relCand(A)} =  \frac{\relCand(B|\sigma^{EOR}_k)}{\relCand(B)}$. As a result, the total cost ($1-\frac{\relCand(A|\sigma^{EOR}_k)+\relCand(B|\sigma^{EOR}_k)}{\relCand(A)+\relCand(B)}$) as well as subgroup cost would be equal to
\begin{equation} \label{delta_k_0}
    1-\frac{\relCand(A|\sigma^{EOR}_k)}{\relCand(A)} =  1-\frac{\relCand(B|\sigma^{EOR}_k)}{\relCand(B)}    
\end{equation}

We also know that $\frac{{\relCand(A|\sigma^{EOR}_k)}}{k_A} \geq \frac{\relCand(A)}{S(A)}$ and $\frac{{\relCand(B|\sigma^{EOR}_k)}}{k_B} \geq \frac{\relCand(B)}{S(B)}$, since the EOR algorithm selects top $k_A, k_B$ elements from each of the groups (with $k_A+k_B=k, S(A) + S(B) =n$), having a higher mean relevance than that of the group itself. 
\begin{eqnarray} 
\label{eor_rel_A}
\frac{\relCand(A|\sigma^{EOR}_k) S(A)}{\relCand(A)} \geq k_A
\\
\label{eor_rel_B}
\frac{\relCand(B|\sigma^{EOR}_k) S(B)}{\relCand(B)} \geq k_B
\end{eqnarray}
Adding Eqs.~\eqref{eor_rel_A}, \eqref{eor_rel_B} and using \eqref{delta_k_0}, we get that 
\begin{eqnarray}
\frac{\relCand(A|\sigma^{EOR}_k) (S(A) + S(B) )}{\relCand(A)} & \geq & k
\nonumber \\
\frac{\relCand(A|\sigma^{EOR}_k)}{\relCand(A)} & \geq & \frac{k}{n}
\Leftrightarrow 1-\frac{\relCand(A|\sigma^{EOR}_k)}{\relCand(A)}  \leq  1-\frac{k}{n} \nonumber
\end{eqnarray}
This and Eq.~\eqref{delta_k_0} are sufficient to claim that the total cost and subgroup costs of uniform policy given by $1-\frac{k}{n}$ will always be higher than the total cost and subgroup costs given by EOR ranking when $\delta(\EORanking_k)=0$.
\end{proof}

\section{Extension to Multiple Groups $G$} 
\label{EOR_alg_extended_details}

In the following, we prove the global cost and fairness guarantee for multiple groups $G$. 

\subsection{Proof for Theorem \ref{corr:corr_multiple_groups}} \label{proof:thm_6_1}
\begin{proof}
The overall strategy for this proof is to consider each pair of groups among the $\frac{G(G-1)}{2}$ pairs and reduce each term of the duality gap to the two group case in Theorem~\ref{Thm:Thm1}. Fortunately, we can achieve such a reduction by careful construction of the dual variables.

The LP  to find a solution $X$ for this problem is formulated as follows

\begin{align} \max_{x \geq 0}  \quad f(x) &= \frac{P^TX}{\sum_{g=1}^G \relCand(g)}  \tag{Primal} \\ \text{s.t.} \quad X &\leq 1  \label{x_constraint_corr} \\ X^T\mathbb.{1} &\leq k \tag{select up to k elements} \label{k_constraint_corr} \\  \text{$G(G-1)$ constraints}&  \left\{
    \begin{array}{rcr}
        \nonumber
        Q_{A,B}^TX &\leq& \delta(\EORanking_k)
        \\ \nonumber
        Q_{B,A}^TX &\leq& \delta(\EORanking_k)
        \\ \nonumber
        \vdots 
    \end{array}     
\right. \end{align} 

The above LP is analogous to the two group case in Theorem~\ref{Thm:Thm1}, with the addition of $G(G-1)$ pairwise constraints ensuring EOR-fairness for all pairs of groups. 

We can construct the dual problem as follows

\begin{align} \min_{\lambda \geq 0} & \quad g(\lambda) = \delta(\EORanking_k)\overbrace{\sum_{\{A,B\}}(\lambda_{A,B} + \lambda_{B,A})}^{G(G-1)/2} + k\lambda_k + \sum_{i=1}^{n}\lambda_{i}' \tag{Dual} \\ \text{s.t.} \quad  &\sum_{\{A,B\}}Q_{A,B}(\lambda_{A,B}-\lambda_{B,A}) + \lambda_k + \lambda' \geq \frac{P}{\sum_{g}\relCand(g)} \label{eq:dual_constraint_corr} \end{align}

We have pairs of dual variables that are constructed from the EOR solution as following 
\begin{eqnarray} 
   \lambda_{A,B} &=& \frac{1}{(G-1)}\frac{1}{\sum_{g}\relCand(g)}\left[\frac{p_{A}-p_{B}}{q_{A}+q_{B}}\right]_{+} \label{eq:lambda_1_corr} \\
    \lambda_{B,A} &=& \frac{1}{(G-1)}\frac{1}{\sum_{g} \relCand(g)}\left[-\left(\frac{p_{A}-p_{B}}{q_{A}+q_{B}}\right)\right]_{+} \label{eq:lambda_2_corr} \\
    \nonumber \vdots \\ \nonumber
    G(G-1) \quad \lambda's 
\end{eqnarray}

We construct $\lambda'_i$ corresponding to constraint \eqref{x_constraint_corr} and $\lambda_k$ corresponding to constraint \eqref{k_constraint_corr} below.

\begin{eqnarray} 
\lambda_k &=& \left[\frac{p_{A}}{\sum_{g}\relCand(g)}-q_{A}\overbrace{\sum_{g \neq A}(\lambda_{A,g}-\lambda_{g,A})}^{(G-1) \text{terms}}\right]    = \left[\frac{p_{B}}{\sum_{g}\relCand(g)}-q_{B}\overbrace{\sum_{g \neq B}(\lambda_{B,g}-\lambda_{g,B})}^{(G-1) \text{terms}}\right] = \cdots \text{for each of } G \text{ groups}\label{eq:lambda_3_corr}\\
\lambda_{i \in g'}' &=& \left[\frac{p_i}{\sum_{g}\relCand(g)}- \lambda_k -q_i \overbrace{\sum_{g \neq g'}(\lambda_{g',g}-\lambda_{g,g'})}^{(G-1) \text{terms}}\right]_{+} 
\label{eq:lambda'_corr}
\end{eqnarray}
For instance, if $i \in A$ then,
\begin{eqnarray} 
\lambda_{i \in A}' &=& \left[\frac{p_i}{\sum_{g}\relCand(g)}- \lambda_k -q_i \sum_{g \neq A}(\lambda_{A,g}-\lambda_{g,A})\right]_{+} \nonumber
\end{eqnarray}

We show that the constructed dual variables are non-negative in Lemma~\ref{lemma:7.2} and always feasible in Lemma~\ref{lemma:7.3}. Additionally, we have $\lambda' = 0$ for any element not selected in the EOR ranking from Lemma~\ref{lemma:7.2}. 

The duality gap can now be formulated as follows

\begin{displaymath}
    g(\lambda^*)-f(X)
    =\delta(\EORanking_k)\sum_{\{A,B\}}(\lambda_{A,B} + \lambda_{B,A}) + k\lambda_k + \sum_{i=1}^{n}\lambda_{i}'  - \frac{P^TX}{\sum_{g}\relCand(g)} 
\end{displaymath}

Substituting the values for $\lambda'$ from \eqref{eq:lambda'_corr} and breaking the $k$ elements selected into $k_{A}$ from group $A$,  $k_{B}$ from group $B$, and so on from every group, we have the above duality gap as

\begin{eqnarray}
\nonumber
& = &\delta(\EORanking_k)\sum_{\{A,B\}} (\lambda_{A,B} + \lambda_{B,A}) + k\lambda_k + \left(  \overbrace{\sum_{i=1}^{k_{A}}\left(\frac{p_i}{\sum_{g}\relCand(g)}-\lambda_k-q_i \sum_{g \neq A}(\lambda_{A,g}-\lambda_{g,A}) \right) + \sum_{}^{k_{B}}\left(.\right) + \cdots}^{\text{$G$ terms, one for each group}}\right) - \frac{P^TX}{\sum_{g}\relCand(g)} 
\end{eqnarray}

In the above $G$ terms, we can collect $\sum_{}^{k_A}\lambda_k + \sum_{}^{k_B}\lambda_k + \cdots = k\lambda_k$ and $\left( \sum_{}^{k_{A}}\frac{p_i}{\sum_{g}\relCand(g)}+ \sum_{}^{k_{B}}\frac{p_j}{\sum_{g}\relCand(g)}+ \cdots \right) = \frac{P^TX}{\sum_{g}\relCand(g)}$. This reduces the duality gap to 
\begin{eqnarray}
    \nonumber
    & = &\delta(\EORanking_k)(\sum_{\{A,B\}} (\lambda_{A,B} + \lambda_{B,A})) - \overbrace{\sum_{i=1}^{k_{A}}q_{i} \sum_{g \neq A}(\lambda_{A,g}-\lambda_{g,A}) - \sum_{j=1}^{k_{B}}q_{j}  \sum_{g \neq B}(\lambda_{B,g}-\lambda_{g,B}) - \cdots }^{\text{$G$ terms}}\\
    \nonumber
    & = &\sum_{\{A,B\}}\left(\delta(\EORanking_k)(\lambda_{A,B} + \lambda_{B,A}) - (\sum_{i=1}^{k_{A}}q_{i} - \sum_{j=1}^{k_{B}}q_{j}) (\lambda_{A,B}-\lambda_{B,A})\right)
\end{eqnarray}

For each pair of groups $A, B$, the term inside the summation reduces to the two group case in Theorem~\ref{Thm:Thm1}. We also have that $-\delta(\EORanking_k) \leq \sum_{}^{k_{A}}q_{i} - \sum_{}^{k_{B}}q_{j} \leq \delta(\EORanking_k)$ from Lemma~\ref{lemma:7.1}. 
\begin{eqnarray}
\nonumber
\text{Duality gap} & \leq & \sum_{\{A,B\}} 2\lambda_{A,B}\delta(\EORanking_k) \\
\nonumber
& \leq & \frac{2\delta(\EORanking_k)}{(G-1)\sum_{g}\relCand(g)}\sum_{\{A,B\}} \left|\frac{p_{A}-p_{B}}{q_{A}+q_{B}}\right| 
\end{eqnarray}

This proves that the EOR solution can only be ever as worse as $\phi\delta_{max}$ when compared with the optimal solution, where \\
$\phi = \frac{2}{(G-1)\sum_{g=1}^G \relCand(g)}\left( \sum_{\{A,B\}} \left|  \frac{p_{A}-p_{B}}{q_{A}+q_{B}}\right| \right)$ and $\delta_{max} = \max_g\left\{ \frac{\PRPRankingGroup{g}[1]}{\relCand(g)}\right\}$ from Lemma~\ref{lemma:7.4}.
\end{proof}

\stepcounter{lemmanum}
\begin{lemma}
\label{lemma:7.1}
\normalfont
EOR ranking is $\delta(\EORanking_{k})$ fairness optimal, implying that for all $G$ choose 2 possible pairs of groups $A$, $B$ $\in \{1, \cdots G\}$, we have $-\delta(\EORanking_k) \leq \sum_{i=i}^{k_{A}}q_i - \sum_{j=1}^{k_{B}}q_j \leq \delta(\EORanking_k)$.

This lemma follows directly from the EOR ranking principle of choosing the candidate that minimizes $\delta(\EORanking_k)$ defined according to Eq.~\eqref{eq:delta_multiple_groups}. 

\qed
\end{lemma}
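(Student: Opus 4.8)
The plan is to reduce the claimed two-sided bound directly to the definition of the multi-group EOR measure in \eqref{eq:delta_multiple_groups}, once the partial sums $\sum_{i=1}^{k_A} q_i$ and $\sum_{j=1}^{k_B} q_j$ are recognized as normalized accumulated relevances; so this is essentially a bookkeeping argument, generalizing Lemma~\ref{lemma:6.1} from two groups to $G$.

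First I would record that the selection rule \eqref{eq:l_multiple_g} never reorders candidates within a group, so for any prefix $k$ and any group $g$ the set $\EORanking_k \cap g$ is precisely the top $k_g$ entries of $\PRPRankingGroup{g}$, where $k_g$ is the number of group-$g$ candidates among the first $k$ selected. Since $q_i = p_i/\relCand(g)$ for $i \in g$, this gives
\[
\sum_{i=1}^{k_A} q_i \;=\; \frac{1}{\relCand(A)}\sum_{i \in A \cap \EORanking_k} p_i \;=\; \frac{\relCand(A|\EORanking_k)}{\relCand(A)},
\]
and identically $\sum_{j=1}^{k_B} q_j = \relCand(B|\EORanking_k)/\relCand(B)$. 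Hence the quantity appearing in the lemma is exactly the gap between the normalized accumulated relevances of groups $A$ and $B$ at prefix $k$.

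Second, I would invoke \eqref{eq:delta_multiple_groups}: by definition $\delta(\EORanking_k)$ is the range (maximum minus minimum, taken over all $G$ groups) of the normalized accumulated relevances $\relCand(g|\EORanking_k)/\relCand(g)$. Thus each group's value, in particular those of $A$ and $B$, lies in a common interval of width $\delta(\EORanking_k)$, so their signed difference lies in $[-\delta(\EORanking_k),\,\delta(\EORanking_k)]$, which is the claim. The phrase ``$\EORanking$ is $\delta(\EORanking_k)$ fairness optimal'' merely records that the greedy step of \eqref{eq:l_multiple_g} drives this range down to the smallest value the discreteness of the problem permits; the inequality itself is definitional and would hold for any ranking, not just the EOR one. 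The only point needing a line of care is the identification in the first step — that the within-group order produced by the algorithm agrees with the PRP order, so the partial sum really is the accumulated relevance and not some other subset sum — and once that is in place the statement is immediate, so there is no substantive obstacle here.
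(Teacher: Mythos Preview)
Your proposal is correct and follows essentially the same route as the paper: both reduce the pairwise bound to the definition of $\delta(\EORanking_k)$ in \eqref{eq:delta_multiple_groups} as the range of the normalized accumulated relevances, from which any pairwise signed difference is automatically bounded by $\pm\delta(\EORanking_k)$. Your write-up is in fact more explicit than the paper's one-line justification, and your remark that the inequality is definitional (holding for any ranking, with the greedy rule only serving to make $\delta(\EORanking_k)$ small) is a correct and clarifying observation.
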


\begin{lemma}
\label{lemma:7.2}
\normalfont
The constructed dual variables $\lambda \geq 0$. In particular, for any $i > k_g $ in group g, where $g \in \{1, \cdots G\}$, it holds that $\lambda_i' = 0$ and for any $i \leq k_g $ it holds that $\lambda_i' \geq 0$.
\end{lemma}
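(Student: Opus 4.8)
The plan is to mirror the two-group argument of Lemma~\ref{lemma:6.2}, reducing every expression attached to a fixed group to a sum of $G-1$ pairwise terms, each of which is exactly the two-group computation already carried out. The ``pair'' duals $\lambda_{g',g}$ and $\lambda_{g,g'}$ are non-negative by their very construction in \eqref{eq:lambda_1_corr}--\eqref{eq:lambda_2_corr} (they are truncations $[\cdot]_+$), and at most one of each such pair is nonzero, so the substantive work is (i) showing $\lambda_k\ge 0$ and (ii) determining the sign of $\lambda_i'$, split into the cases of selected versus unselected candidates.

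First I would fix a candidate $i$ in group $g'$ and substitute the group-$g'$ form of $\lambda_k$ from \eqref{eq:lambda_3_corr} into \eqref{eq:lambda'_corr}. After cancelling the common $\tfrac{p_{g'}}{\sum_g\relCand(g)}$ term this gives
\[
\lambda_i' = \left[\frac{p_i-p_{g'}}{\sum_g\relCand(g)} + (q_{g'}-q_i)\sum_{g\neq g'}\bigl(\lambda_{g',g}-\lambda_{g,g'}\bigr)\right]_+ .
\]
Since the sum over $g\neq g'$ has exactly $G-1$ terms and $p_i-p_{g'}$ does not depend on $g$, I would distribute the constant over the sum and then insert $\lambda_{g',g}-\lambda_{g,g'}=\tfrac{1}{(G-1)\sum_g\relCand(g)}\cdot\tfrac{p_{g'}-p_g}{q_{g'}+q_g}$ (legitimate since only one of the pair is nonzero). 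Each resulting summand has numerator $(p_i-p_{g'})(q_{g'}+q_g)+(q_{g'}-q_i)(p_{g'}-p_g)$, which, using the same-group identity $q_i p_{g'} = q_{g'} p_i$ (both $i$ and the boundary element $k_{g'}$ lie in $g'$, so $q=p/\relCand(g')$), collapses to $q_g(p_i-p_{g'})+p_g(q_i-q_{g'})$, exactly as in Lemma~\ref{lemma:6.2}. Hence
\[
\lambda_i' = \frac{1}{(G-1)\sum_g\relCand(g)}\left[\sum_{g\neq g'}\frac{q_g(p_i-p_{g'})+p_g(q_i-q_{g'})}{q_{g'}+q_g}\right]_+ .
\]
The sign then follows from the fact that the EOR Algorithm appends candidates within each group in decreasing order of $\prob(r_i|\Data)$: if $i>k_{g'}$ then $p_i\le p_{g'}$ and $q_i\le q_{g'}$, so every summand is $\le 0$, the bracket is $\le 0$, and $\lambda_i'=0$; if $i\le k_{g'}$ then $p_i\ge p_{g'}$, $q_i\ge q_{g'}$, and (all of $p_g,q_g,q_{g'}$ being non-negative) every summand is $\ge 0$, so $\lambda_i'\ge 0$. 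Running the same substitution on $\lambda_k$ itself yields $\lambda_k=\tfrac{1}{(G-1)\sum_g\relCand(g)}\sum_{g\neq g'}\tfrac{p_{g'}q_g+q_{g'}p_g}{q_{g'}+q_g}\ge 0$, a sum of ratios of non-negative quantities; together with the non-negativity of the pair duals this gives $\lambda\ge 0$.

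I expect the main obstacle to be bookkeeping rather than a conceptual leap: one must justify carefully that the single scalar $\lambda_k$ written in \eqref{eq:lambda_3_corr} is well defined — i.e.\ that the expression agrees no matter which reference group $g'$ is chosen — and that the $[\cdot]_+$ truncations together with the case split on which of $\lambda_{g',g},\lambda_{g,g'}$ vanishes remain coherent once several groups contribute simultaneously. For two groups this coincidence is an elementary identity; for $G$ groups it is precisely the place where the $q=p/\relCand(g)$ normalisation and the EOR fairness-optimality of the selected prefix (Lemma~\ref{lemma:7.1}) must be invoked with care. Once the pairwise reduction above is in place, the rest is the routine monotonicity argument already used in Lemma~\ref{lemma:6.2}, applied term by term.
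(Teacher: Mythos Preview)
Your proposal is correct and follows essentially the same route as the paper: substitute the group-$g'$ form of $\lambda_k$ into $\lambda_i'$, distribute the resulting expression into $G-1$ pairwise summands, and then observe that each summand is exactly the two-group quantity whose sign was already pinned down in Lemma~\ref{lemma:6.2}. The only difference is cosmetic---you carry out the pairwise simplification explicitly (via the identity $q_i p_{g'}=q_{g'}p_i$) whereas the paper simply cites Lemma~\ref{lemma:6.2} for each term; your flagged concern about the well-definedness of $\lambda_k$ across reference groups is a legitimate bookkeeping point that the paper asserts in \eqref{eq:lambda_3_corr} without separate justification.
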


\begin{proof} \normalfont \label{proof:lemma_2_corr}
In this Lemma, we show that $\lambda'=0$ for the elements not selected and $\lambda' \geq 0$ for the selected elements by the EOR Algorithm. Without loss of generality, we consider the element at index $i$ that belongs to group $A$. 
\begin{eqnarray}
    \nonumber
    \lambda_{i \in A}' &=& \left[\frac{p_i}{\sum_{g}\relCand(g)}- \lambda_k -q_i \sum_{g \neq A}(\lambda_{A,g}-\lambda_{g,A})\right]_{+} \\
    \nonumber
    &=& \left[\frac{p_i}{\sum_{g}\relCand(g)} - \frac{p_{A}}{\sum_{g}\relCand(g)} +q_{A} \sum_{g \neq A}(\lambda_{A,g}-\lambda_{g,A}) -q_i \sum_{g \neq A}(\lambda_{A,g}-\lambda_{g,A}) \right]_{+} \\
    &=& \left[\frac{p_i- p_{A}}{\sum_{g}\relCand(g)} + (q_i-q_{A})\sum_{g \neq A}(\lambda_{g,A}-\lambda_{A,g}) \right]_{+}\nonumber\\
    &=& \left[\sum_{g \neq A}\left(\frac{p_i- p_{A}}{(G-1)\sum_{g}\relCand(g)} + (q_i-q_{A})(\lambda_{g,A}-\lambda_{A,g}) \right)\right]_{+} \label{eq:lambda_2_ge_0_corr}
\end{eqnarray}

For every pair of $\lambda_{A,g}$ and $\lambda_{g,A}$, where $g \in \{1, \cdots G\}$ and $g\neq A$, only one of $\lambda_{A,g}, \lambda_{g,A}$ is $\geq 0$. Each of the $G-1$ terms inside the summation in Eq.~\eqref{eq:lambda_2_ge_0_corr} reduces to the two group case as follows. 
For $i > k_A$ and each \{$A, g$\}, the term evaluates to $\leq 0$ using Lemma~\ref{lemma:6.2} and thus $\lambda_i'$ is clipped to $0$. Similarly, for $i \leq k_A$ and each \{$A, g$\} the term evaluates to $\geq 0$ and thus $\lambda_i' \geq 0$.

We have shown that for any element not selected by EOR Algorithm the corresponding dual variable $\lambda' = 0$, and for any element selected by the EOR Algorithm the corresponding dual variable $\lambda' \geq 0$.

We now show that $\lambda_k \geq 0$. From Eq.~\eqref{eq:lambda_3_corr},
\begin{eqnarray} 
\lambda_k &=& \left[\frac{p_{A}}{\sum_{g}\relCand(g)}-q_{A}\sum_{g \neq A}(\lambda_{A,g}-\lambda_{g,A})\right] \nonumber\\
&=&\sum_{g \neq A}\left(\frac{p_{A}}{(G-1)\sum_{g}\relCand(g)} + q_{A}(\lambda_{g,A}-\lambda_{A,g}) \right)\label{eq:lambda_k_A_corr}
\end{eqnarray}

Each of the $G-1$ terms inside the summation in Eq.~\eqref{eq:lambda_k_A_corr} reduces to the two group case. For each \{$A, g$\}, the term evaluates to $\geq 0$ using Lemma~\ref{lemma:6.2} and thus $\lambda_k \geq 0$.

The $G(G-1)$ duals $\lambda_{A,B}$ are $\geq 0$ by their construction in \eqref{eq:lambda_1_corr}. Thus, we have shown that all the constructed dual variables $\lambda \geq 0$.
\end{proof}

\begin{lemma}
    \label{lemma:7.3}
    \normalfont
    The dual variables $\lambda=[\lambda_1' \cdots \lambda_n', \lambda_k, \lambda_{A,B}, \lambda_{B,A}, \cdots] $ are always feasible.
\end{lemma}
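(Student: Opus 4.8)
The plan is to follow the same two-case structure as Lemma~\ref{lemma:6.3}, but now exploiting that the multi-group dual constraint \eqref{eq:dual_constraint_corr} decomposes, for each fixed element, into a sum of $G-1$ two-group instances. Fix an arbitrary element $i$ and, without loss of generality, assume $i \in A$. In \eqref{eq:dual_constraint_corr} the $i$-th coordinate of $Q_{A',B'}$ vanishes for every pair of groups $\{A',B'\}$ not containing $A$, so the dual constraint for this element reads
\[
q_i \sum_{g \neq A}(\lambda_{A,g} - \lambda_{g,A}) + \lambda_k + \lambda_i' \;\geq\; \frac{p_i}{\sum_g \relCand(g)}.
\]
As in the two-group proof, I would split on whether candidate $i$ was chosen by the EOR Algorithm at prefix $k$ (i.e.\ $i \leq k_A$) or not ($i > k_A$).

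For a selected element the claim is immediate: $\lambda_i'$ is defined in \eqref{eq:lambda'_corr} as the positive part of $\frac{p_i}{\sum_g \relCand(g)} - \lambda_k - q_i\sum_{g\neq A}(\lambda_{A,g}-\lambda_{g,A})$, and Lemma~\ref{lemma:7.2} already establishes that this quantity is non-negative when $i \leq k_A$, so the positive part equals the quantity itself and the dual constraint holds with equality. For an unselected element Lemma~\ref{lemma:7.2} gives $\lambda_i' = 0$; substituting the expression \eqref{eq:lambda_3_corr} for $\lambda_k$ and regrouping, the left-hand side becomes
\[
\frac{p_A}{\sum_g \relCand(g)} + (q_i - q_A)\sum_{g \neq A}(\lambda_{A,g} - \lambda_{g,A}) = \sum_{g \neq A}\!\left(\frac{p_A}{(G-1)\sum_g \relCand(g)} + (q_i - q_A)(\lambda_{A,g} - \lambda_{g,A})\right).
\]
Each summand is precisely the left-hand side appearing in the unselected case of Lemma~\ref{lemma:6.3}, only with the normalizer $\relCand(A)+\relCand(B)$ replaced by $(G-1)\sum_g \relCand(g)$ in both the leading constant and the definitions \eqref{eq:lambda_1_corr}--\eqref{eq:lambda_2_corr} of $\lambda_{A,g},\lambda_{g,A}$; hence that lemma shows each summand is at least $\frac{p_i}{(G-1)\sum_g \relCand(g)}$, and adding up the $G-1$ of them gives the desired bound $\frac{p_i}{\sum_g \relCand(g)}$.

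The routine ingredients are: that the several expressions for $\lambda_k$ in \eqref{eq:lambda_3_corr} agree (an algebraic identity obtained by expanding $\lambda_{A,g},\lambda_{g,A}$, exactly as in the two-group case), that within each pair exactly one of $\lambda_{A,g},\lambda_{g,A}$ is nonzero, and that $p_i \le p_A,\ q_i \le q_A$ for $i>k_A$ (respectively the reversed inequalities for $i\le k_A$) since the EOR Algorithm appends candidates of each group in decreasing order of $\prob(r_i|\Data)$. The one point demanding care --- and the main obstacle --- is making the term-by-term reduction to Lemma~\ref{lemma:6.3} watertight: one must verify that, for every partner group $g$, the quantities $p_A=\PRPRankingGroup{A}[k_A]$, $q_A$, $p_g=\PRPRankingGroup{g}[k_g]$, $q_g$ are exactly those for which Lemma~\ref{lemma:6.3} was established, and that the selected/unselected dichotomy for $i$ is governed by $k_A$ alone and is therefore consistent across all $G-1$ summands (in particular the sign sub-cases $p_A \ge p_g$ versus $p_A < p_g$ must be handled uniformly, as both still yield a summand at least $\frac{p_i}{(G-1)\sum_g \relCand(g)}$). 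Once this alignment is checked, feasibility follows by invoking Lemma~\ref{lemma:6.3} once per partner group.
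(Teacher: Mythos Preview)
Your proposal is correct and mirrors the paper's own proof almost exactly: the paper also splits into the selected/unselected cases, obtains equality in the selected case directly from the definition of $\lambda_i'$, and in the unselected case substitutes \eqref{eq:lambda_3_corr} for $\lambda_k$, rewrites the left-hand side as the sum $\sum_{g \neq A}\bigl(\frac{p_A}{(G-1)\sum_g \relCand(g)} + (q_i - q_A)(\lambda_{A,g} - \lambda_{g,A})\bigr)$, and bounds each summand by $\frac{p_i}{(G-1)\sum_g \relCand(g)}$ via Lemma~\ref{lemma:6.3}. Your extra caution about verifying that the per-summand reduction matches the hypotheses of Lemma~\ref{lemma:6.3} (with the rescaled normalizer) is appropriate and is exactly the alignment the paper implicitly relies on.
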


\begin{proof} \label{proof:lemma_3_corr}

For some element $i \in A$, the duality constraint implies that
\begin{equation} 
    q_i \left(\overbrace{ \sum_{g}^{} (\lambda_{A,g}-\lambda_{g,A})}^{G-1 \text{ terms}}\right) + \lambda_k + \lambda_i'  \geq \frac{p_i}{\sum_{g}\relCand(g)} \label{eq:duality_proof_corr}
\end{equation}

Without loss of generality, we consider element $i \in A$.

\textbf{Case I:} Elements not selected by the EOR Algorithm.

Using the fact that $\lambda_i'=0$ for $i > k_A$ from Lemma~\ref{lemma:7.2}, and substituting $\lambda_k$ from Eq.~\eqref{eq:lambda_3_corr},  we get
\begin{eqnarray}
    \nonumber
     q_i \sum_{g \neq A} (\lambda_{A,g}-\lambda_{g,A}) + \lambda_k + \lambda_i' 
     &=& q_i \sum_{g \neq A}(\lambda_{A,g}-\lambda_{g,A}) + \frac{p_{A}}{\sum_{g}\relCand(g)}-q_{A} \sum_{g \neq A} (\lambda_{A,g}-\lambda_{g,A})\\
     \nonumber
     &=& \frac{p_{A}}{\sum_{g}\relCand(g)} + (q_i-q_A) \sum_{g \neq A}(\lambda_{A,g}-\lambda_{g,A}) \\
     &=& \sum_{g \neq A}\left(\frac{p_{A}}{(G-1)\sum_{g} \relCand(g)} + (q_i-q_A) (\lambda_{A,g}-\lambda_{g,A})\right) \label{eq:lambda_3_ge_0_corr}\\
     &\geq& \sum_{g \neq A}\frac{p_{i}}{(G-1)\sum_{g} \relCand(g)} \geq  \frac{p_{i}}{\sum_{g} \relCand(g)} \label{eq:lambda_3_ge_0_corr_redn}
\end{eqnarray}
Each of the $G-1$ terms inside the summation in Eq.~\eqref{eq:lambda_3_ge_0_corr} reduces to the two group case. For each \{$A, g$\}, the term evaluates to $\frac{p_{i}}{(G-1)\sum_{g} \relCand(g)}$ using Lemma~\ref{lemma:6.3} and thus the corresponding duality constraint is satisfied.

\textbf{Case II:} Elements selected by the EOR Algorithm.

Using the fact that $\lambda_i' \geq 0$ for $i \leq k_A$ from Lemma~\ref{lemma:7.2}, and substituting $\lambda_k$ from Eq.~\eqref{eq:lambda_3_corr}, $\lambda_i'$ for $i \leq k_A$ in \eqref{eq:lambda'_corr}, we get 
\begin{eqnarray}
    \nonumber
     q_i \sum_{g \neq A} (\lambda_{A,g}-\lambda_{g,A}) + \lambda_k + \lambda_i' 
     &=& q_i \sum_{g \neq A}(\lambda_{A,g}-\lambda_{g,A}) + \lambda_k + \frac{p_{A}}{\sum_{g}\relCand(g)} -\lambda_k - q_{i} \sum_{g \neq A} (\lambda_{A,g}-\lambda_{g,A})\\
     \nonumber
     &=& \frac{p_{A}}{\sum_{g}\relCand(g)} \geq  \frac{p_{i}}{\sum_{g}\relCand(g)}
\end{eqnarray}
Thus, for elements selected by the EOR Algorithm i.e. $i \leq k_A$, the corresponding dual constraint is satisfied.
\end{proof}

We now present the proof for the global a priori bound on $\delta(\EORanking_{k})$ for $G$ groups.

\begin{lemma}
    \label{lemma:7.4}
    \normalfont
    The global a priori bound on $\delta(\EORanking_{k})$ for $G$ groups is given by $\delta_{max} = \max_{g} \left\{\frac{\PRPRankingGroup{g}[1]}{\relCand(g)}\right\}$
\end{lemma}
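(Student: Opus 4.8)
The plan is to follow the inductive template used for the two-group bound in Theorem~\ref{thm:thm_bound}, now applied to the multi-group selection rule of Eq.~\eqref{eq:l_multiple_g} and the $\delta$ of Eq.~\eqref{eq:delta_multiple_groups}. First I would introduce the shorthand $f_g(\sigma) = \relCand(g|\sigma)/\relCand(g)$ for the accumulated relevance proportion of group $g$, so that $\delta(\sigma) = \max_g f_g(\sigma) - \min_g f_g(\sigma)$, and record the structural facts I will lean on: each $f_g$ lies in $[0,1]$, equals $0$ on the empty ranking, equals $1$ precisely when group $g$ has been exhausted (assuming $\relCand(g)>0$), is non-decreasing as candidates are appended, and appending the highest-ranked remaining candidate of group $g$ raises $f_g$ by $p/\relCand(g) \le \PRPRankingGroup{g}[1]/\relCand(g) \le \delta_{max}$ while leaving every other $f_{g'}$ untouched.

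I would then prove $\delta(\EORanking_k) \le \delta_{max}$ by induction on $k$. The base case is immediate since all $f_g=0$. For the step, assuming $\delta(\EORanking_{k-1}) \le \delta_{max}$ with $1 \le k \le n$, it suffices to exhibit a single group whose top remaining candidate, when appended, keeps $\delta \le \delta_{max}$, because Eq.~\eqref{eq:l_multiple_g} appends the candidate that minimizes the resulting $\delta$. The natural choice is a group $g^\dagger$ attaining $\min_g f_g(\EORanking_{k-1})$ that still has a candidate left; I would justify that such a group exists by noting that an exhausted group has $f_g=1$, so if all minimizing groups were exhausted then $\min_g f_g=1$, forcing $f_g=1$ for every $g$ and hence $k-1=n$, contradicting $k \le n$.

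The remaining work is a two-case split on the updated value $f_{g^\dagger}$. If after the update $f_{g^\dagger}$ is still $\le \max_{g \ne g^\dagger} f_g(\EORanking_{k-1})$, then the maximum over all $f_g$ is unchanged and the minimum can only rise, so $\delta$ does not increase and stays $\le \delta_{max}$ by the inductive hypothesis. Otherwise $f_{g^\dagger}$ becomes the new maximum, the new minimum is attained by some other, unchanged group whose value is at least the old $f_{g^\dagger}$, so the new $\delta$ is bounded by the increment to $f_{g^\dagger}$, which is at most $\delta_{max}$. This case analysis also subsumes the ``one group exhausted'' situation handled separately in the two-group proof, since restricting the choice of $g^\dagger$ to groups with remaining candidates is always legitimate. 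I do not expect a deep technical obstacle; the delicate part is the bookkeeping — confirming that $g^\dagger$ realizes the global (not merely the non-exhausted) minimum, tracking which group attains $\max$ and $\min$ before and after each update in each case, and checking the degenerate behavior when some probabilities or some $\relCand(g)$ vanish. Once the $\delta_{max}$ bound is established it plugs directly into the $\phi\delta(\EORanking_k)$ gap of Theorem~\ref{corr:corr_multiple_groups} to yield the stated a priori guarantee.
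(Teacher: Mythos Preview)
Your proposal is correct and follows essentially the same inductive argument as the paper's proof of Lemma~\ref{lemma:7.4}: exhibit that appending the top remaining candidate from a currently-minimum group never pushes $\delta$ above $\delta_{max}$, via the same two-case split on whether that group's updated proportion overtakes the previous maximum, and then invoke the minimizing selection rule of Eq.~\eqref{eq:l_multiple_g}. Your version is marginally tidier---you take the empty ranking as the base case and fold the ``one group exhausted'' situation into the choice of $g^\dagger$ rather than treating it separately as in Theorem~\ref{thm:thm_bound}---but the core idea and case analysis are identical.
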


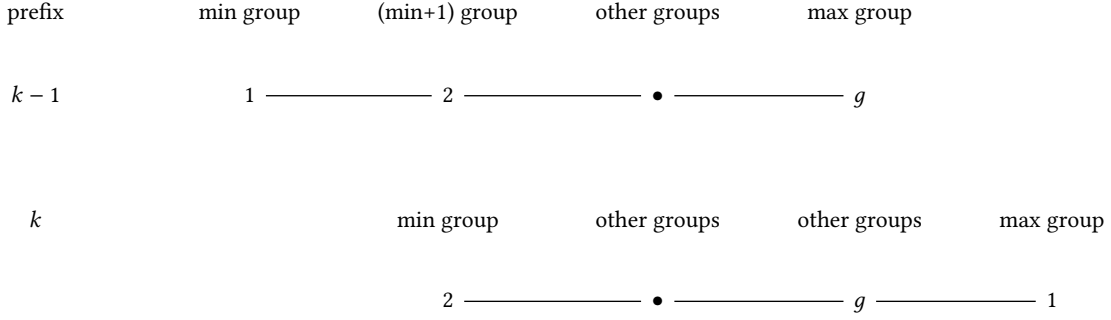
\begin{figure*}[!hbtp]
\centering
\begin{tikzcd}[ampersand replacement=\&]
	{\text{prefix}} \&\& {\text{min group}} \& {\text{(min+1) group} } \& {\text{other groups}} \& {\text{max group}} \\
	{k-1} \&\& 1 \& 2 \& \bullet \& g \\
	\\
	k \&\&\& {\text{min group}} \& {\text{other groups}} \& {\text{other groups}} \& {\text{max group}} \\
	\&\&\& 2 \& \bullet \& g \& 1
	\arrow[no head, from=2-3, to=2-4]
	\arrow[no head, from=2-4, to=2-5]
	\arrow[no head, from=2-5, to=2-6]
	\arrow[no head, from=5-4, to=5-5]
	\arrow[no head, from=5-5, to=5-6]
	\arrow[no head, from=5-6, to=5-7]
\end{tikzcd}
\caption{Illustration for the case of Multiple groups}
\label{fig:multiple_groups_proof}
\end{figure*}

\begin{proof}
    We will show that for $G$ groups, the value of $\delta_{max}$ such that a feasible ranking will be provided and that always satisfies $\delta(\EORanking_k) \leq \delta_{max}$ for every given $k$ is given by 
    \begin{align}
        \delta_{max}=\max \left(\frac{\PRPRankingGroup{1}[1]}{\relCand(1)}, \frac{\PRPRankingGroup{2}[1]}{\relCand(2)}, \cdots,\frac{\PRPRankingGroup{g}[1]}{\relCand(g)} \cdots \frac{\PRPRankingGroup{G}[1]}{\relCand(G)} \right)
    \end{align}

    In the remaining, we drop the superscript of EOR for simplicity and $\sigma_j$ refers to $\EORanking_j$.
    
    We argue by an inductive argument similar to the proof of Theorem~\ref{thm:thm_bound}.
    Consider the base case of $k=1$, when the first element is to be selected. 
    The EOR algorithm will select according to Eq.~\eqref{eq:delta_multiple_groups}
    resulting in the lower $\delta(\sigma_{k=1})$. Thus, $\delta(\sigma_{k=1})$ is clearly $\leq \delta_{max}$.

    We assume that for a given $k-1$, $\delta(\sigma_{k-1}) \leq \delta_{max}$ and show that at $k$, $\delta(\sigma_{k}) \leq \delta_{max}$.
    
    Consider the general case as depicted in Figure~\ref{fig:multiple_groups_proof}, where a group $1$ has the lowest accumulated proportion and group $g$ has the highest at prefix $k-1$. Since $\delta(\sigma_{k-1}) \leq \delta_{max}$ from inductive assumption, we have 
     \begin{eqnarray}
        \nonumber
         \frac{\relCand(g|\sigma_{k-1})}{\relCand(g)}-\frac{\relCand(1|\sigma_{k-1})}{\relCand(1)} \leq \delta_{max}  \label{eq:eor_extended_satisfy}
    \end{eqnarray}

    At the next prefix $k$, if the group that is selected has $\frac{\relCand(.|\sigma_{k})}{\relCand(.)} \leq \frac{\relCand(g|\sigma_{k-1})}{\relCand(g)}$, then $\delta(\sigma_{k}) \leq \delta_{max}$. Note that $\delta(\sigma_{k})$ is always non-negative by definition from Eq.~\eqref{eq:delta_multiple_groups}.

    We now consider the case when a group $g'$ is selected at the next prefix $k$ such that $\frac{\relCand(g'|\sigma_{k})}{\relCand(g')} > \frac{\relCand(g|\sigma_{k-1})}{\relCand(g)}$. Let us first consider that $g'$ is group $1$. We have $\frac{\relCand(1|\sigma_{k})}{\relCand(1)} > \frac{\relCand(g|\sigma_{k})}{\relCand(g)}$.
    Selecting group $1$ at $k$ means that the rest of the groups have the same accumulated relevance proportion $\frac{\relCand(.|\sigma_{k})}{\relCand(.)}$ at prefix $k$ as $k-1$.
    We analyze the difference of $\frac{\relCand(.|\sigma_{k})}{\relCand(.)}$ between the group that was most behind-- group $1$ and the group that was second most behind -- group $2$ and whether that remains within $\delta_{max}$. 
    If the added element from group $1$ is denoted by $p_i$, the EOR constraint value at $k$ is
    \begin{eqnarray}
        \delta(\sigma_{k}) & = & \frac{\relCand(1|\sigma_{k-1})}{\relCand(1)}+\frac{p_i}{\relCand(1)}- \frac{\relCand(2|\sigma_{k})}{\relCand(2)} \label{eq:eor_extended_delta_k} \\
        & = & \frac{p_i}{\relCand(1)} - \left(\frac{\relCand(2|\sigma_{k})}{\relCand(2)} -\frac{\relCand(1|\sigma_{k-1})}{\relCand(1)} \right) =  \frac{p_i}{\relCand(1)} - \left(\frac{\relCand(2|\sigma_{k-1})}{\relCand(2)} -\frac{\relCand(1|\sigma_{k-1})}{\relCand(1)} \right) \nonumber \label{eq:eor_extended_delta_k_rearranged}
    \end{eqnarray}

    Eq.~\eqref{eq:eor_extended_delta_k} holds since group $1$ is now the group with maximum relevance proportion after adding $p_i$ - the top most current element from group $1$. Group $2$ becomes the group with minimum relevance proportion. 

    Since $\frac{p_i}{\relCand(1)} \leq \frac{\PRPRankingGroup{1}[1]}{\relCand(1)} \leq \delta_{max}$ and because group $1$ was behind group $2$ at prefix $k-1$, we have $\frac{\relCand(2|\sigma_{k-1})}{\relCand(2)} \geq \frac{\relCand(1|\sigma_{k-1})}{\relCand(1)}$ since . As a result,
    \begin{eqnarray}
        \nonumber
        \delta(\sigma_{k}) \leq & \frac{p_i}{\relCand(1)} \leq \frac{\PRPRankingGroup{1}[1]}{\relCand(1)} \leq \delta_{max} 
    \end{eqnarray}
    We have shown above that if the group with lowest relevance proportion at prefix $k-1$ (group $1$ in this case) is selected and its relevance proportion now exceeds the group with the highest relevance proportion at prefix $k-1$ (group $g$ in the case above), then $\delta(\sigma_{k}) \leq \delta_{max}$. Thus, we can say that at least one group exists that satisfies $\delta_{max}$ EOR constraint at prefix $k$. 
    This completes the proof that the EOR algorithm always provides a feasible ranking that satisfies 
    $\delta_{max}=\max_{g \in \{ 1 \cdots G\}} \left\{ \frac{\PRPRankingGroup{g}[1]}{\relCand(g)}\right\}$ for $G$ groups.
\end{proof}

\section{Experiment Details} \label{exp_details}

\subsection{Baselines} \label{baselines}

We compare rankings from Algorithm~\ref{alg:EOR_alg} with the following baselines
\paragraph{Probability Ranking Principle ($\PRP$)} Candidates are selected in decreasing order of relevance independent of their group membership.

\paragraph{Uniform Policy ($\Uniform$)} 
Candidates are selected randomly independent of their group membership or relevance.

\paragraph{Thompson Sampling Ranking Policy ($\TS$)  \citep{NEURIPS2021_63c3ddcc}} 
For $\TS$, binary relevances are drawn according to $r_i \sim \prob(r_i|\Data)$, and candidates are sorted in decreasing order of relevance $r_i$ with their ranking randomized for the same value of relevance $r_i$. 
\[ \TS \sim \arg \text{sort}_{i} [r_i] \quad \textit{s.t} \quad  r_i \sim \prob(r_i|\Data)\] 
$\TS$ ranks each candidate $i$ in position $k$ with probability that $i$ has $k^{th}$ highest relevance.

For both $\TS$ and $\Uniform$, we compute expectation over $100$ rankings $\UniformRanking \sim \Uniform$ or $\TSRanking \sim \TS$ respectively and compute $\delta(\sigma_k)$ used in \Cref{tab:vary_disp_unc} as
\begin{equation}
    \delta(\sigma_k) = \expect_{\sigma \sim \pi}\left[\max_g  \left\{ \frac{\relCand(g|\sigma_k)}{\relCand(g)} \right\}  - \min_g \left\{ \frac{\relCand(g|\sigma_k)}{\relCand(g)}\right\} \right] \nonumber
\end{equation}

In order to plot a single ranking $\UniformRanking$, $\TSRanking$ for all experiments, we select the ranking with median $\sum_{k=1}^{n}|\delta(\sigma_k)|$

\paragraph{Demographic Parity ($\DP$)} Candidates in each group are sorted in decreasing order of $\prob(r_i|\Data)$ and selected such that the following constraint is minimized. This constraint is similar to the statistical parity variations introduced in \cite{DBLP:journals/corr/YangS16a}.

\begin{equation}
    \forall k \quad \frac{S(A|\sigma_k) }{S(A)}-\frac{S(B|\sigma_k)}{S(B)} \label{eq:demographic_parity}
\end{equation}
where $S(.)$ represents the size of the group. For a fair comparison with $\EORanking$, we use Algorithm~\ref{alg:EOR_alg} and instead of minimizing Eq.~\eqref{eq:EOR}, we minimize the above demographic parity constraint \eqref{eq:demographic_parity}.
We now discuss other variations of proportional representation constraints that have been introduced in prior literature \cite{Celis2017RankingWF, 10.1145/3351095.3372858, 10.1145/3442188.3445930}. Generally, these constraints require that the disadvantaged group selected is at least a specific proportion $\alpha$ of top k.
\begin{eqnarray}
S(B|\sigma_k) \geq \alpha k \label{proportional_constraints}
\end{eqnarray}
where $\alpha=\frac{S(B)}{S(A)+S(B)}$ and Eq.~\eqref{proportional_constraints} is used as the fairness constraint while maximizing the utility to the principal.
This type of representational constraint by definition requires the designation of a disadvantaged group. By designating B as the disadvantaged group, the constraint for proportional Rooney-Rule policy \cite{shen2023fairness}, which we denote by $\PRR$ is as follows 
\begin{equation}
\nonumber
    \forall k \quad \frac{S(B|\sigma_k) }{k} \geq \frac{S(B)}{S(A)+S(B)} \label{eq:proportional_constraint}
\end{equation}
We empirically compare $\PRR$ baseline with other ranking policies in Figure~\ref{fig:DP_variations} and as expected, find that it is similar to the baseline of $\DP$, where $\PRR$ and $\DP$ almost overlap.
Thus for a fair and analogous comparison with $\EOR$, we use \eqref{eq:demographic_parity} as the $\DP$ baseline for all empirical evaluations.
\begin{figure*}[t!]
    \centering
    \begin{subfigure}[t]{1.0\linewidth}
        \centering
        \includegraphics[width=1.0\textwidth]{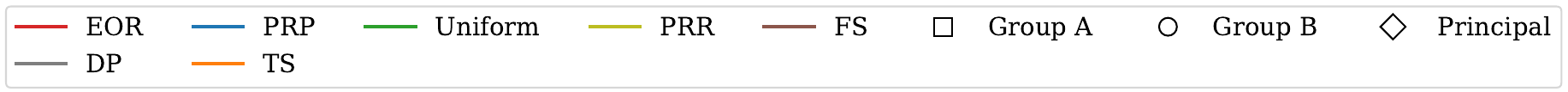}
        \label{fig:DP_variations_legend}
    \end{subfigure}
    \begin{subfigure}[t]{1.0\linewidth}
        \centering
        \includegraphics[width=1.0\textwidth]{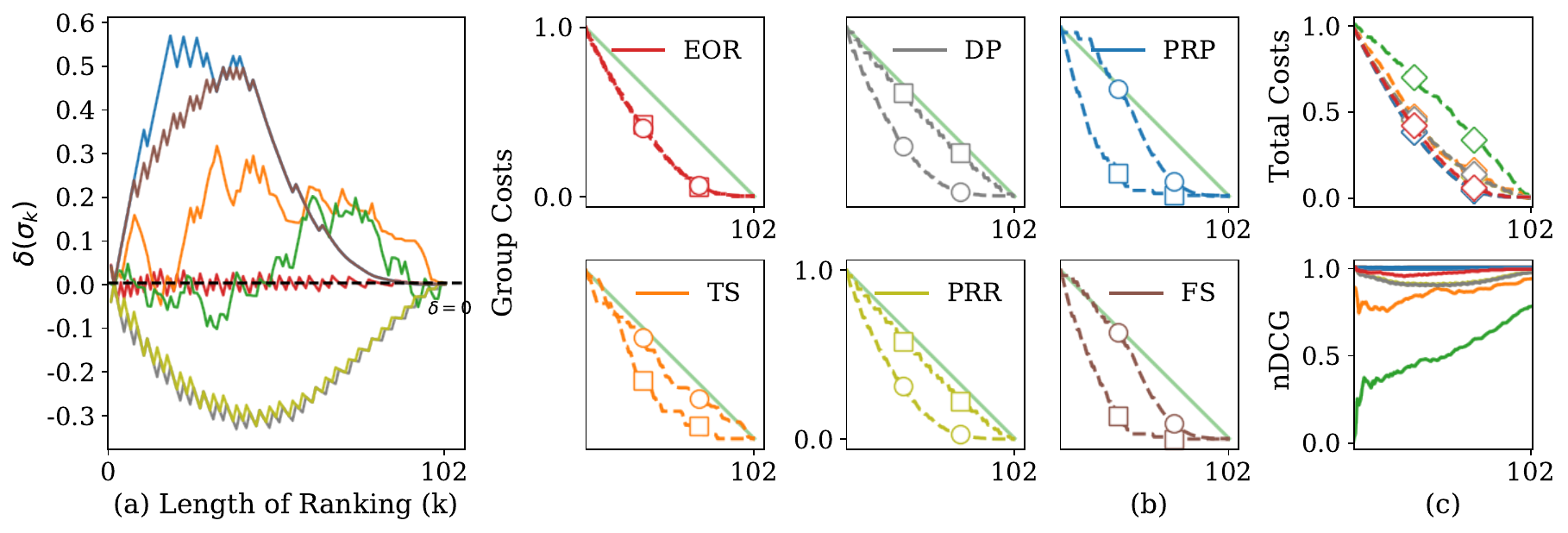}
   \end{subfigure}
  \caption{\normalfont EOR criterion $\delta(\sigma_k)$, costs of the ranking policies, and DCG Utility for Synthetic dataset with proportional Rooney-Rule like constraint, $\PRR$. For group A we draw $S(A)=30$ relevance probabilities from $Powerlaw(\eta=5)$, and then draw for group B from $Powerlaw(\eta=0.5)$ until $\relCand(A) \approx \relCand(B)$.}
  \label{fig:DP_variations}
\end{figure*}
For more than two groups, we extend the DP baseline with the selection rule based on group size as follows. In particular,
\begin{eqnarray}
    \nonumber
     \delta(\DPRanking) &=& \max_g \left\{\frac{S(g|\sigma_k)}{S(g)}\right\} - \min_g \left\{\frac{S(g|\sigma_k)}{S(g)}\right\} \label{eq:DP_multiple_groups} \\
    \nonumber
    l_g &=& \PRPRankingGroup{g}[1] \hspace{10pt} \forall g \in \{1 \cdot G\} \nonumber \\
    g^{*} &=& \argmin_{g \in [1..G]} \delta(\DPRanking \cup \{l_g\}) ; \hspace{5pt}   l_{g^{*}} = \PRPRankingGroup{g^{*}}[1]\label{eq:DP_l_multiple_g}
\end{eqnarray}

\paragraph{FA$^{*}$IR  Ranking Principle ($\FS$)} This criterion is anchored on the principle that a top-k ranking is fair when the proportion of disadvantaged candidates selected doesn't fall far below a required minimum proportion $p$. This is formalized with a Binomial distribution, and a confidence level ($1-\alpha$).
A function of the binomial cdf is computed apriori and is used as an input in the FA$^{*}$IR Algorithm.
Since Binomial(p=0.5,n) corresponds to a ranking where at each position, a candidate from either group is selected randomly, FA$^{*}$IR is a "softened" version of demographic parity (DP). As a result, FA$^{*}$IR is fundamentally different from Axiom~\ref{axm:axm_uniform} and Definition~\ref{def:eor_constraint} derived from the uniform lottery fairness because, unlike DP, the uniform lottery is anchored on selecting an equal fraction of relevance from each group.
Unlike $\EOR$, $\FS$ is oblivious to the relevance distribution and thus cannot take disparate uncertainty into account. FA$^{*}$IR also requires the normative designation of a disadvantaged group.

Consider the following example for top k=4 selection, with the probability of relevance for 
group A = [0.7, 0.7, 0.7, 0.7, 0.1, 0.1], group size = 6, relevant candidates = 3.0. Similarly, the probability of relevance for
group B = [0.5, 0.5, 0.5, 0.5, 0.5, 0.5], group size = 6, relevant candidates = 3.0.
The EOR Ranking for top-4 is [0.5, 0.7, 0.5, 0.7] with 2 candidates from group A, and 2 from group B, resulting in $\delta(\sigma_4^{EOR})=0.13$. 
The $\FS$ Algorithm with Binomial(p=0.5, n=12), k=4 and $\alpha=0.1$ requires that at least 1 candidate be selected from the disadvantaged group while maximizing the utility to the principal. 
FA$^{*}$IR ranking with group B as the disadvantaged group is $\sigma_4^{FS}=[0.7, 0.7, 0.7, 0.5]$. It selects 3 candidates from group A, and 1 from group B, resulting in $\delta(\sigma_4)=0.53$.
If instead group A is designated as the disadvantaged group, $\sigma^{FS}=[0.7, 0.7, 0.7, 0.7]$ with all candidates selected from group A, and none from group B, resulting in $\delta(\sigma_4^{FS})=0.93$. 
Note that for both FA*IR rankings, far fewer relevant candidates are chosen from group B, even though both groups have an equal number of relevant candidates in expectation.

In all the empirical evaluations in this paper, we assign group B as the minority group for $\FS$ and use the fairsearch core library \footnote{\url{https://github.com/fair-search/fairsearch-fair-python}} with default parameters of $\alpha=0.1$.

Next, we discuss two exposure-based formulations $\EXP$ and $\RA$. 
\paragraph{Exposure-based Disparate Treatment ($\EXP$)}
This policy enforces that the allocation of exposure to each group is proportional to their average utility. Specifically for two groups A and B,
\[ \frac{\text{Exposure}(A|\Sigma)}{U(A)} =  \frac{\text{Exposure}(B|\Sigma)}{U(B)} \]
where $\Sigma$ is the doubly stochastic ranking matrix obtained from solving the Linear Program in \citep{10.1145/3219819.3220088}. For multiple groups, the above constraint is added for each pair of groups. $\text{Exposure}(g|\Sigma) = \frac{\Sigma_{i,j}v_j}{S(g)}$, $v_j = \frac{1}{\log{(j+1)}}$ for the $j^{th}$ position, and $U(g)=\frac{\sum_{i \in g}p_i}{S(g)}=\frac{\relCand(g)}{S(g)}$. In particular for two groups A, B, we solve the following LP \citep{10.1145/3219819.3220088}

\begin{eqnarray}
    \text{Maximize}  & P^{T} \Sigma v  \label{eq:lp_exp}& \hspace{10pt} \text{utility to the principal}\\
    \text{subject to} & \basis^{T} \Sigma = \basis^{T} &   \hspace{10pt} \text{(sum of probabilities for each position)}    \nonumber \\
       & \Sigma \basis = \basis&   \hspace{10pt} \text{(sum of probabilities for each candidate)}    \nonumber \\
      &  0 \leq \Sigma_{i,j} \leq 1 &   \hspace{10pt} \text{(valid probability)}    \nonumber \\
    &  \left( \frac{\indicator_{i \in A}}{\relCand(A)} - \frac{\indicator_{j \in B}}{\relCand(B)}\right)\Sigma v = 0& \hspace{10pt} \text{(exposure constraint)}    \nonumber 
\end{eqnarray}

The group cost is computed as $\frac{\indicator_g P\Sigma}{\relCand(g)}$, total cost as $\frac{P\Sigma}{\sum_g \relCand(g)}$ and EOR criterion as $\max_g \left\{ \frac{\indicator_g P\Sigma}{\relCand(g)}\right\} - \min_g \left\{ \frac{\indicator_g P\Sigma}{\relCand(g)}\right\}$
\paragraph{Rank Aggregation w. proportional allocation of Exposure}
For $\RA$, we modify the baseline for fair rank aggregation in \citep{10.1145/3593013.3594085} as follows. In fair rank aggregation, all $n$ candidates are ranked by $m$ voters to achieve a ranking with maximum consensus accuracy, where consensus may be according to different aggregation methods while achieving fairness of exposure w.r.t groups. 
\citep{10.1145/3593013.3594085} proposes an algorithm that finds the consensus maximizing ranking and then swaps the candidates such that the equality of exposure is satisfied in that ranking. To adapt this baseline, we use the ranking from utility maximizing $\PRP$ as the consensus ranking and use the algorithm from \citep{10.1145/3593013.3594085} to swap elements in PRP ranking until the exposure constraint below is satisfied,
\[ \frac{\min_g{Exposure}(g)}{\max_g{Exposure}(g)} \geq \text{threshold} \]
A threshold of $0.95$ is used in experiments and on average over 100 runs, an exposure of $ 0.96\pm{0.01}, 0.96\pm{0.00}, 0.97\pm{0.00}$ is achieved for high, medium, and low levels of disparate uncertainty respectively in \Cref{tab:vary_disp_unc}.

\subsection{Synthetic Dataset}\label{synthetic_extension}
To simulate disparate uncertainty between groups, we draw 
$\prob(r_i|\Data)$ directly from specific probability distributions as follows.
For Group A, we obtain $p_i \sim Beta(\frac{1}{20}, \frac{1}{20})$ and keep them fixed. We simulate $100$ runs and in each run, $p_i$ for group B are sampled as follows until $\relCand(B) \approx \relCand(A)$ (total expected relevance for groups can only differ by 1.0). 
\begin{itemize}
    \item High Disparate Uncertainty: $Beta(5,5)$
    \item Medium Disparate Uncertainty: $Beta(\frac{1}{2}, \frac{1}{2})$
    \item Low Disparate Uncertainty: $Beta(\frac{1}{20}, \frac{1}{20})$. Note that even when both groups are drawn from the same distribution, any sampled instance still contains some amount of disparate uncertainty.
\end{itemize}

Results for unfairness and effectiveness of rankings are reported with standard error in \Cref{tab:vary_disp_unc} (left). The posterior distributions in \Cref{tab:vary_disp_unc} (right) uses 50 samples for each candidate in group A, while for group B, the number of samples increases from 10 to 30 to 50 as the setting changes from high to medium to low disparate uncertainty respectively.

To estimate $\prob(i \in \sigmaKPi)$ for stochastic policies-- $\Uniform$ and $\TS$, we draw $d=10^3$ Monte Carlo samples and compute Monte Carlo estimate according to \eqref{eq:stochastic_policy}. 
\begin{align}
    1-\prob(i \in \sigmaKPi) = \frac{1}{d} \sum_{d}{\indicator_{i \notin {\sigma_k}}} \label{eq:stochastic_policy}
\end{align} 
We compute the costs using $\prob(r_i|\Data), \prob(i \in \sigmaKPi)$ according to Eqs.~\eqref{eq:subgroup_cost}, and \eqref{eq:total_cost}.

In Figure~\ref{fig:high_medium_low_disp_unc}, we plot a random sample from \Cref{tab:vary_disp_unc} according to the generation process described above. 
We also qualitatively analyze a commonly used measure of utility to the principal, namely, the expected Normalized Discounted Cumulative Gain (nDCG), which according to our model is, 
\begin{equation}
    nDCG(\sigma_k) = \frac{DCG(\sigma_k)}{iDCG} = \frac{\sum_{i\in \sigma_k} v_i r_i}{\sum_{i\in \sigma^{Ideal}_k} v_i r_i}; \hspace{20pt} \sigma^{Ideal} = \arg \text{sort}_{i} r_i \nonumber
\end{equation}
where $v_i = log_2\frac{1}{(1+i)}$ for the $i^{th}$ position. When true relevance labels are known, for instance in US Census experiments in Figure~\ref{fig:USCensus_NY_2_groups}, $r_i \in \{0,1\}$ consists of the true relevance labels, otherwise in synthetic experiments in Figure~\ref{fig:high_medium_low_disp_unc}, $r_i \in [0,1]$ consists of the calibrated $\prob(r_i=1|\Data)$.

As shown in Figure~\ref{fig:nDCG_high_disp_unc}, the nDCG for EOR ranking is only slightly lower than the nDCG optimal PRP ranking and competitive with all other ranking policies.
 \begin{figure}[h]
    \centering
    \includegraphics[width=0.4\textwidth]{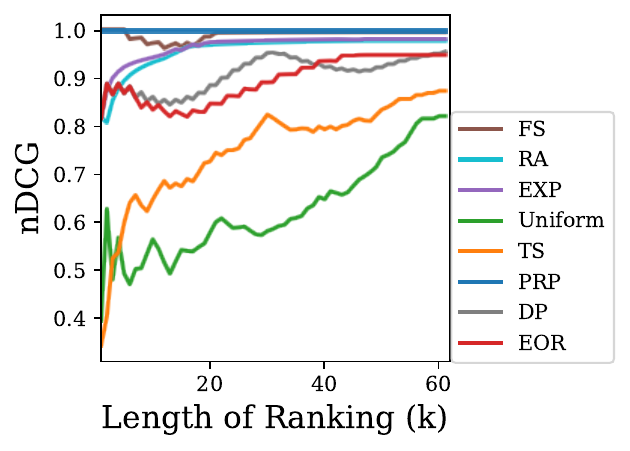}
      \caption[]{nDCG for High disparate uncertainty setting shown in Figure~\ref{fig:synthetic_main_disp_unc}}\label{fig:nDCG_high_disp_unc}
  \end{figure}
In all of these experiments, we confirm our findings that $\EOR, \Uniform$ distribute the subgroup and total costs evenly while other ranking policies $\PRP, \DP$, and $\TS$ place a high cost burden on one of the groups. Further, for $\EOR$, the total cost to the principal and nDCG utility is close to the optimal (but unfair) total cost and utility of $\PRP$, indicated by overlapping lines in subplots (c) of Figure~\ref{fig:high_medium_low_disp_unc}.
\begin{figure}[t!]
\begin{subfigure}[t]{1.0\linewidth}
    \hspace*{\fill}
    \includegraphics[width=0.95\textwidth] {Figures/synthetic/synthetic_legend.pdf}
    \label{fig:synth_legend}
  \end{subfigure}
  \bigskip
  \begin{subfigure}[t]{1.0\linewidth}
    \centering
    \includegraphics[width=1.0\textwidth]{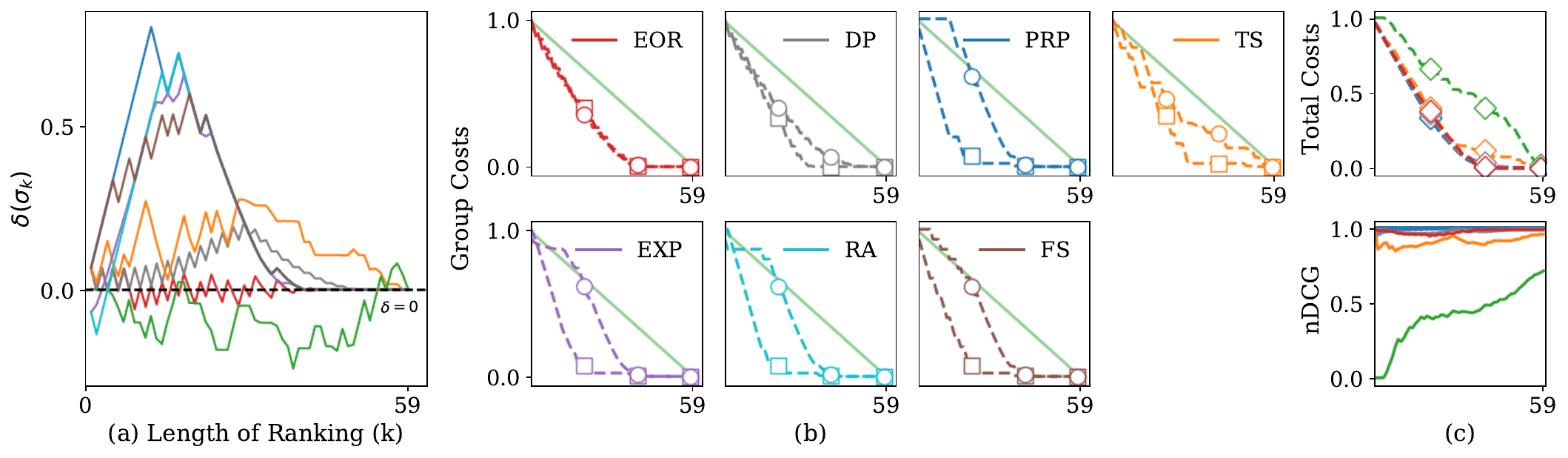}
    \phantomcaption{Medium Disparate Uncertainty}
    \label{fig:medium_disp_unc}
  \end{subfigure}
  \begin{subfigure}[t]{1.0\linewidth}
    \centering
    \includegraphics[width=1.0\textwidth]{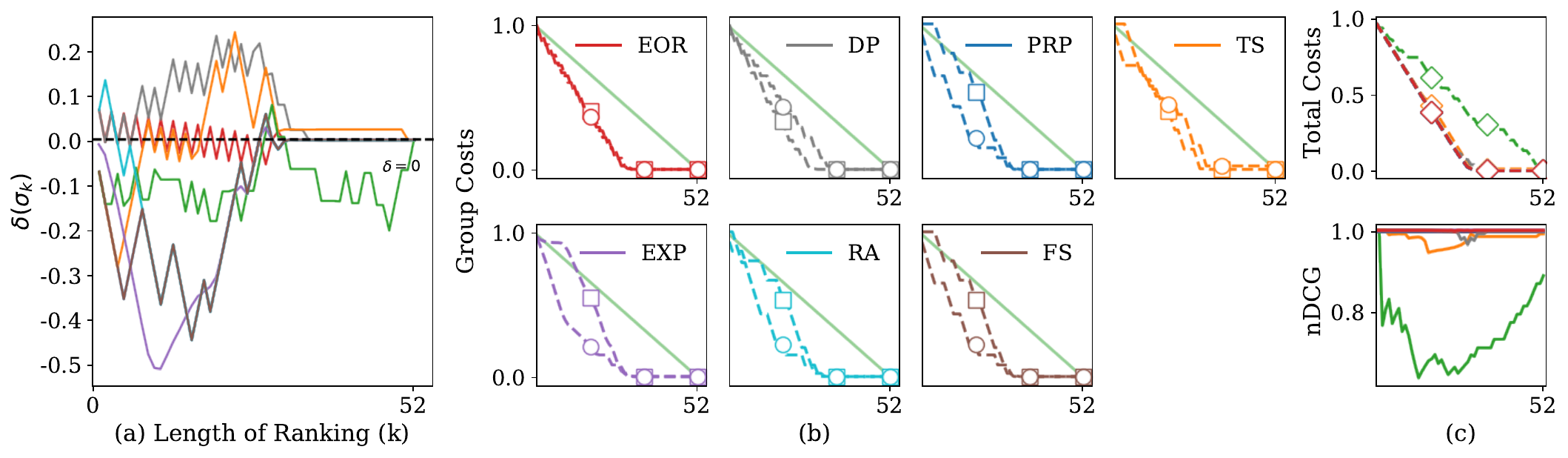}
    \phantomcaption{Low Disparate Uncertainty}
    \label{fig:low_disp_unc}
  \end{subfigure}
  \caption{\textbf{Top}: Medium disparate uncertainty \textbf{Bottom} Low disparate uncertainty for a randomly sampled instance.}
  \label{fig:high_medium_low_disp_unc}
\end{figure}

\subsection{US Census Survey Dataset} \label{USCensus_details}
We use the ACSIncome task with default settings \cite{NEURIPS2021_32e54441} for the state of New York and Alabama for 2018, with 1-year horizon. The dataset consists of 10 features, out of which 8 are categorical. Race is among the features that we include in the prediction task following \cite{NEURIPS2021_32e54441}. There are 103,021 records for New York and 22,268 records for Alabama. For pre-processing, the categorical features are one-hot encoded, while the other two numerical features (`AGE' and `WKHP') are standardized to have mean $0$ and standard deviation $1$. 
We divide this dataset into 60/20/20 for train/val/test split and fit a Gradient Boosting Classifier \footnote{\href{https://scikit-learn.org/stable/modules/generated/sklearn.ensemble.GradientBoostingClassifier.html}{scikit-learn Gradient Boosting Classifier}} with the parameters loss as `exponential' and max\_depth as 5 following hyperparameter configuration of \cite{NEURIPS2021_32e54441}. This gives a DP violation $P(\hat{Y}=1|White) - P(\hat{Y}=1|Black)$ of $0.19$ and an EO violation $P(\hat{Y}=1|Y=1, White) - P(\hat{Y}=1|Y=1, Black)$ of $0.18$ for New York and a a DP violation of $0.22$, EO violation of $0.29$ for Alabama, which is roughly similar to Figure 2 and 6 of \cite{NEURIPS2021_32e54441} before any fairness interventions are applied in the classification setting.

We subset the dataset to contain records with White or Black/African American racial membership (Alabama and New York) and subset records with White, Black, Asian, and Others racial membership (New York only) for two and four groups respectively. To calibrate relevance probabilities, we fit a Platt Scaling \cite{PlattProbabilisticOutputs1999} calibrator on the validation data split group-wise and apply Platt Scaling to the test set probability estimates. Figure~\ref{fig:USCensus_calibration_NY}, \ref{fig:USCensus_calibration_AL} and \ref{fig:USCensus_calibration_4_groups} show that calibrated $\prob(r_i|\Data)$ on the test set, binned across 20 equal sized bins, lie close to the perfectly calibrated line. 

\begin{figure*}[t!]
    \centering
        \begin{subfigure}[t]{0.3\linewidth}
            \centering
            \includegraphics[width=1.0\textwidth]{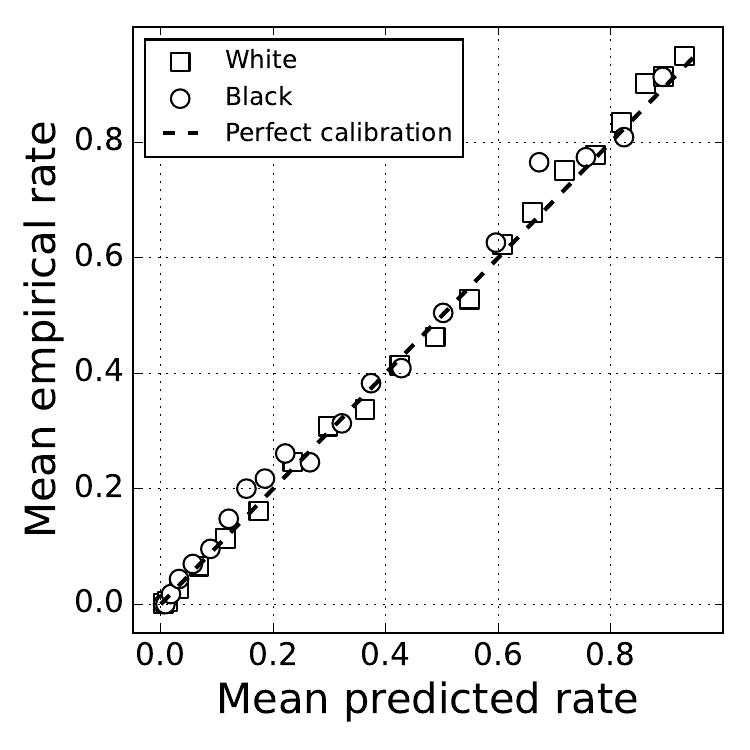}
            \phantomcaption{(a) New York}
            \label{fig:USCensus_calibration_NY}
        \end{subfigure}
        \begin{subfigure}[t]{0.3\linewidth}
            \centering
            \includegraphics[width=1.0\textwidth]{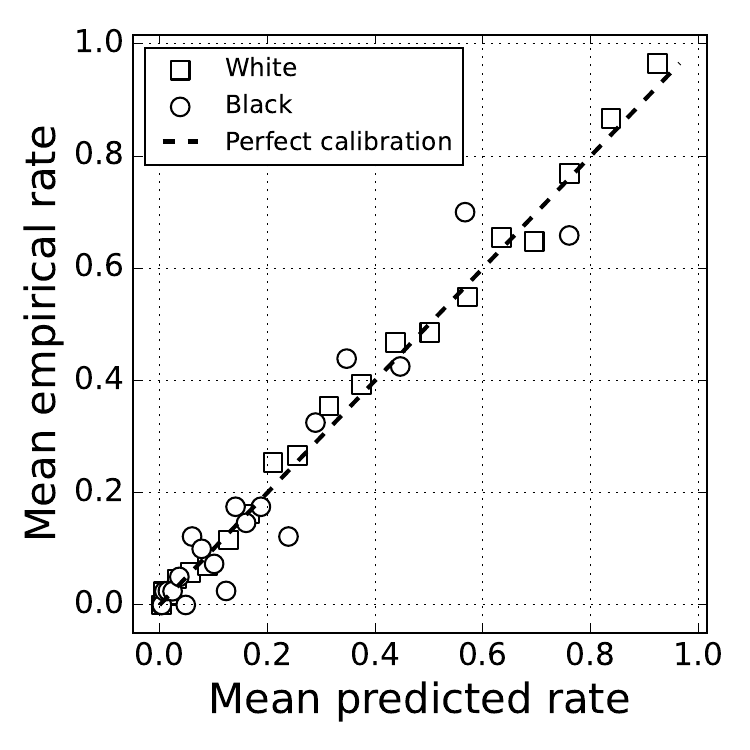}
          \phantomcaption{(b) Alabama}
          \label{fig:USCensus_calibration_AL}
       \end{subfigure}
       \begin{subfigure}[t]{0.3\linewidth}
            \centering
            \includegraphics[width=1.0\textwidth]{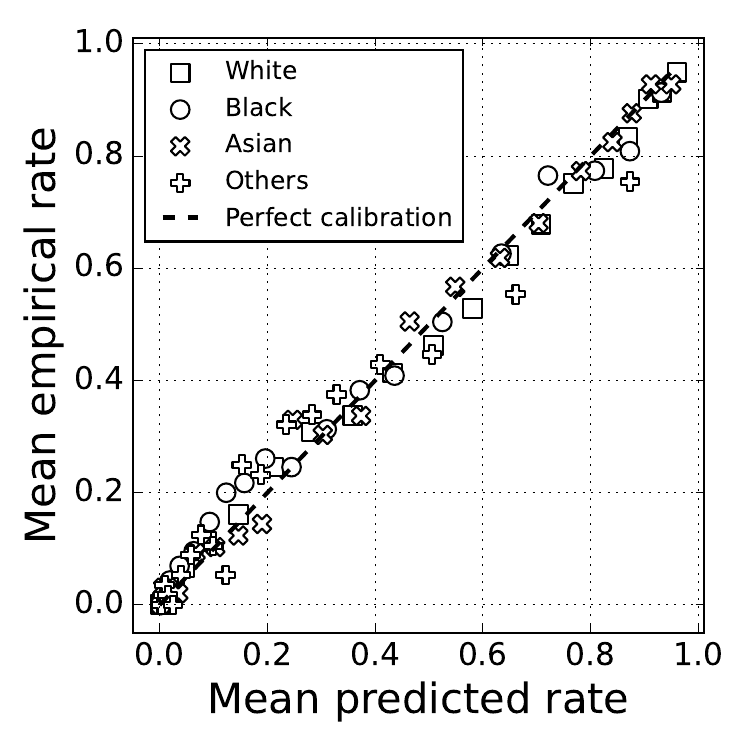}
          \phantomcaption{(c) New York}
          \label{fig:USCensus_calibration_4_groups}
       \end{subfigure}
       \caption{\normalfont Calibration plot for $\prob(r_i|\Data)$ for the state of New York and Alabama}
\end{figure*}

In Figure~\ref{fig:USCensus_figs}, estimates $\relCand(A|\sigma_k), \relCand(A), \relCand(B|\sigma_k),$ and $\relCand(B)$ are computed with the true relevance labels from the test set for computing EOR criterion, costs, and nDCG. Figure~\ref{fig:USCensus_no_labels}, shows EOR criterion and costs with $\relCand(A|\sigma_k), \relCand(A), \relCand(B|\sigma_k), \relCand(B)$ estimated from the calibrated $\prob(r_i|\Data)$. Note that the evaluation on true relevance labels in Figure~\ref{fig:USCensus_figs}, though noisier is qualitatively similar to the evaluation using the calibrated $\prob(r_i|\Data)$ in Figure~\ref{fig:USCensus_no_labels}. 
Additional experiment for two groups with true relevance labels for New York in Figure~\ref{fig:USCensus_NY_2_groups} (top) and with calibrated $\prob(r_i|\Data)$ in Figure~\ref{fig:USCensus_NY_2_groups} (bottom) further confirm our findings, that $\EOR$ is the only ranking policy that consistently achieves $\delta(\sigma_k)$ close to zero at every prefix $k$ with near optimal total cost to the principal.

Note the overlapping of $\RA$ and $\PRP$ in Figure~\ref{fig:USCensus_no_labels} and \ref{fig:USCensus_NY_2_groups}. This is expected because $\RA$ swaps the candidates in PRP ranking to satisfy proportional exposure as described in Appendix~\ref{baselines}. Since the amortized exposure between groups is already satisfied with the PRP ranking for this dataset, $\RA$ and $\PRP$ compute similar rankings.

\begin{figure*}[t!]
      \begin{subfigure}[t]{0.98\linewidth}
    \centering
    \includegraphics[width=1.0\textwidth]{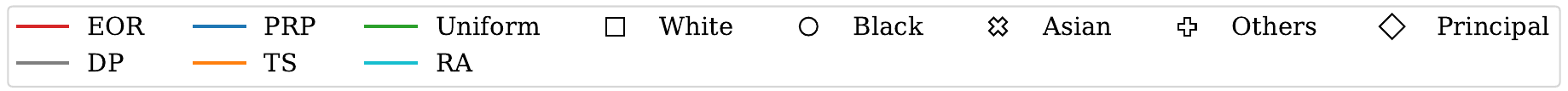}
    \vspace{-4mm}
  \end{subfigure}
        \includegraphics[width=0.98\textwidth]{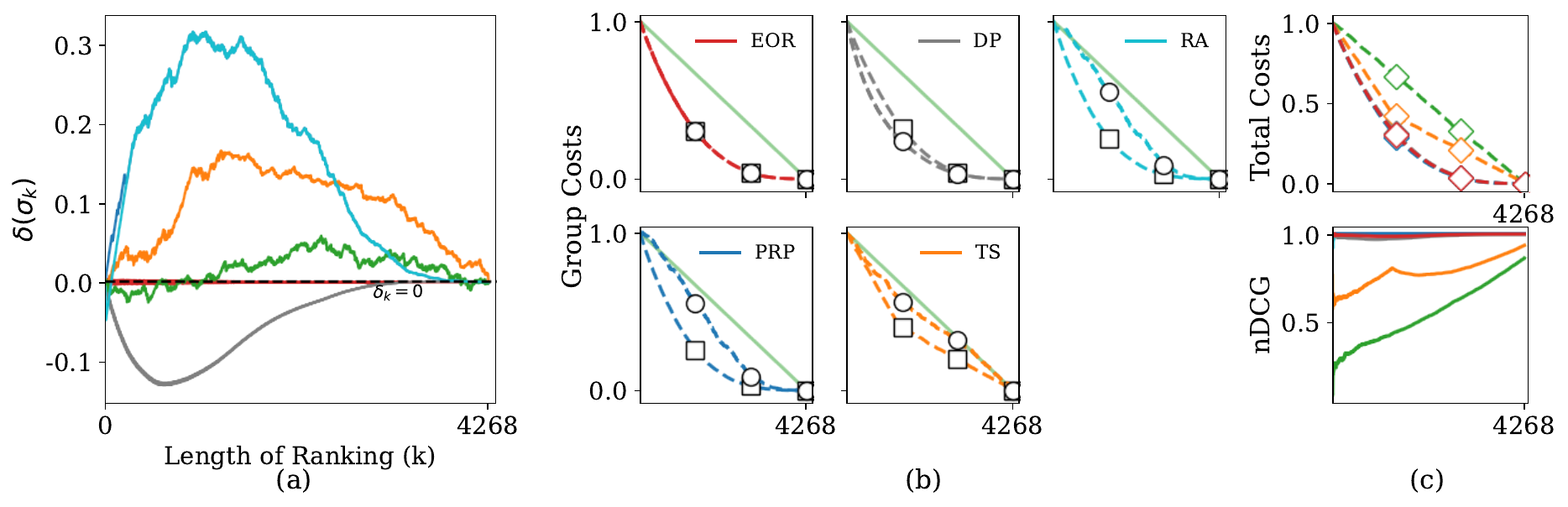}
    \label{fig:USCensus_no_labels_AL}
    \centering
    \includegraphics[width=0.98\textwidth]{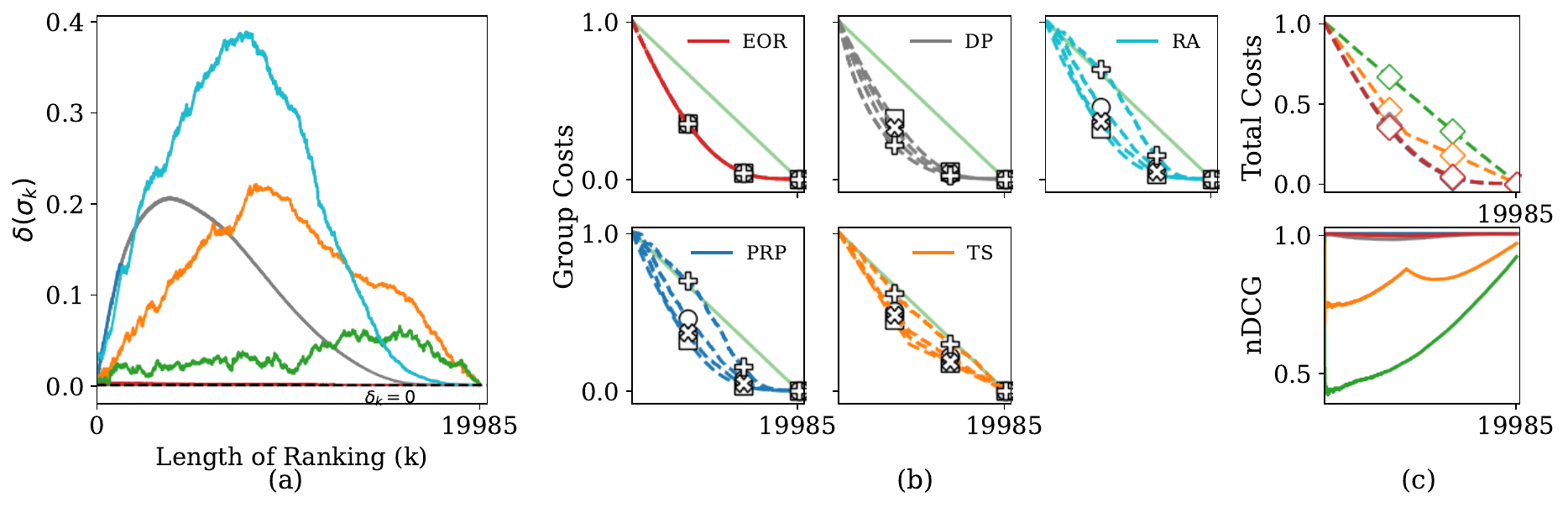}
    \label{fig:USCensus_no_labels_4_groups}
    \vspace*{-4mm}
  \caption{\textbf{Top:} \normalfont EOR criterion $\delta(\sigma_k)$ and Costs computed using calibrated $\prob(r_i|\Data)$ for two groups for the state of Alabama. \textbf{Bottom:} \normalfont EOR criterion $\delta(\sigma_k)$ and Costs computed using calibrated $\prob(r_i|\Data)$ for four groups for the state of New York.}
    \label{fig:USCensus_no_labels}
\vspace*{-2mm}
\end{figure*}

\begin{figure*}[t!]
      \begin{subfigure}[t]{0.98\linewidth}
    \centering
    \includegraphics[width=1.0\textwidth]
    {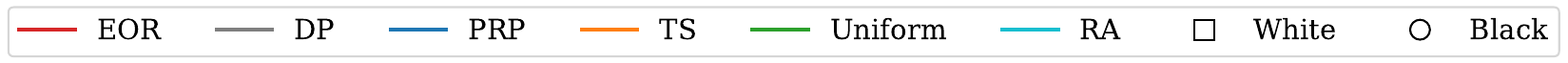}
    \vspace{-4mm}
  \end{subfigure}
    \includegraphics[width=0.98\textwidth]{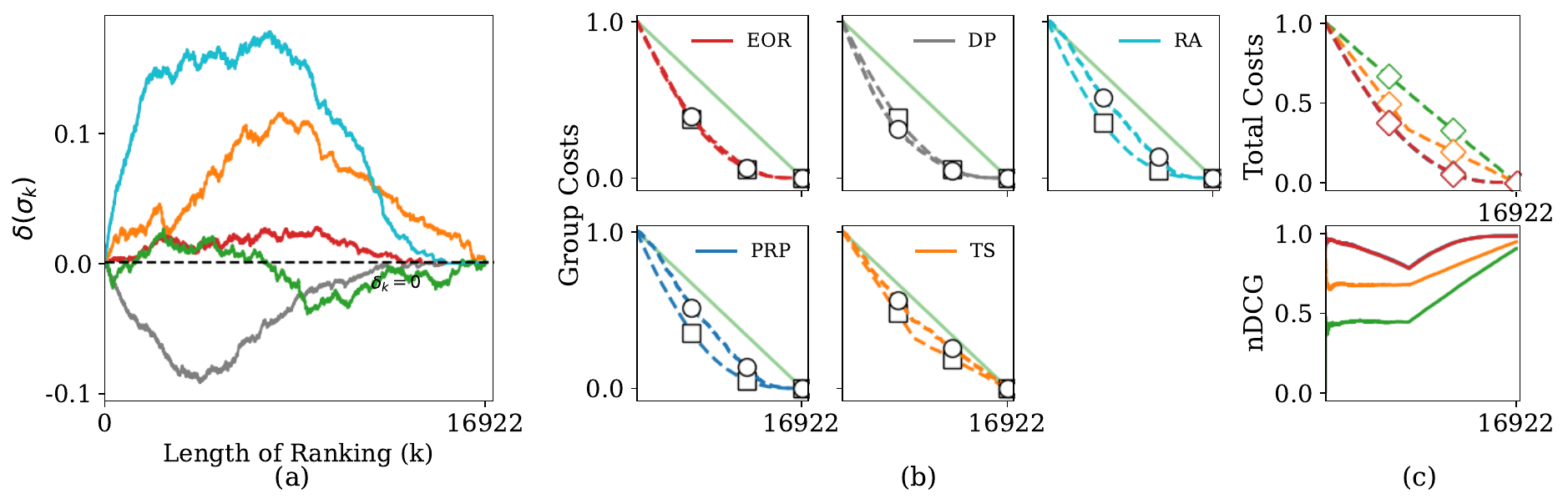}
    \label{fig:USCensus_labels_NY}
    \includegraphics[width=0.98\textwidth]{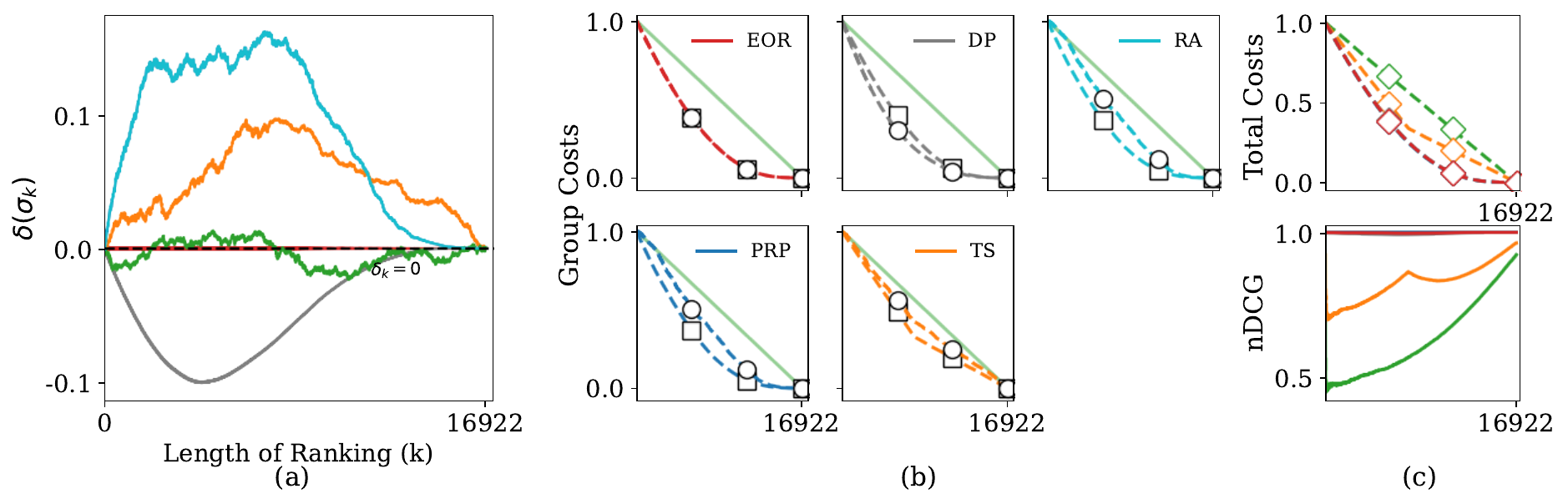}
    \label{fig:USCensus_no_labels_NY}
    \vspace*{-4mm}
    \caption{\textbf{Top:} \normalfont EOR criterion $\delta(\sigma_k)$ and Costs computed using true relevance labels from the test subset.  \textbf{Bottom:} \normalfont EOR criterion $\delta(\sigma_k)$ and Costs computed using calibrated $\prob(r_i|\Data)$ for the state of New York.}\label{fig:USCensus_NY_2_groups}
\vspace*{-2mm}
\end{figure*}

\subsection{Amazon shopping queries dataset} \label{amazon_details}
Amazon's shopping queries \citep{reddy2022shopping} consists of a large scale query-product pair dataset with baseline models for tasks related to predicting the relevance of items given a search query. Each query-product pair has an associated human annotated label of an exact, substitute, complement, or irrelevant label.

For our analysis, we focus on their task 1 of query-product ranking \footnote{\href{https://github.com/amazon-science/esci-data}{https://github.com/amazon-science/esci-data}} to sort the list of products in the decreasing order of relevance for every query. We use the publicly available baseline model for this task, consisting of Cross Encoders for the MS Marco dataset \citep{reimers-2019-sentence-bert}. This pretrained model encodes the query and product titles and is fine-tuned on the US part of the small version of training dataset. We use the default hyperparameters for the Cross Encoder as maximum length=512, activation function=identity,
and number of labels=1 (binary task). Similarly, for training following the default configuration, all exact labels are mapped to $1.0$, while the rest (substitute, complement, and irrelevant) are mapped to $0.0$. Default hyperparameter configuration includes MSE loss function, evaluation steps=5000,
warm-up steps=5000, learning rate=7e-6, training epochs=1, and number of development queries=400. Inference from the trained model provides relevance scores and we apply a sigmoid function to transform these scores to probabilities of relevance $\prob(r_i|\Data)$.

To evaluate the calibration of predicted $\prob(r_i|\Data)$, we use the test split of the dataset \citep{reddy2022shopping} for the large version containing 22,458. We filtered these queries so that they contain at least three products owned by one of the 158 brands owned by Amazon (we discuss in the next paragraph the source of identifying these Amazon-owned brands) and at least three products owned by brands other than Amazon. These result in 395 queries, out of which half are used for calibration with a Platt-scaling calibrator while the remaining half is used to evaluate the calibration curve for the test dataset. $\prob(r_i|\Data)$ of the query-product pairs for the remainder half of the test dataset after calibration is binned across 20 equal sized bins as shown in Figure~\ref{fig:calibration_amazon} and lies close to the perfectly calibrated line.

We further augmented this with another dataset \footnote{\href{https://github.com/the-markup/investigation-amazon-brands/tree/master}{https://github.com/the-markup/investigation-amazon-brands}} collected from the Markup report \citep{Yin2021}, which investigated Amazon's placement of its own brand products as compared to other brands based on star ratings, reviews etc. The authors for the Markup report identified 158 brand products that are trademarked by Amazon. We use these 158 brands to form the Amazon owned group. Products belonging to any other brand form the non-Amazon group.
Importantly, this dataset contains logged rankings from Amazon's website with 4566 queries for popularly searched query terms. We filtered these such that each query contains exactly 60 products and at least three of them are owned by Amazon, resulting in 1485 search queries. 

Next, we obtain relevance probabilities $\prob(r_i|\Data)$ from Amazon's pretrained baseline model described above and evaluate $\delta(\sigma_k)$ both for the logged ranking as well as our computed EOR ranking. Figure~\ref{fig:markup_eo} shows that our EOR ranking is closer to $\delta(\sigma_k)=0$ as compared to logged rankings on Amazon's platform. We note that this analysis is subject to confounding due to the use of features other than product titles that may be used in practice for logged rankings. However, the analysis does demonstrate how the EOR criterion can be used for auditing, if the auditor is given access to the production ranking model to avoid confounding.

\end{document}